\documentclass{article}

    \PassOptionsToPackage{numbers, compress}{natbib}

 \usepackage[main, final]{neurips_2026}

\usepackage[utf8]{inputenc} 
\usepackage[T1]{fontenc}    
\usepackage{hyperref}       
\usepackage{url}            
\usepackage{booktabs}       
\usepackage{amsfonts}       
\usepackage{nicefrac}       
\usepackage{microtype}      
\usepackage{xcolor}         
\usepackage{wrapfig}
\usepackage{amssymb}
\usepackage{amsmath}
\usepackage[dvipsnames,svgnames]{xcolor}
\definecolor{darkgreen}{RGB}{0,100,0}
\usepackage{natbib}
\usepackage{amsmath,amsthm}
\usepackage{graphicx}
\usepackage{multicol}
\usepackage{algorithm}
\usepackage{algorithmic}
\usepackage{booktabs}
\usepackage{adjustbox}
\usepackage{graphicx}  
\usepackage{float}  
\usepackage{subfigure}  
\usepackage{algorithm}
\usepackage{algorithmic}
\usepackage{multirow}
\usepackage{enumerate}
\usepackage{amsthm}

\newtheorem{theorem}{Theorem}
\newtheorem{definition}{Definition}
\newtheorem{proposition}{Proposition}

\usepackage{svg}
\usepackage{wrapfig} 
\usepackage[utf8]{inputenc}
\usepackage{tikz}
\usetikzlibrary{arrows.meta, positioning, calc, shapes.geometric}

\usepackage{amsmath}
\usepackage{graphicx}

\title{Value-Rectified Distillation for Flow-based Offline Reinforcement Learning}

\author{%
  Ke Jiang$^{1,2,3}$, 
  Wen Jiang$^{1,2}$, 
  Yoshinobu Kawahara$^{3,4}$, 
  Xiaoyang Tan$^{1,2}$ \\
  $^1$College of Computer Science and Technology, Nanjing University of Aeronautics and Astronautics \\
  $^2$MIIT Key Laboratory of Pattern Analysis and Machine Intelligence \\
  $^3$Graduate School of Information Science and Technology, The University of Osaka \\
  $^4$Center for Advanced Intelligence Project, RIKEN \\
  \texttt{\{ke\_jiang, darren.jum\}@nuaa.edu.cn}, \\
  \texttt{kawahara@ist.osaka-u.ac.jp}, x.tan@nuaa.edu.cn
}

\begin{document}

\maketitle

\begin{abstract}

  We aim to address the problem of \textit{Out-of-Distribution (OOD) Mode Averaging}, a critical problem that emerges when distilling Flow Matching (FM) based behavior policies in offline Reinforcement Learning (RL): When learning over multi-modal datasets, standard distillation inadvertently forces the policy to interpolate across different behavioral modes, causing the generated actions to collapse into invalid, OOD regions. To resolve this, we propose \textbf{Value-Rectified Distillation (VRD)}, a novel framework that reformulates offline policy distillation as a value-guided generative trajectory alignment problem. By employing a dynamic coupling mechanism, VRD actively rectifies flow dynamics to explicitly decouple distinct behavioral modes, cleanly isolating high-value actions. Specifically, we instantiate VRD through two complementary algorithms: \textbf{VRD-Q}, which utilizes value-quantile behavior distillation to statistically filter out low-value behaviors, and \textbf{VRD-R}, which reorganizes the latent space to geometrically isolate high-value generative trajectories from the FM-based behavior policy. Extensive empirical evaluations demonstrate that VRD improves existing generative baselines across diverse continuous control and planning tasks on the D4RL, OGBench, and Minari benchmarks, especially over those multi-modal, non-expert datasets.

\end{abstract}

\section{Introduction}\label{sec:intro}

Offline Reinforcement Learning (RL) aims to derive a high-capacity policy from a fixed, pre-collected dataset without further environment interaction~\citep{kumar2020conservative,kumar2019stabilizing}, which is particularly vital for the applications where online exploration is costly or hazardous. Recently, generative models, particularly Flow Matching (FM)~\citep{lipman2022flow}, have emerged as a powerful alternative to traditional Diffusion-based methods~\citep{hansen2023idql,chen2023score,ding2023consistency}, where FM-based policies, such as Flow Q-Learning (FQL)~\citep{park2025flow}, offer significant advantages in modeling complex, multi-modal behavior distributions while circumventing the training instabilities typically associated with backpropagating through ODE solvers.

\begin{figure}[h]
    \centering
    \includegraphics[width=1\linewidth]{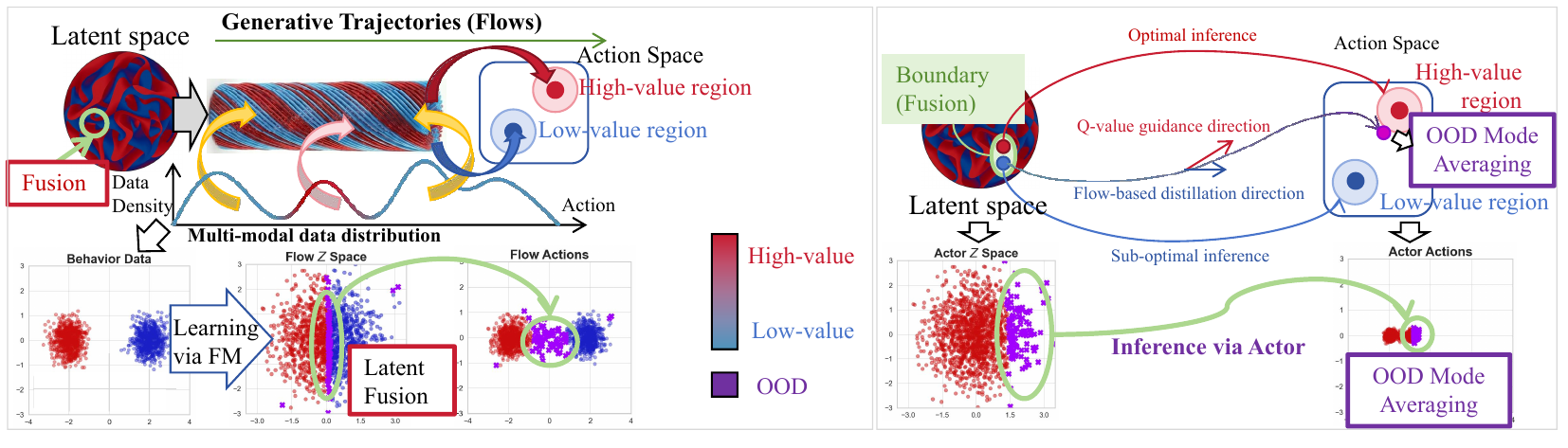}
    \caption{Problem of \textit{Latent Fusion} (left) and the OOD Mode Averaging problem caused by FQL-style behavior distillation (right).}
    \label{fig:problem}
\end{figure}

Despite their empirical promise, we identify a fundamental pathology in how current flow-based RL frameworks distill high-value actions from multi-modal behavior policies. Specifically, when learning from multi-modal data, our analysis reveals that Flow Matching networks inherently suffer from \textit{Latent Fusion}: the topological entanglement between the generative trajectories of optimal and sub-optimal behavioral modes, caused by the vector field's training dynamics, as is shown in Figure \ref{fig:problem} (left). In previous works, like FQL~\citep{park2025flow}, it first pre-trains a behavior flow policy to capture the full multi-modal behavior distribution~\citep{chi2025diffusion} from the dataset, and subsequently applies Q-value guidance to distill the optimal policy. However, because such distillation operates on a continuous vector field where trajectories are already topologically entangled by the MLE pre-training, the Q-guided objective, as is shown in Figure \ref{fig:problem} (right), inadvertently forces the sampled actions to linearly interpolate across distinct modes during inference, which triggers the \textit{OOD Mode Averaging} disaster, where it is risky for the generated actions to collapse into low-density regions of the state-action manifold, ultimately causing performance degradation. Although recent approaches like Guided Flow Policy (GFP)~\citep{tiofack2025guided} incorporate reward information to amplify optimal behavior modes in FM-based behavior policy, their FQL-style distillation mechanism still suffers from \textit{OOD Mode Averaging}.

To address this issue, our core insight is that during the distillation phase, policy refinement must go beyond simply cloning from FM-based behavior policy; rather, it inherently requires rectifying the flow dynamics to explicitly \textbf{decouple distinct behavioral modes} along the data manifold. Therefore, we introduce \textbf{Value-Rectified Distillation (VRD)}, a novel framework that reformulates offline policy distillation as a value-guided trajectory alignment problem, of which the core lies in a dynamic coupling mechanism that precisely modulates the flow dynamics to explicitly decouple and exclude sub-optimal behavioral modes, ensuring that the generated actions are confined to the optimal action manifold during inference.

To instantiate this unified framework, we derive two distinct algorithms from complementary mathematical perspectives: \textbf{VRD-Q (Statistical Rectification)}: this variant performs in-place vector field alignment by employing asymmetric weight modulation based on value quantiles, which statistically filters sub-optimal regions without altering the underlying flow topology. \textbf{VRD-R (Geometric Rectification)}: this variant actively reshapes the flow's topology, which constructs a value-gravity field that transports the generative trajectories toward a highly compact optimal support set, geometrically isolating them from sub-optimal modes.




In summary, our primary contributions include: \textbf{1) Tracing OOD Mode Averaging:} We theoretically trace the root cause of the \textit{OOD Mode Averaging} problem in FQL-style distillation mechanism, revealing how its distillation objective interacts with \textit{latent fusion} to generate invalid OOD actions on multi-modal data. \textbf{2) The VRD Framework:} We propose Value-Rectified Distillation (VRD) with two implementations VRD-Q and VRD-R, reformulating offline policy distillation as a value-guided trajectory alignment problem to safely extract high-value policies without mode averaging. \textbf{3) Empirical Validation:} Extensive evaluations demonstrate that VRD-Q and VRD-R effectively mitigate this OOD issue, outperforming or performing competitively with existing generative baselines across various continuous control and planning tasks.

\section{Preliminaries and motivation}\label{sec:preliminaries}

\subsection{Offline RL and Flow Q-Learning}

Suppose a Markov Decision Process $\mathcal{M} = (\mathcal{S}, \mathcal{A}, r, \rho, p)$, then the goal of offline RL is to learn a policy $\pi$ that maximizes the expected cumulative discounted return using a static dataset $\mathcal{D}$ collected by a behavioral policy. Previous work adopted the FQL framework~\citep{park2025flow} to avoid the instability of backpropagating through ODE solvers. FQL consists of three components: 

\textbf{Critic:} A value function $Q_\phi(s, a)$ trained to minimize the Bellman error. 

\textbf{BC Flow Policy:} A flow-matching policy $\mu_\theta(s, z)$ defined by a vector field $v_\theta$, trained purely on Behavioral Cloning (BC) to capture the dataset distribution:
\begin{align}
    \mathcal{L}_{\text{Flow}}(\theta) = \mathbb{E}_{s, a \sim \mathcal{D}, t \sim \mathcal{U}(0,1), x^0 \sim \mathcal{N}(0, I_d)} [ \|v_\theta(t, s, x^t) - (a - x^0)\|_2^2 ] \label{eq:flow_loss}
\end{align}
where $x^t = (1-t)x^0 + ta$.
    
\textbf{One-Step Actor Policy:} An expressive feed-forward network $\pi(s, z): \mathcal{S} \times \mathbb{R}^d \to \mathcal{A}$ that maps latent noise $z$ directly to actions.

In standard FQL, the actor $\pi$ is trained to maximize $Q$-values while being regularized via distillation from the frozen flow policy $\mu_\theta$. The standard objective uses an unstructured Gaussian prior $z \sim \mathcal{N}(0, I_d)$ for this coupling:
\begin{align}
    \mathcal{L}_{\pi} = \mathbb{E}_{s \sim \mathcal{D}, z \sim \mathcal{N}(0, I_d)} [ -\lambda \cdot Q_\phi(s, \pi(s, z)) + \|\pi(s, z) - \mu_\theta(s, z)\|_2^2 ]\label{eq:fql_actor}
\end{align}
, while the Q-networks $Q_\phi$ are learned by minimizing a standard double Temporal Difference (TD) objective:
\begin{align}
    \mathcal{L}(\phi_i) = \mathbb{E}_{(s,a,r,s') \sim \mathcal{D}} \left[ \left( Q_{\phi_i}(s,a) - r - \gamma \min_{j=1,2} Q_{\bar{\phi}, j}(s', \pi(s',z)) \right)^2 \right], \ z\sim\mathcal{N}(0,I)
\end{align}

This implies that the latent space of the actor is isotropic; the position of $z$ carries no semantic information regarding the quality of the action.

\subsection{Latent fusion and OOD mode-averaging problems of FQL in multi-modal data}

According to Flow Q-Learning, we can fit the action distribution of the behavior policy $\pi_\beta(a|s)$, by the flow model $\mu_\theta(s,z)$, where $z\sim\mathcal{N}(0,I)$. Then we define the support action set given $s$, as $\mathcal{A}(s)=\{a|\pi_\beta(a|s)>0\}$, and OOD action set $\bar{\mathcal{A}}=\{a|\pi_\beta(a|s)=0\}$. Furthermore, given a Q-function $Q(s,a)$ as the evaluator, we can define the optimal and suboptimal action sets as,
\begin{align}
    \mathcal{A}^*_Q(s)&=\{a|a\in\mathcal{A}(s), Q(s,a)\geq\max_{a'}Q(s,a') - \epsilon_{opt}\}\label{eq:optimalaction}\\
    \mathcal{A}^-_Q(s)&=\mathcal{A}(s)/\mathcal{A}^*_Q(s)
\end{align}

In our work, we focus on the setting of \textbf{multi-modal data}, which has raised attention in the field~\cite{chen2024diffusion, hong2023beyond} recently. Specifically, in our work, we focus on the \textbf{isolated} multi-modal data as,
\begin{align}
    \inf_{a^*\in\mathcal{A}^*_Q(s),a^-\in\mathcal{A}^-_Q(s)}\|a^* - a^-\|_2 = D>0\label{eq:dist_D}
\end{align}
which means that there exists a gap between the optimal and sub-optimal data modes. Under such kind of data, the first problem we encounter in flow-based offline RL is the {\it  Latent Fusion}  for the flow policy $\mu_\theta$ learned by $\mathcal{L}_{\text{Flow}}(\theta)$ in Eq.(\ref{eq:flow_loss}):



\begin{proposition}[Latent Entanglement of OOD Generation]
\label{prop:latentfusion}
Suppose the latent space $\mathcal{Z}$ (support of $z \sim \mathcal{N}(0, I)$) is connected, and the flow mapping $\mu_\theta(s, z)$ is $L$-Lipschitz continuous w.r.t. $z$. Let $\mathcal{Z}^*_Q(s)$, $\mathcal{Z}^-_Q(s)$, and $\mathcal{Z}_{OOD}(s)$ denote the latent preimages of $\mathcal{A}^*_Q(s)$, $\mathcal{A}^-_Q(s)$, and $\bar{\mathcal{A}}(s)$, respectively. If the model expresses both optimal and sub-optimal modes ($\mathcal{Z}^*_Q(s) \neq \emptyset, \mathcal{Z}^-_Q(s) \neq \emptyset$), the following holds:
\textbf{(1)} The optimal latent boundary is non-empty ($\partial \mathcal{Z}^*_Q(s) \neq \emptyset$);
\textbf{(2)} The OOD latent set is strictly non-empty ($\mathcal{Z}_{OOD}(s) \neq \emptyset$);
\textbf{(3)} Any $\epsilon$-neighborhood of a boundary point $z_{bd} \in \partial \mathcal{Z}^*_Q(s)$ intersects non-optimal regions: $\forall \epsilon \!>\! 0, \mathbb{B}_\epsilon(z_{bd}) \cap \big(\mathcal{Z}^-_Q(s) \cup \mathcal{Z}_{OOD}(s)\big) \neq \emptyset$.
\end{proposition}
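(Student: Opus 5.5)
The plan is a connectedness argument in latent space combined with the gap $D>0$ from Eq.~(\ref{eq:dist_D}). Fix $s$ and write $\mu(z)=\mu_\theta(s,z)$. Since $\mathcal{A}(s)$ and $\bar{\mathcal{A}}(s)$ partition the action space, and $\mathcal{A}(s)=\mathcal{A}^*_Q(s)\sqcup\mathcal{A}^-_Q(s)$, the three preimages $\mathcal{Z}^*_Q(s)$, $\mathcal{Z}^-_Q(s)$ and $\mathcal{Z}_{OOD}(s)$ partition $\mathcal{Z}$. We take $\mathcal{Z}=\mathbb{R}^d$, the support of the Gaussian, which is connected and path-connected.

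\textbf{Part (1).} Suppose $\partial\mathcal{Z}^*_Q(s)=\emptyset$. Then $\mathcal{Z}^*_Q(s)$ equals both its closure and its interior, so it is clopen. By hypothesis it is nonempty, and it is proper because $\mathcal{Z}^-_Q(s)\neq\emptyset$. A nonempty proper clopen subset contradicts the connectedness of $\mathcal{Z}$. This step uses only the partition and connectedness, not continuity of $\mu$.

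\textbf{Part (2).} Pick $z_1\in\mathcal{Z}^*_Q(s)$ and $z_2\in\mathcal{Z}^-_Q(s)$, and follow the segment $\gamma(\tau)=(1-\tau)z_1+\tau z_2$. The map $f(\tau)=\operatorname{dist}\big(\mu(\gamma(\tau)),\mathcal{A}^*_Q(s)\big)$ is continuous, because $\mu$ is $L$-Lipschitz and distance to a set is 1-Lipschitz. We have $f(0)=0$. Every point of $\mathcal{A}^-_Q(s)$ lies at distance at least $D$ from $\mathcal{A}^*_Q(s)$, so $f(1)\ge D$. By the intermediate value theorem there is $\tau_0$ with $f(\tau_0)=D/2$. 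The action $a=\mu(\gamma(\tau_0))$ then cannot lie in $\mathcal{A}^*_Q(s)$, since its distance to that set is positive; note we do not need $\mathcal{A}^*_Q(s)$ to be closed for this. It also cannot lie in $\mathcal{A}^-_Q(s)$, whose distance to $\mathcal{A}^*_Q(s)$ is at least $D$. Hence $a\in\bar{\mathcal{A}}(s)$, so $\gamma(\tau_0)\in\mathcal{Z}_{OOD}(s)$, which is therefore nonempty. Lipschitzness also gives a quantitative bonus: $|\tau_0|\,\|z_2-z_1\|\ge D/(2L)$, so the OOD crossing has latent width at least of order $D/L$.

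\textbf{Part (3).} Let $z_{bd}\in\partial\mathcal{Z}^*_Q(s)$. By the definition of the topological boundary, every ball $\mathbb{B}_\epsilon(z_{bd})$ meets the complement of $\mathcal{Z}^*_Q(s)$. By the partition, that complement is exactly $\mathcal{Z}^-_Q(s)\cup\mathcal{Z}_{OOD}(s)$, which gives the claim.

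I expect the main obstacle to be the interpretational subtlety in Part (2). I must make sure the gap $D$ is applied to distances to the \emph{set} $\mathcal{A}^*_Q(s)$, not to a single point, and that membership of the intermediate action is decided by the partition rather than by any closedness assumption. A secondary point is stating explicitly that $\mathcal{Z}=\mathbb{R}^d$; if $\mathcal{Z}$ is only assumed connected rather than path-connected, I would replace the segment argument with the clopen argument applied to the two disjoint relatively-open pieces.
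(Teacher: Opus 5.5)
Your proof is correct and uses the same strategy as the paper. Both rest on the connectedness of $\mathcal{Z}=\mathbb{R}^d$, the three-way preimage partition, and the gap $D$ separating $\mathcal{A}^*_Q(s)$ from $\mathcal{A}^-_Q(s)$, and part (3) is argued identically. The execution differs slightly in (1) and (2), in both cases for the better. Your clopen argument for (1) avoids the paper's unjustified claim that the path meets the \emph{interior} of $\mathcal{Z}^*_Q(s)$. Your intermediate value theorem on $\tau\mapsto\mathrm{dist}\big(\mu(\gamma(\tau)),\mathcal{A}^*_Q(s)\big)$ in (2) makes explicit the separation of $\mathcal{A}^*_Q(s)\cup\mathcal{A}^-_Q(s)$ that the paper only invokes abstractly via Munkres' Lemma 23.2.
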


 \textit{Proof is shown in Appendix} \ref{appendix:proofofpropositionlatentfusion}. Proposition \ref{prop:latentfusion} formalizes the \textit{Latent Fusion} dilemma. Because $\mu_\theta$ is continuous and $\mathcal{Z}$ is connected, the model cannot safely isolate optimal latents. Property (2) proves that to bridge the physical gap $D>0$, any continuous curve connecting the modes must structurally intersect the OOD space (a topological necessity, though without quantifying probability mass). Furthermore, Property (3) shows $\mathcal{Z}^*_Q(s)$ is inextricably entangled with sub-optimal/OOD regions at its boundary.

While Proposition \ref{prop:latentfusion} leverages the potent multi-modal modeling capabilities of Flow matching for targeted policy extraction, the extreme topological entanglement within the latent space, as characterized by the {\it  Latent Fusion} in Proposition \ref{prop:latentfusion}, renders such distillation and mapping via continuous, smooth networks susceptible to mode-averaging, as shown in Theorem \ref{thm:ood_avg}. This phenomenon forces the policy to generate interpolated, OOD actions ($\overline{a} \notin \mathcal{A}(s)$) that collapse into off-manifold regions.

\begin{theorem}[OOD Mode Averaging in FQL]
\label{thm:ood_avg} Suppose the Q-function satisfies $\mu$-strong concavity in the region bridging the modes and is $L$-Lipschitz smooth.
Consider the FQL-style distillation that minimizes the regularized objective in Eq.(\ref{eq:fql_actor}). 
Under multi-modal data with gap $D > 0$ as Eq.(\ref{eq:dist_D}) and any non-zero value regularization penalty $\lambda>0$,
FQL generates OOD actions with a minimum expected error bounded by $D^2/4$.
\end{theorem}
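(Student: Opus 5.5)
The plan is to reduce the claim to a pointwise analysis of the FQL actor objective near a latent on the boundary of the optimal region. By Proposition~\ref{prop:latentfusion}(3), every $\epsilon$-ball around a boundary latent $z_{bd}\in\partial\mathcal{Z}^*_Q(s)$ contains latents whose flow outputs are either sub-optimal or OOD. The actor $\pi(s,\cdot)$ is continuous (smooth network), so across such a neighborhood it cannot jump between the separated sets $\mathcal{A}^*_Q(s)$ and $\mathcal{A}^-_Q(s)$. Concretely, I will pick a pair of latents $z_1,z_2$ arbitrarily close to each other with $\mu_\theta(s,z_1)=a^*\in\mathcal{A}^*_Q(s)$ and $\mu_\theta(s,z_2)=a^-\in\mathcal{A}^-_Q(s)$. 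Such a pair exists: connect the two preimages by a path, which Proposition~\ref{prop:latentfusion}(2) allows to pass through $\mathcal{Z}_{OOD}$, and shrink along it. Continuity of $\pi$ then forces $\pi(s,z_1)\approx\pi(s,z_2)=:\bar a$.

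Next I solve the local problem. For a single shared output $\bar a$ serving both targets, the objective in Eq.~(\ref{eq:fql_actor}) restricted to these two latents (with comparable mass $p_1,p_2$) is
\begin{align}
J(\bar a)= -\lambda\bigl(p_1+p_2\bigr)Q(s,\bar a) + p_1\|\bar a-a^*\|_2^2+p_2\|\bar a-a^-\|_2^2 .
\end{align}
$Q$ is $\mu$-strongly concave on the bridging region and $L$-smooth, so $J$ is strongly convex. Its stationary point is a Q-shifted barycenter: $\bar a = (p_1a^*+p_2a^-)/(p_1+p_2) + \tfrac{\lambda}{2}\nabla Q(s,\bar a)$, which lies in the bridging region for bounded $\lambda L$. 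The distillation error then obeys
\begin{align}
p_1\|\bar a-a^*\|^2+p_2\|\bar a-a^-\|^2\ \ge\ \tfrac{p_1p_2}{p_1+p_2}\|a^*-a^-\|^2\ \ge\ \tfrac{p_1p_2}{p_1+p_2}D^2 ,
\end{align}
using Eq.~(\ref{eq:dist_D}). In the symmetric case this is at least $D^2/4$ per unit mass, since $\max(\|\bar a-a^*\|,\|\bar a-a^-\|)\ge D/2$ by the triangle inequality. Because the gap condition means $\bar a$ cannot lie in either mode without being at distance $\ge D$ from the other, an interior minimizer falls strictly in the gap, i.e.\ $\bar a\in\bar{\mathcal{A}}(s)$. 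Strong concavity is used only to make the minimizer unique and interior; any $\lambda>0$ adds a finite drift of order $\lambda L/\mu$ that cannot close a gap of size $D$ independently of $\lambda$.

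The main obstacle is the passage from a single continuous actor over a neighborhood to the ``shared output'' two-point problem. Continuity only gives $\pi(s,z_1)\approx\pi(s,z_2)$ in the limit of shrinking neighborhoods, which carry vanishing probability mass. I would handle this by working with the expected error conditioned on the boundary region and arguing by a limiting/averaging step. I would also state explicitly the modeling assumption used in this style of argument: the actor is Lipschitz with constant much smaller than $D/\delta$, where $\delta$ is the latent width of the boundary layer. Then the same reasoning holds on a set of positive measure, and the conditional expected squared error there is bounded below by $D^2/4$.
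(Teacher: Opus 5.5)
\textbf{There is a genuine gap, and it is at your first step.} The flow map $\mu_\theta(s,\cdot)$ is $L$-Lipschitz and the modes are separated by $D$. So any $z_1$ with $\mu_\theta(s,z_1)\in\mathcal{A}^*_Q(s)$ and any $z_2$ with $\mu_\theta(s,z_2)\in\mathcal{A}^-_Q(s)$ satisfy $\|z_1-z_2\|_2\ge D/L$. The pair of arbitrarily close latents you want to extract by ``shrinking along the path'' therefore does not exist. In fact, for $\epsilon\le D/(2L)$ the ball in Proposition~\ref{prop:latentfusion}(3) meets $\mathcal{Z}_{OOD}(s)$ but not $\mathcal{Z}^-_Q(s)$.

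Continuity of $\pi$ then forces nothing. An expressive actor can follow $\mu_\theta$ across the OOD band, and the objective in Eq.~(\ref{eq:fql_actor}) decouples pointwise in $z$, so no shared output is imposed on different latents. The assumption you add at the end (actor Lipschitz constant $\ll D/\delta$) is exactly what would create such averaging. It is, however, a new hypothesis that the theorem does not contain, and it changes the mechanism.

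The idea you are missing is that the interpolation happens within a single latent, which is how the paper argues:
\begin{itemize}
\item Take $z\in\mathcal{Z}^-_Q(s)$ and set $a^-_{bc}=\mu_\theta(s,z)$.
\item Write $-\nabla_a^2Q=P$ with $\mu I\preceq P\preceq LI$, and let $a^*$ be the $Q$-maximizer.
\item The first-order condition of $-Q(s,a)+\alpha\|a-a^-_{bc}\|_2^2$ gives $a_{FQL}=M(\alpha)a^*+(I-M(\alpha))a^-_{bc}$, with $M(\alpha)=(2\alpha I+P)^{-1}P$.
\item The continuous curve $\alpha\mapsto a_{FQL}(\alpha)$ runs from $a^*$ (as $\alpha\to0$) to $a^-_{bc}$ (as $\alpha\to\infty$). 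Connectedness then yields a regularization strength at which the output sits mid-gap.
\end{itemize}

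\textbf{Even granting your two-point problem, the conclusion does not follow.} Your inequality $p_1\|\bar a-a^*\|^2+p_2\|\bar a-a^-\|^2\ge\tfrac{p_1p_2}{p_1+p_2}D^2$ bounds the distillation error to the flow targets. The OOD claim, by contrast, concerns the distance $\min_{a\in\mathcal{A}(s)}\|\bar a-a\|_2^2$ from the generated action to the support, and membership $\bar a\in\bar{\mathcal{A}}(s)$.

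Your step ``$\bar a$ cannot lie in either mode without being at distance $\ge D$ from the other, hence it lies in the gap'' is a non sequitur. Having $\bar a\in\mathcal{A}^*_Q(s)$ at distance $\ge D$ from $a^-$ is perfectly consistent with the gap condition.

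Moreover, the drift term $\tfrac{\lambda}{2}\nabla Q(s,\bar a)$ points toward the $Q$-maximizer. With unequal masses or a dominant value term, the minimizer can therefore land inside the optimal mode. The paper's closed form makes this explicit: $a_{FQL}\to a^*$ as $\alpha\to0$. Your claim that the drift ``cannot close a gap of size $D$ independently of $\lambda$'' is thus false, and no argument of this kind gives an OOD output for every $\lambda>0$. The paper's route itself only produces the mid-gap, distance-$D/2$ configuration at a suitably chosen regularization strength.
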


\textit{Proof is shown in Appendix} \ref{appendix:proofofthmoodavg}. Theorem \ref{thm:ood_avg} reveals a fundamental limitation of FQL: under the latent mode entanglement established in Proposition \ref{prop:latentfusion}, the explicit Euclidean regularization (e.g., $-\lambda Q + \|a - \mu_\theta\|_2^2$) blindly forces policy updates to perform linear interpolation across the action space. Because multi-modal data manifolds are topologically separated by a physical gap $D > 0$, this shortest Euclidean path traverses low-density OOD valleys, which triggers the Mode Averaging disaster with a strict expected error bounded by $D^2/4$. Consequently, we conclude that action-space Euclidean constraints are structurally incompatible with multi-modal offline RL. Besides, assuming local $\mu$-strong concavity in the valley between modes serves as an analytical abstraction to capture the optimization tension in FQL.

In other words, in probability density estimation, a smoothly connected set of feasible actions (like slightly different stable walking postures) topologically constitutes a single, broad, and continuous mode. Flow Matching naturally and gracefully captures this continuous variance. However, we suggest that arises specifically when the behavioral distribution contains structurally distinct peaks separated by low-density valleys (OOD regions).

\textbf{Motivation.} The observation that standard Euclidean distillation is vulnerable to mode averaging on multi-modal datasets motivates our Value-Rectified Distillation (VRD) framework. As analyzed in this section, directly penalizing the action difference can inadvertently pull the actor into OOD regions. To effectively mitigate this issue, policy extraction must move beyond simple action-space regularization. Instead of linearly interpolating actions across empty spaces, VRD rectifies the continuous latent mapping by dynamically aligning the flow dynamics with value guidance, hence encouraging the policy to safely extract high-value actions along the data manifold.

\section{Methodology}

\subsection{Unified framework: kernel-coupled value rectification}\label{sec:unified}

In Flow Q-Learning, while the initial behavior cloning phase effectively captures the multi-modal data distribution via the flow model $\mu_\theta$, it absorbs suboptimal behavioral modes (as discussed in Section \ref{sec:preliminaries}). To distill $\mu_\theta$ into an optimal policy $\pi$ that exclusively outputs actions within the optimal set $\mathcal{A}^*_Q$, a principled rectification mechanism is essential. Rather than treating this process as isolated heuristic regularizations, we cast the value rectification as a latent distribution alignment problem.

To this end, we propose a generalized Value-Rectified Functional Space. We define the rectification objective $\Psi(\pi; \mu_\theta)$ as an expected kernel-coupled divergence over the joint latent space $\mathcal{Z} \times \mathcal{Z}$:
\begin{equation}
\label{eq:vrd_functional}
\Psi(\pi; \mu_\theta) = \int_{\mathcal{Z}} \mathcal{P}_{\mathcal{Z}}(z) \left[ \int_{\mathcal{Z}} \kappa_Q(z, z' \mid s, \mu_\theta, \pi) \cdot \|\mu_\theta(s, z) - \pi(s, z')\|_2^2 \, dz' \right] dz
\end{equation}
where $\mathcal{P}_{\mathcal{Z}}(z)$ is the prior distribution of the pre-trained latent space, and $\kappa_Q(z, z' \mid \cdot): \mathcal{Z} \times \mathcal{Z} \rightarrow \mathbb{R}^+$ is defined as the Value-Guided Alignment Kernel (VGAK). Specifically, in our work, we focus on a \textit{Non-merging transport kernel} $\kappa_Q(z, z' \mid \cdot) = w(z)\delta(z' - T(z))$ for a measurable weight $w \ge 0$ and a measurable map $T: \mathcal{Z} \rightarrow \mathcal{Z}$ satisfying mode separation $T(\mathcal{Z}_\mu^*) \cap T(\mathcal{Z}_\mu^-) = \emptyset$ and fiber-wise optimal preservation (i.e., for any $z' \in T(\mathcal{Z}_\mu^*)$, the induced conditional barycenter over the pre-image fiber $T^{-1}(\{z'\})$ remains within $\mathcal{A}_Q^*(s)$). The core insight of this unified functional is that the transition from a behavior-cloned flow to an optimal policy is essentially a latent-to-latent coupling process. The kernel $\kappa_Q$ mathematically governs both the coupling strength (how much weight a sample carries) and the coordinate mapping (where the latent variable is transported) between the source latent variable $z$ and the target latent variable $z'$.

Then, by plugging Eq.~(\ref{eq:vrd_functional}) into Eq.~(\ref{eq:fql_actor}), the rectification-based policy optimization objective is:
\begin{equation}
\label{eq:vrd_objective}
\mathcal{L}_\pi = \mathbb{E}_{s \sim \mathcal{D}} \left[ \Psi(\pi; \mu_\theta \mid s) - \lambda \cdot \mathbb{E}_{z' \sim \mathcal{P}_{test}}[Q(s, \pi(s, z'))] \right]
\end{equation}
where $\lambda$ is the balance coefficient, and $\mathcal{P}_{test}$ is a prior distribution of the latent variable $z'$ used at test.

To establish a rigorous theoretical foundation for how this functional guarantees optimal policy extraction, we first define the partitioning of the original latent space.
\begin{definition}[Optimal Latent Subset]
\label{def:optimal_lat_sub}
For a given state $s$ and a pre-trained flow model $\mu_\theta$, the subset of latent variables that map to the optimal action set $\mathcal{A}_Q^*(s)$ in Eq.~(\ref{eq:optimalaction}) is defined as $\mathcal{Z}_\mu^*(s) = \{z \in \mathcal{Z} \mid \mu_\theta(s, z) \in \mathcal{A}_Q^*(s)\}$, and the corresponding suboptimal latent subset is $\mathcal{Z}_\mu^-(s) = \mathcal{Z} \setminus \mathcal{Z}_\mu^*(s)$.
\end{definition}

\begin{theorem}[Optimal Support]\label{thm:optimalsupport}
Suppose the critic $Q$ is conservative on out-of-distribution regions such that $\sup_{a \in \overline{\mathcal{A}}(s)} Q(s, a) < \max_{a' \in \mathcal{A}(s)} Q(s, a') - \epsilon_{opt}$. Let $\hat{\pi}^* = \arg\min_\pi [\Psi(\pi; \mu_\theta) - \lambda \cdot \mathbb{E}_{z' \sim \mathcal{P}_{test}}[Q(s, \pi(s, z'))]]$, as in Eq.~(\ref{eq:vrd_objective}), be the learned optimal policy assuming universal approximation. Over the test support $\mathcal{Z}_{test} = \mathrm{supp}(\mathcal{P}_{test})$, if the non-merging alignment kernel $\kappa_Q$ satisfies:
(i) \textbf{$\epsilon$-Bounded Suboptimal Leakage:} $\int_{\mathcal{Z}_{test}}\int_{\mathcal{Z}_\mu^-} \mathcal{P}_{\mathcal{Z}}(z)\kappa_Q(z, z' \mid \cdot) \, dz dz' \le \epsilon$, and 
(ii) \textbf{$C$-Dominant Optimal Coupling:} $\int_{\mathcal{Z}_{test}}\int_{\mathcal{Z}_\mu^*} \mathcal{P}_{\mathcal{Z}}(z)\kappa_Q(z, z' \mid \cdot) \, dz dz' \ge C > 0$, 
then under the kernel-induced marginal distribution over $\mathcal{Z}_{\text{test}}$, the policy guarantees to output optimal actions with probability $\mathbb{P}(\hat{\pi}^*(s, z') \in \mathcal{A}_Q^*(s)) \ge \frac{C}{C+\epsilon}$.
\end{theorem}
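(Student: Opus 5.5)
Specialize the functional to the non-merging transport kernel $\kappa_Q(z,z'\mid\cdot)=w(z)\delta(z'-T(z))$. Then $\Psi$ becomes $\int_{\mathcal{Z}}\mathcal{P}_{\mathcal{Z}}(z)w(z)\|\mu_\theta(s,z)-\pi(s,T(z))\|_2^2\,dz$. The kernel-induced marginal on the target latent $z'$ is the pushforward of the weighted measure $\mathcal{P}_{\mathcal{Z}}(z)w(z)\,dz$ under $T$; call it $\nu$. Split it as $\nu=\nu^*+\nu^-$, according to whether the source $z$ lies in $\mathcal{Z}_\mu^*$ or $\mathcal{Z}_\mu^-$. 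Conditions (i) and (ii) then say exactly that $\nu^-(\mathcal{Z}_{test})\le\epsilon$ and $\nu^*(\mathcal{Z}_{test})\ge C$. Mode separation $T(\mathcal{Z}_\mu^*)\cap T(\mathcal{Z}_\mu^-)=\emptyset$ makes the supports of $\nu^*$ and $\nu^-$ disjoint. Hence every $z'\in\mathcal{Z}_{test}$ receives mass from only one mode.

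\textbf{Pointwise minimizer.} With universal approximation, $\pi(s,\cdot)$ can be chosen independently at each $z'$. Disintegrate $\Psi$ along the fibers $T^{-1}(\{z'\})$. For each $z'$ the objective becomes
\[
\int_{T^{-1}(z')}\|\mu_\theta(s,z)-a\|_2^2\,d\nu_{z'}(z)\;-\;\lambda\, Q(s,a)\,\tfrac{d\mathcal{P}_{test}}{d\nu}(z').
\]
The quadratic part is minimized at the conditional barycenter $\bar a(z')$ of $\mu_\theta(s,z)$ over the fiber. Take $z'\in T(\mathcal{Z}_\mu^*)\cap\mathcal{Z}_{test}$. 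By fiber-wise optimal preservation, $\bar a(z')\in\mathcal{A}_Q^*(s)$.

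\textbf{The $Q$ term.} I then need the $Q$ term not to push the minimizer out of $\mathcal{A}_Q^*(s)$. Conservatism gives $Q(s,a)<\max Q-\epsilon_{opt}$ on OOD actions. Actions in $\mathcal{A}_Q^-$ likewise fall below the threshold, by definition. So any candidate $a\notin\mathcal{A}_Q^*(s)$ has strictly lower $Q$ than $\bar a(z')$. Its quadratic cost is also no smaller, since $\bar a(z')$ minimizes that part. Hence $\bar a(z')$ strictly beats every non-optimal $a$. The minimizer may move away from $\bar a(z')$ to raise $Q$, but it can only land where $Q\ge\max Q-\epsilon_{opt}$. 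Within the support this is $\mathcal{A}_Q^*(s)$, and conservatism excludes the OOD part. So $\hat\pi^*(s,z')\in\mathcal{A}_Q^*(s)$ for $\nu^*$-a.e.\ $z'\in\mathcal{Z}_{test}$.

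\textbf{Counting mass.} Normalize $\nu$ restricted to $\mathcal{Z}_{test}$ into a probability measure. The optimal-output event contains the $\nu^*$-part, so
\[
\mathbb{P}\big(\hat\pi^*(s,z')\in\mathcal{A}_Q^*(s)\big)\ \ge\ \frac{\nu^*(\mathcal{Z}_{test})}{\nu^*(\mathcal{Z}_{test})+\nu^-(\mathcal{Z}_{test})}\ \ge\ \frac{C}{C+\epsilon}.
\]
The last step uses that $x/(x+y)$ is increasing in $x$ and decreasing in $y$. Latents in the $\nu^-$-part are conservatively counted as failures, and latents outside $\mathrm{supp}\,\nu$ carry no kernel mass.

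\textbf{Main obstacle.} The hard step is the argument that the $Q$ term cannot drag the fiber minimizer outside $\mathcal{A}_Q^*(s)$. It needs the conservatism assumption and the barycenter property to combine cleanly. It also needs care where $\mathcal{P}_{test}$ and $\nu$ are not mutually absolutely continuous. I would first handle this by a direct comparison of objective values between $\bar a(z')$ and any $a\notin\mathcal{A}_Q^*(s)$. If that fails, I would fall back to interpreting the probability purely under the kernel-induced marginal, as the statement specifies.
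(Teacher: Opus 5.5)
Your proposal is correct and follows essentially the same route as the paper. The paper also isolates a pointwise objective at each $z'$: it uses the normalized fiber weight $\mathcal{W}(\cdot\mid z')$ and the coefficient $\alpha(z')=\lambda\mathcal{P}_{test}(z')/(2Z_{norm}(z'))$, which is your $\tfrac{d\mathcal{P}_{test}}{d\nu}$ up to normalization. It then compares the minimizer with the fiber barycenter $\bar\mu(s,z')$ via the bias--variance decomposition to get $Q(s,\hat\pi^*(s,z'))\ge Q(s,\bar\mu(s,z'))\ge\max Q-\epsilon_{opt}$, excludes OOD actions by conservatism, and counts mass as $I^*/(I^*+I^-)\ge C/(C+\epsilon)$ using mode separation. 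Your pushforward/disintegration language is a measure-theoretic restatement of this, and your remark on absolute continuity of $\mathcal{P}_{test}$ with respect to $\nu$ addresses a point the paper passes over by working with densities.
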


\textit{Proof see Appendix \ref{appendix:proofofoptimalsupport}.} Theorem \ref{thm:optimalsupport} resolves this issue by entirely discarding the Euclidean penalty in the action space. Instead, it reformulates the policy distillation process as a non-linear latent distribution alignment problem, introducing a Value-Guided Alignment Kernel, denoted as $\kappa_Q$. As long as $\kappa_Q$ satisfies two fundamental conditions: (i) $\epsilon$-Bounded Suboptimal Leakage, and (ii) $C$-Dominant Optimal Coupling, Theorem \ref{thm:optimalsupport} guarantees a positive lower bound ($\ge \frac{C}{C+\epsilon}$) on the probability that the distilled policy outputs actions within the optimal support set $\mathcal{A}_Q^*(s)$. Mathematically, this non-linear latent alignment allows the generative trajectories to conform to the underlying data manifold, safely bypassing OOD regions and eliminating the mode averaging.

In the following subsections (Section \ref{sec:quantile} and \ref{sec:ranking}), we demonstrate how the single framework, VRD, derives two practical rectification methodologies, as shown in Figure \ref{fig:framework}, one via quantile modulation and the other via latent reshaping, by simply instantiating different forms of Dirac kernels $\kappa_Q$.

\begin{figure}[t]
    \centering
    \includegraphics[width=1\linewidth]{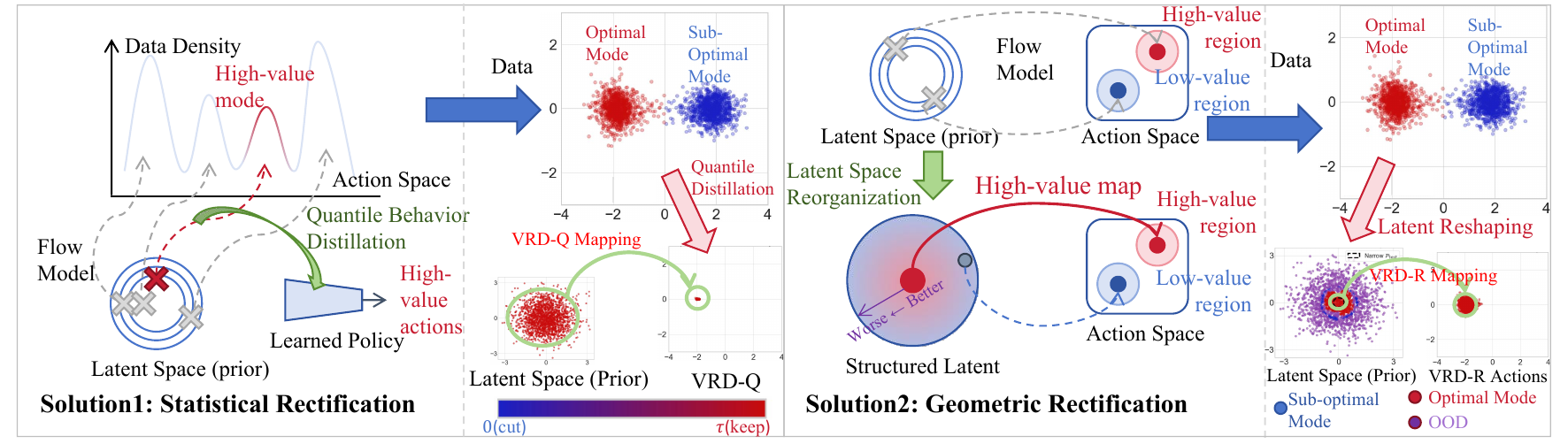}
    \caption{Implementations for \textbf{Value-Rectified Distillation (VRD)}: Statistical rectification via quantile behavior distillation (left), Geometric rectification via reshaping behavior distillation (right).}
    \label{fig:framework}
\end{figure}

\subsection{Statistical rectification: quantile behavior distillation}\label{sec:quantile}

In our first instantiation of the generalized framework, we restrict the target latent space from undergoing topological deformation. From an Eulerian perspective, we perform an \textbf{in-place alignment} where the source latent $z$ must strictly align with the target latent $z'$. 

Mathematically, this implies that the Value-Guided Alignment Kernel (VGAK) degenerates into a diagonal Dirac matrix modulated by a value-aware weight. We formulate this specific kernel as:
\begin{align}
\kappa_{Q}(z, z^{\prime} \mid s, \mu_{\theta}, \pi)
= \underbrace{\Big|\tau - \mathbb{I}\big(Q(s, \mu_{\theta}(s, z)) < Q(s, \pi(s, z)) - \epsilon_{\text{opt}}\big)\Big|}_{\text{Quantile-aware Weight } \omega_{\tau}(z)}
\cdot \underbrace{\delta(z^{\prime} - z)}_{\text{In-place Alignment}}
\label{eq:kernel_quantile}
\end{align}
where $\tau \in (0.5, 1)$ is a quantile hyperparameter, $\mathbb{I}(\cdot)$ is the indicator function, and $\delta(\cdot)$ is the Dirac delta function. Note that the weight $\omega_\tau(z)$ is strictly a function of the source latent $z$, ensuring stable gradient propagation during variational optimization. In practice, $\epsilon_{\text{opt}}$ is often set to a very small value, e.g., $10^{-6}$.

By substituting this diagonal kernel into the unified functional $\Psi(\pi; \mu_\theta)$ (defined in Section \ref{sec:unified}), the inner integral over $z'$ collapses due to the sifting property of the Dirac $\delta$ function. The functional seamlessly derives into our first practical objective, the \textbf{Quantile-Rectified Loss}:
\begin{align}
\Psi_{Q}(\pi; \mu_\theta) = \mathbb{E}_{s \sim \mathcal{D}, z \sim \mathcal{Z}} \left[ \Big|\tau - \mathbb{I}\Big(Q\big(s,\mu_\theta(s,z)\big) < Q\big(s,\pi(s,z)\big) - \epsilon_{\text{opt}}\Big)\Big|\cdot\Big\|\mu_\theta(s,z) - \pi(s,z)\Big\|_2^2 \right] \label{eq:loss_quantile}
\end{align}

\begin{proposition}[Connecting VRD-Q with Theorem \ref{thm:optimalsupport}]\label{prop:vrdq}
Let $P^{*}$ and $P^{-}$ be the prior masses of $\mathcal{Z}_{\mu}^{*}$ and $\mathcal{Z}_{\mu}^{-}$, respectively. 
For given constants $\epsilon > 0$ and $C > 0$, if the quantile threshold $\tau \in (0.5, 1)$ satisfies 
$\tau P^{*} \ge C$ and $(1-\tau)P^{-} + (2\tau-1)P^{-}_{\text{band}}(\tau) \le \epsilon$ 
(where $P^{-}_{\text{band}}(\tau)$ denotes the prior mass of the boundary band near $\mathcal{A}_{Q}^{*}$), then: (a) The VRD-Q kernel satisfies the coupling conditions of Theorem~\ref{thm:optimalsupport}, guaranteeing 
$\mathbb{P}_{\kappa_{Q}}(\hat{\pi}^{*}(s,z) \in \mathcal{A}_{Q}^{*}(s)) \ge \frac{C}{C+\epsilon}$ 
under the kernel-induced marginal distribution; (b) Under the standard prior $z \sim \mathcal{N}(0, I_{d})$ at inference, 
$\mathbb{P}_{z}(\hat{\pi}^{*}(s,z) \in \mathcal{A}_{Q}^{*}(s)) \ge P^{*} + P^{-}_{\text{rect}}(\tau)$, 
where $P^{-}_{\text{rect}}(\tau)$ is the rectified prior mass.
\end{proposition}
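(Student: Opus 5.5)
The plan is to reduce part (a) to Theorem~\ref{thm:optimalsupport} by computing the two coupling integrals for the diagonal kernel in Eq.~(\ref{eq:kernel_quantile}), and then to derive part (b) by a direct partition of the Gaussian prior mass.

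\textbf{Computing the coupling integrals.} Since $\kappa_Q(z,z')=\omega_\tau(z)\delta(z'-z)$, the sifting property collapses each double integral in conditions (i) and (ii) to a single integral $\int \mathcal{P}_{\mathcal{Z}}(z)\,\omega_\tau(z)\,dz$, taken over $\mathcal{Z}_\mu^-$ or $\mathcal{Z}_\mu^*$ respectively (intersected with $\mathcal{Z}_{test}$, which we take to be the full support). The weight $\omega_\tau(z)$ takes only the values $\tau$ and $1-\tau$. I will characterize at the optimum when the indicator $\mathbb{I}(Q(s,\mu_\theta(s,z))<Q(s,\pi(s,z))-\epsilon_{opt})$ fires.

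\textbf{Optimal latents.} For $z\in\mathcal{Z}_\mu^*$, we have $\mu_\theta(s,z)\in\mathcal{A}_Q^*$. The conservativeness assumption of Theorem~\ref{thm:optimalsupport} means the actor cannot exceed $Q(s,\mu_\theta(s,z))+\epsilon_{opt}$ inside the support. So the indicator is $0$ there and $\omega_\tau=\tau$, giving $\int_{\mathcal{Z}_\mu^*}\mathcal{P}_{\mathcal{Z}}\,\omega_\tau = \tau P^*\ge C$, which is condition (ii).

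\textbf{Suboptimal latents.} Split $\mathcal{Z}_\mu^-$ into two parts.
\begin{itemize}
\item The \emph{rectified} part: latents where the actor improves the value by more than $\epsilon_{opt}$. There the indicator is $1$ and the weight is $1-\tau$.
\item The \emph{boundary band} $P^-_{band}(\tau)$: latents near $\mathcal{A}_Q^*$ where the improvement is at most $\epsilon_{opt}$. There the weight stays $\tau$.
\end{itemize}
This gives
\[
\int_{\mathcal{Z}_\mu^-}\mathcal{P}_{\mathcal{Z}}\,\omega_\tau=(1-\tau)(P^- - P^-_{band})+\tau P^-_{band}=(1-\tau)P^-+(2\tau-1)P^-_{band}\le\epsilon,
\]
which is condition (i). Invoking Theorem~\ref{thm:optimalsupport} then yields the bound $C/(C+\epsilon)$ under the kernel-induced marginal. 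One subtlety is that $\omega_\tau$ depends on $\pi$ itself, so it must be evaluated at the fixed point $\hat\pi^*$. I would handle this by defining the partition using $\hat\pi^*$ and noting that the theorem only uses the kernel values at the minimizer.

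\textbf{Part (b).} Under $z\sim\mathcal{N}(0,I_d)$, write
\[
\mathbb{P}_z\bigl(\hat\pi^*\in\mathcal{A}_Q^*\bigr)=\mathbb{P}_z\bigl(z\in\mathcal{Z}_\mu^*,\ \hat\pi^*\in\mathcal{A}_Q^*\bigr)+\mathbb{P}_z\bigl(z\in\mathcal{Z}_\mu^-,\ \hat\pi^*\in\mathcal{A}_Q^*\bigr).
\]
For the first term, I would argue that on $\mathcal{Z}_\mu^*$ the pointwise minimizer of $\tau\|\mu_\theta-a\|^2-\lambda Q(s,a)$ stays in $\mathcal{A}_Q^*$. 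The reason is that leaving $\mathcal{A}_Q^*$ increases the distance term while, by conservativeness and the $\epsilon_{opt}$ definition, it cannot increase $Q$ beyond the optimal level. Hence the first term equals $P^*$. The second term is, by definition, the mass $P^-_{rect}(\tau)$ of suboptimal latents that the down-weighted ($1-\tau$) distillation lets $Q$-ascent push into $\mathcal{A}_Q^*$.

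\textbf{Main obstacle.} The hardest step is the claim that optimal latents are never pulled out of $\mathcal{A}_Q^*$. Pointwise minimization with the $\lambda Q$ term may drift off the support when $Q$ is larger just outside it. I would first try to rule this out using the conservativeness assumption restricted to $\overline{\mathcal{A}}$, together with the isolation gap $D$ from Eq.~(\ref{eq:dist_D}). If that fails, I would state a mild additional condition that $\lambda$ is small relative to $\tau D^2$.
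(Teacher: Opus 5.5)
Your outline matches the paper's: weight $\tau$ on $\mathcal{Z}_\mu^*$ for every $\pi$, weight $1-\tau$ off a band and at most $\tau$ on it, the arithmetic $(1-\tau)P^-+(2\tau-1)P^-_{\text{band}}(\tau)$, and then Theorem~\ref{thm:optimalsupport}. The gap is that the step carrying the content has been replaced by a definition. You declare the band to be wherever the indicator fails to fire at $\hat\pi^*$, and $P^-_{\text{rect}}$ to be whatever suboptimal mass happens to land in $\mathcal{A}_Q^*(s)$. That makes condition (i) and part (b) identities. Nothing then shows that the $\tau$-weighted suboptimal latents lie in a band near $\mathcal{A}_Q^*(s)$. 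The circularity is also left open: $\hat\pi^*$ minimizes an objective whose weights are computed from $\hat\pi^*$.

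The paper fills this with the local quadratic model of $Q$ from Theorem~\ref{thm:ood_avg}: $P=-\nabla_a^2Q$, $\mu_Q I\preceq P\preceq LI$, maximizer $a^*$.
\begin{itemize}
\item With $\alpha=\omega_\tau(z)/\lambda$, the pointwise maximizer of $Q(s,a)-\alpha\|a-\mu_\theta(s,z)\|_2^2$ is $\hat\pi^*(s,z)-a^*=2\alpha(2\alpha I+P)^{-1}(\mu_\theta(s,z)-a^*)$.
\item Its value improvement $\Delta(\alpha)$ over $\mu_\theta(s,z)$ is strictly decreasing in $\alpha$. It also satisfies $\Delta(\alpha)\ge c(\alpha)\bigl(Q_{\max}(s)-Q(s,\mu_\theta(s,z))\bigr)$ with $c(\alpha)=1-\bigl(2\alpha/(2\alpha+\mu_Q)\bigr)^2$.
\item Define the band intrinsically as $\{z\in\mathcal{Z}_\mu^-: Q_{\max}(s)-Q(s,\mu_\theta(s,z))\le\epsilon_{\text{opt}}/c(\tau/\lambda)\}$. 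Off the band, weight $\tau$ forces $\Delta(\tau/\lambda)>\epsilon_{\text{opt}}$, which fires the indicator, a contradiction.
\item Off the band, weight $1-\tau$ gives $\Delta((1-\tau)/\lambda)\ge\Delta(\tau/\lambda)>\epsilon_{\text{opt}}$, which is consistent. So $1-\tau$ is the unique self-consistent weight there.
\item The same closed form gives $P^-_{\text{rect}}(\tau)$ real content. It is the mass of off-band latents with $\|2\alpha'(2\alpha' I+P)^{-1}(\mu_\theta(s,z)-a^*)\|_2\le\sqrt{2\epsilon_{\text{opt}}/L}$, where $\alpha'=(1-\tau)/\lambda$. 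These latents land in $\mathcal{A}_Q^*(s)$ because $Q_{\max}(s)-Q(s,\hat\pi^*)\le\tfrac{L}{2}\|\hat\pi^*-a^*\|_2^2\le\epsilon_{\text{opt}}$ and the critic is conservative.
\end{itemize}

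Your ``main obstacle'' is not an obstacle, and the reason you give for it is false. Leaving $\mathcal{A}_Q^*(s)$ need not increase $\|a-\mu_\theta(s,z)\|_2$; take $\mu_\theta(s,z)$ near the boundary of $\mathcal{A}_Q^*(s)$. Instead, compare the pointwise objective at its maximizer with the candidate $a=\mu_\theta(s,z)$, whose penalty is zero. This gives $Q(s,\hat\pi^*(s,z))\ge Q(s,\mu_\theta(s,z))\ge Q_{\max}(s)-\epsilon_{\text{opt}}$. Every action outside $\mathcal{A}_Q^*(s)$ has $Q<Q_{\max}(s)-\epsilon_{\text{opt}}$: in-support suboptimal actions by definition, OOD actions by conservatism. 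Hence $\hat\pi^*(s,z)\in\mathcal{A}_Q^*(s)$ for all $z\in\mathcal{Z}_\mu^*$. This needs neither the gap $D$ nor a hypothesis like $\lambda\lesssim\tau D^2$, which the proposition does not assume. It is exactly regime 1 in the proof of Theorem~\ref{thm:optimalsupport}, on which your part (a) already relies.
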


\textit{Proof see Appendix~\ref{appendix:proofofvrdq}.} Proposition~\ref{prop:vrdq} has two parts. Part (a) analyzes the kernel-induced marginal distribution: by the margin in Eq.~(\ref{eq:kernel_quantile}), optimal latents receive weight $\tau$ independent of $\pi$, while suboptimal ones outside a band near $\mathcal{A}_{Q}^{*}(s)$ receive $1-\tau$. Hence, Theorem~\ref{thm:optimalsupport}'s coupling conditions hold if $\tau P^{*} \ge C$ and $(1-\tau)P^{-} + (2\tau-1)P^{-}_{\text{band}}(\tau) \le \epsilon$. Part (b) covers inference under the standard prior without latent reweighting: optimal latents still map to $\mathcal{A}_{Q}^{*}(s)$, and a weakened behavioral anchor rectifies some suboptimal latents into $\mathcal{A}_{Q}^{*}(s)$, yielding the lower bound $P^{*} + P^{-}_{\text{rect}}(\tau)$. Notably, this bound is distinct and does not follow from (a)'s $\frac{C}{C+\epsilon}$ bound.

Practically, tuning $\tau$ trades off suboptimal filtering (larger $\tau$ heavily attenuates suboptimal weights) against critic-error robustness (moderate $1-\tau$ anchors the policy against noisy $Q$-gradients); see Appendix~\ref{appendix:ablation_study} for empirical $\tau$ analysis.

\subsection{Geometric rectification: reshaped behavior distillation}\label{sec:ranking}

In our second instantiation, instead of modulating the fitting weights, we assign equal importance to all samples but permit topological deformation of the target latent space. This achieves rectification through \textbf{spatial transportation}, where the source latent $z$ is dynamically routed to a new coordinate $z'$ based on its evaluated Q-value.

Mathematically, the Value-Guided Alignment Kernel (VGAK) degenerates into a non-diagonal, deterministic transport map. We formulate this transport kernel as a shifted Dirac delta function:
\begin{equation}
\label{eq:vrdr_kernel}
\kappa_Q(z, z' \mid s, \mu_\theta, \pi) = 1 \cdot \delta(z' - z_Q(s, z, \mu_\theta))
\end{equation}
where $z_Q(s, z, \mu_\theta)$ is a value-guided reshaping function. To construct an optimal cluster that is highly accessible during inference, we design $z_Q$ as a gravitational field towards high-value actions:
\begin{equation}
\label{eq:vrdr_reshaping}
z_Q(s, z, \mu_\theta) = \Phi_Q(s, z) \cdot \frac{z}{\|z\|_2}, \quad \text{where} \quad \Phi_Q(s, z) = \frac{Q_{\max}(s) - Q(s, \mu_\theta(s, z))}{Q_{\max}(s) - Q_{\min}(s) + \epsilon_m}
\end{equation}
where $Q_{\max}(s) = \max_{a \in \mathcal{A}(s)} Q(s, a)$ and $Q_{\min}(s) = \min_{a \in \mathcal{A}(s)} Q(s, a)$ denote the state-wise value boundaries (which can be practically approximated by the empirical extremes or the MDP reward bounds $\frac{R_{\max}}{1-\gamma}$ and $\frac{R_{\min}}{1-\gamma}$). Here, $\Phi_Q(s, z) \in [0, 1]$ acts as a normalized ranking scalar. By substituting this transport kernel into the unified functional $\Psi(\pi; \mu_\theta)$ in Eq.~(\ref{eq:vrd_functional}), the integral domain collapses precisely at $z' = z_Q(s, z, \mu_\theta)$. This derivation seamlessly yields our second practical objective, the Reshaping-Rectified Loss:
\begin{equation}
\label{eq:vrdr_loss}
\Psi_R(\pi; \mu_\theta) = \mathbb{E}_{s \sim \mathcal{D}, z \sim \mathcal{Z}} \left[ \|\mu_\theta(s, z) - \pi(s, z_Q(s, z, \mu_\theta))\|_2^2 \right]
\end{equation}

\begin{proposition}[Connecting VRD-R with Theorem~\ref{thm:optimalsupport}]
\label{prop:vrdr}
Suppose the critic $Q$ satisfies the conservatism condition in Theorem \ref{thm:optimalsupport}, and the optimal action set $\mathcal{A}_{Q}^{*}(s)$ forms a convex dominant mode over which $Q(s, \cdot)$ is concave. During inference, let $z^{\prime} \sim \mathcal{N}(0, \sigma^{2} I)$ be truncated to $\mathcal{Z}_{\text{test}} = \{z^{\prime} \in \mathcal{Z} \mid \|z^{\prime}\|_{2} \le \sigma\}$. If the truncation radius satisfies $\sigma < \epsilon_{\text{opt}} / (Q_{\max}(s) - Q_{\min}(s) + \epsilon_{m})$, where $\epsilon_{\text{opt}}$ is the optimality relaxation factor in Eq.~(\ref{eq:optimalaction}), then $\kappa_{Q}$ in Eq.~(\ref{eq:vrdr_kernel}) fulfills all conditions of Theorem \ref{thm:optimalsupport} with zero suboptimal leakage ($I^{-} = 0$), guaranteeing $\mathbb{P}(\hat{\pi}^{*}(s, z^{\prime}) \in \mathcal{A}_{Q}^{*}(s)) = 1$ under the kernel-induced marginal distribution.
\end{proposition}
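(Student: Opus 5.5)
The plan is to verify the two coupling conditions of Theorem~\ref{thm:optimalsupport} directly for the shifted-Dirac kernel in Eq.~(\ref{eq:vrdr_kernel}), and then invoke that theorem with $\epsilon = 0$ (understood as a limit, or by reading its bound $\frac{C}{C+\epsilon}$ at $\epsilon=0$). Substituting $\kappa_Q(z,z')=\delta(z'-z_Q(s,z,\mu_\theta))$ and integrating over $z'\in\mathcal{Z}_{test}$ collapses the Dirac mass. This gives
\begin{align*}
I^{-} = \int_{\mathcal{Z}_\mu^-}\mathcal{P}_{\mathcal{Z}}(z)\,\mathbb{I}\big(z_Q(s,z,\mu_\theta)\in\mathcal{Z}_{test}\big)\,dz,
\end{align*}
and $I^{*}$ is defined analogously over $\mathcal{Z}_\mu^*$. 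Since $\|z_Q(s,z,\mu_\theta)\|_2=\Phi_Q(s,z)$ by Eq.~(\ref{eq:vrdr_reshaping}), membership in $\mathcal{Z}_{test}$ is equivalent to $\Phi_Q(s,z)\le\sigma$.

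The key step is a value-to-radius translation. Write $\Delta = Q_{\max}(s)-Q_{\min}(s)+\epsilon_m$. If $z\in\mathcal{Z}_\mu^-$, then $\mu_\theta(s,z)$ lies either in $\mathcal{A}_Q^-(s)$ or in $\overline{\mathcal{A}}(s)$.
\begin{itemize}
\item In the first case, Eq.~(\ref{eq:optimalaction}) gives $Q_{\max}(s)-Q(s,\mu_\theta(s,z))>\epsilon_{opt}$.
\item In the second case, the conservatism assumption of Theorem~\ref{thm:optimalsupport} gives the same strict inequality.
\end{itemize}
In both cases $\Phi_Q(s,z)>\epsilon_{opt}/\Delta>\sigma$, by the hypothesis on $\sigma$. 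So no suboptimal latent is transported into $\mathcal{Z}_{test}$, and $I^{-}=0$. Conversely, I would show $I^{*}\ge C>0$ by exhibiting latents with $\Phi_Q\le\sigma$. These are the $z$ with $Q(s,\mu_\theta(s,z))\ge Q_{\max}(s)-\sigma\Delta$, which is a subset of $\mathcal{Z}_\mu^*$. I would argue that this subset has positive prior mass, using continuity of $\mu_\theta$ and $Q$ together with the fact that $\mathcal{A}_Q^*(s)$ is expressed by the model with the maximizer attained in its support. Then set $C$ equal to this mass.

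The main obstacle is this positivity step, since the supremum $Q_{\max}$ need not be attained on a set of positive prior measure. This is where I would use the convex-dominant-mode and concavity assumption. Concavity of $Q(s,\cdot)$ on the convex set $\mathcal{A}_Q^*(s)$ makes the superlevel set $\{a: Q(s,a)\ge Q_{\max}-\sigma\Delta\}$ a convex, relatively open neighbourhood of the maximizer. Its preimage under the continuous $\mu_\theta$ is then open and nonempty, and hence has positive Gaussian mass.

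Finally, the argmin $\hat{\pi}^*$ under universal approximation is handled as in Theorem~\ref{thm:optimalsupport}. Each test point $z'$ then receives as its target the conditional barycenter of $\mu_\theta(s,z)$ over the fiber $T^{-1}(\{z'\})$, taken only over optimal latents. Such a barycenter stays in $\mathcal{A}_Q^*(s)$, which is exactly where convexity of the mode is needed: fiber-wise optimal preservation holds because it is an average of points in a convex set. This also gives mode separation, because $T(\mathcal{Z}_\mu^*)\cap T(\mathcal{Z}_\mu^-)$ is empty inside $\mathcal{Z}_{test}$. The $-\lambda Q$ term can only push outputs toward higher value, and by conservatism it cannot move them into OOD regions. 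Hence the theorem's bound becomes $\frac{C}{C+0}=1$, which gives the claimed probability under the kernel-induced marginal.
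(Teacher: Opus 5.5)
Your proposal follows the paper's proof essentially step for step: the bound $\Phi_Q(s,z) > \epsilon_{\text{opt}}/(Q_{\max}(s)-Q_{\min}(s)+\epsilon_m) > \sigma$ on $\mathcal{Z}_\mu^-$ (from the definition of $\mathcal{A}_Q^-(s)$ and critic conservatism) gives $I^-=0$; positivity of $I^*$ comes from continuity and the everywhere-positive Gaussian density; convexity of $\mathcal{A}_Q^*(s)$ gives fiber-wise barycenter preservation; and the conclusion is $C/(C+0)=1$. Two small remarks: positivity of $I^*$ needs only continuity of $z\mapsto Q(s,\mu_\theta(s,z))$ and attainment of $Q_{\max}(s)$ by the model, as the paper argues, so concavity plays no role there (the paper uses it instead in a Jensen step on the barycenter); and the paper proves mode separation globally by pairing your lower bound with the complementary bound $\Phi_Q(s,z)\le\epsilon_{\text{opt}}/(Q_{\max}(s)-Q_{\min}(s)+\epsilon_m)$ on $\mathcal{Z}_\mu^*$, which is what the non-merging hypothesis of Theorem~\ref{thm:optimalsupport} literally requires, rather than disjointness only inside $\mathcal{Z}_{\text{test}}$.
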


\textit{Proof see Appendix \ref{appendix:proofofvrdr}.} Unlike VRD-Q, whose threshold $\tau$ must be carefully balanced against probability masses to satisfy the $\epsilon$ and $C$ bounds (Proposition \ref{prop:vrdq}), VRD-R translates the distribution alignment problem into a strict geometric isolation. By setting the inference truncation radius $\sigma < \epsilon_{\text{opt}} / (Q_{\max}(s) - Q_{\min}(s) + \epsilon_m)$, under the assumption of concave dominant optimal mode, VRD-R achieves the ideal extreme of Theorem \ref{thm:optimalsupport} where the suboptimal leakage is approximate zero ($\epsilon \approx 0$). Consequently, the topological boundary $\epsilon_{\text{opt}} / (Q_{\max}(s) - Q_{\min}(s) + \epsilon_m)$ is determined exclusively by the Q-value landscape.

Proposition \ref{prop:vrdr} bridges the geometric mapping of VRD-R with the requirements of Theorem \ref{thm:optimalsupport}, providing the precise analytical threshold for the truncation radius $\sigma$ necessary to eliminate suboptimal leakage. By satisfying these conditions, it ensures that the inference process is not merely a heuristic reshaping but a value-guided optimal sampler that yields actions within $\mathcal{A}_Q^*(s)$ with high probability.

\textbf{In summary,} both VRD-Q and VRD-R prevent catastrophic out-of-distribution (OOD) mode-averaging by using dynamic Q-guidance to distill optimal actions along the pre-trained behavior manifold. However, they differ in mechanism and trade-offs: \textbf{VRD-Q (In-place Dynamical Intervention):} Truncates ODE gradient flows toward sub-optimal modes in-place via quantile modulation. It preserves the spatial geometry and multi-modal diversity of expert behaviors, which makes VRD-Q ideal for complex, multi-modal tasks requiring diverse path preservation and trajectory stitching (e.g., OGBench navigation). \textbf{VRD-R (Geometric Reshaping):} Rescales high-value latent vectors into a compact region around the origin. It acts as a symmetry-breaker to isolate a single, highly stable peak mode, which enable VRD-R for continuous control tasks where single-mode precision and stability are prioritized (e.g., standard D4RL locomotion). 
\textit{Algorithm procedures of \textbf{VRD-Q} and \textbf{VRD-R} are illustrated in Appendix \ref{appendix:algs}. Hyperparameter analysis for both components see Appendix \ref{appendix:ablation_study}.}



\section{Related works}

\paragraph{Gaussian Policies.} Traditional offline RL methods often rely on Gaussian distributions to parameterize the policy. This category includes standard Behavior Cloning (\textbf{BC}~\citep{kumar2021should}) as well as prominent value-based algorithms like Implicit Q-Learning (\textbf{IQL}~\citep{kostrikov2021offline}) and \textbf{ReBRAC}~\citep{tarasov2023revisiting}. While Gaussian policies are computationally efficient, in complex offline datasets where behavior is often multi-modal or skewed, Gaussian-based actors suffer from limited expressivity~\citep{park2025flow}, often leading to mode-averaging and suboptimal performance when the data distribution is non-Gaussian.

\paragraph{Diffusion Policies.} To overcome the expressivity bottleneck, recent works such as \textbf{IDQL}~\citep{hansen2023idql}, \textbf{SRPO}~\citep{chen2023score}, and \textbf{CAC}~\citep{ding2023consistency} leverage the power of score-based diffusion modeling to capture highly complex and multi-modal action distributions. However,
directly optimizing an iterative diffusion process to maximize a Q-function remains challenging, often requiring unstable backpropagation through the sampling steps or complex reward-guided sampling techniques~\citep{janner2022planning}.

\paragraph{Flow Policies.} Flow matching~\citep{lipman2022flow} has emerged as a compelling alternative to diffusion, which is simpler to train and faster to infer. To address the difficulty of directly optimizing iterative flow models, Flow Q-Learning (\textbf{FQL})~\citep{park2025flow} proposed distilling an expressive one-step policy from a flow-matching model guided by RL objectives, along with two variants: \textbf{FAWAC}~\citep{ashvin2020accelerating} and \textbf{FBRAC}~\citep{wu2019behavior}, which bridges the gap between expressivity and efficiency.  More recently, Guided Flow Policy (\textbf{GFP})~\citep{tiofack2025guided} and Decision Flow (\textbf{DF})~\citep{hu2025decision} develops tailored optimization techniques, focusing on specifically extracting high-value actions within the flow framework.


\section{Experiments}
\label{sec:experiments}

We aim to systematically assess the efficacy of the proposed VRD\footnote{Our code is available at: \url{https://github.com/Jack10843/VRD-master}} by answering the core research questions (RQs): \textbf{RQ1 (Toy Example):} Can VRD-Q statistically filter sub-optimal modes and VRD-R geometrically isolate optimal trajectories?
\textbf{RQ2 (Benchmarks):} Does the overall VRD framework outperform existing baselines across standard continuous control (D4RL) and complex long-horizon planning (OGBench) tasks?
\textbf{RQ3 (Multi-Modal):} How effectively does VRD mitigate OOD Mode Averaging when explicitly evaluated on different scales of multi-modal datasets?
\textbf{RQ4 (Hyperparameters):} How robust is the VRD to key hyperparameter variations?

\subsection{Visualizing latent rectification on toy examples (RQ1)}
\label{subsec:toy_example}

To empirically evaluate how our methods manipulate the latent space to extract optimal policies, we construct a 2D bi-modal toy environment. The dataset comprises actions sampled from an optimal mode at $[-2.0, 0.0]$ and a sub-optimal mode at $[2.0, 0.0]$, with isotropic Gaussian noise, where the corresponding value landscape is simulated using a mock Q-function strictly centered at $[-2.0, 0.0]$.
\begin{figure}[h]
    \centering
    \includegraphics[width=1\linewidth]{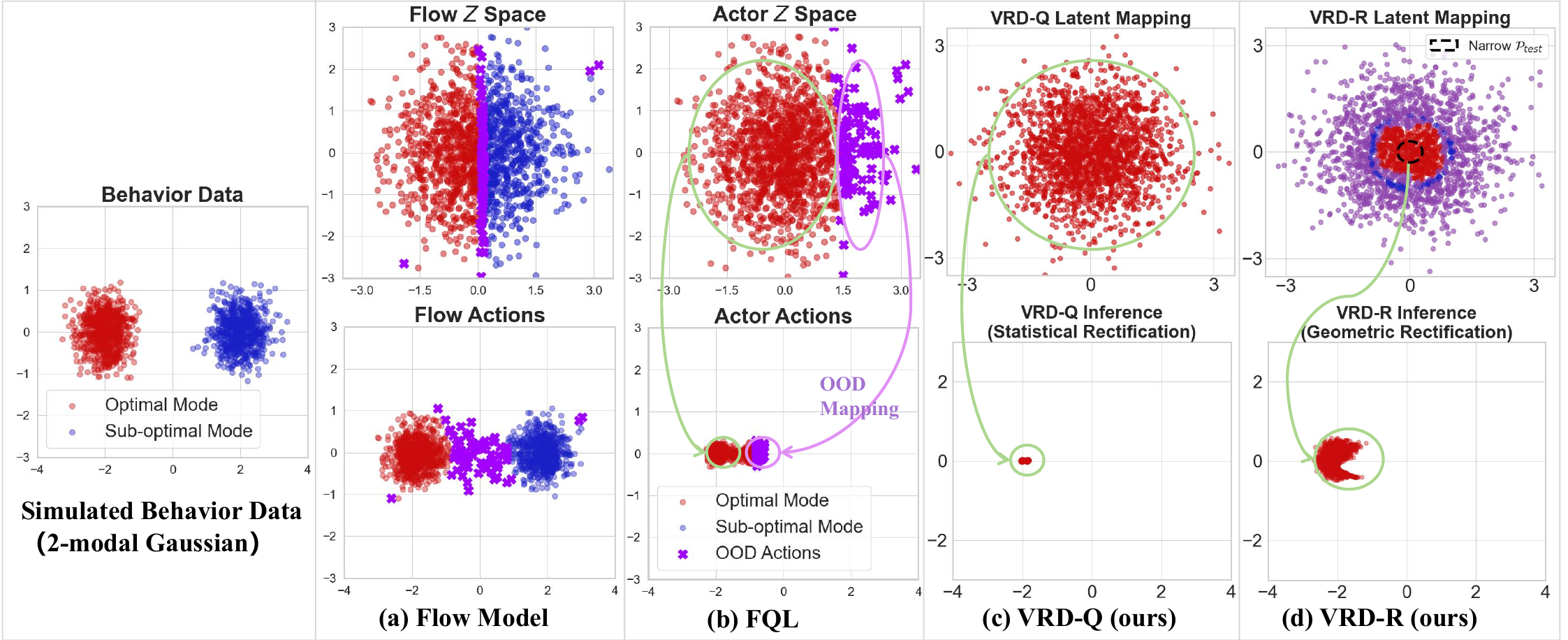}
    \caption{Visualizing results on toy examples with 2D multi-modal data.}
    \label{fig:vis_toyexp}
\end{figure}
Although the baseline flow model perfectly replicates the offline dataset by indiscriminately absorbing both modes (Figure \ref{fig:vis_toyexp}a), it suffers from \textit{latent fusion} at the boundary between the optimal (red) and sub-optimal (blue) latent variables ($Z$), where the entanglement induces OOD action interpolations, hindering clear optimal mode distillation and exacerbating the risk of OOD mode averaging in FQL-style behavior distillation (Figure \ref{fig:vis_toyexp}b).
In contrast, the proposed VRD successfully extract a strictly optimal policy (Figure \ref{fig:vis_toyexp}c, d). Specifically, VRD-Q performs statistical filtering by dynamically truncating distillation weights in low-value latent regions, which severs the gradient flow for sub-optimal actions, forcing the standard Gaussian prior to map exclusively to the optimal mode. Conversely, VRD-R geometrically reshapes the latent topology via value-guided spatial transportation. It attracts optimal latent variables toward the origin while expelling sub-optimal ones to the periphery. By sampling from a low-temperature Gaussian prior during zero-shot inference, VRD-R isolates optimal trajectories, verifying the efficacy of both rectification strategies.


\subsection{Continuous control (D4RL) and complex planning (OGBench) tasks (RQ2)}
\label{subsec:d4rl_results}

\textbf{D4RL}~\citep{fu2020d4rl} features \textbf{MuJoCo} locomotion (HalfCheetah, Hopper, Walker2d) across 'medium' (collected by a partially trained SAC~\citep{haarnoja2018soft}), 'medium-replay' (training history of SAC), and 'medium-expert' (mixed of expert data and 'medium') levels. It also includes \textbf{AntMaze} navigation across umaze, medium, and large layouts, using 'play' (undirected) and 'diverse' (goal-reaching) data to test trajectory stitching; and the 24-DOF \textbf{Adroit} Pen task with 'human' (demos), 'expert' (RL-trained), and 'cloned' (BC-mixed) datasets to evaluate high-dimensional control under narrow distributions.

\begin{table}[ht]
\centering
\caption{Results on D4RL MuJoCo, AntMaze and Adroit Benchmarks. The highest results of each row are bolded. Each result of our method is calculated over 8 seeds. Part of the results were reported in~\citep{park2025flow}.}
\label{tab:antmaze_results_extra_col}
\resizebox{1\linewidth}{!}{%
\begin{tabular}{l|rrr|rrr|rrrrr|rr} 
\toprule
& \multicolumn{3}{c}{\textbf{Gaussian Policies}} & \multicolumn{3}{c}{\textbf{Diffusion Policies}} & \multicolumn{7}{c}{\textbf{Flow Policies}} \\ 
\cmidrule(lr){2-4} \cmidrule(lr){5-7} \cmidrule(lr){8-14} 
\textbf{Task} & BC & IQL & ReBRAC & IDQL & SRPO & CAC & FAWAC & FBRAC & FQL & GFP& DF & \textbf{VRD-Q} & \textbf{VRD-R} \\ 
\midrule
halfcheetah-med-expert & 55.2 & 86.7 & \textbf{101.1} & 95.9 & 92.2 & 84.3 & 44.5 & 54.4 & 86.1 & 92.3 & 87.6 & 93.7 $\pm$ 1.2 & \textbf{101.3} $\pm$ 1.3\\
hopper-med-expert & 52.5 & 91.5 & 107.0 & 108.6 & 100.1 & 100.4 & 88.4 & 76.5 & 105.4 & 110.1 & 112.6 & 96.4 $\pm$ 13.5 & \textbf{113.5} $\pm$ 0.4\\
walker2d-med-expert & 107.5 & 109.6 & 111.6 & 112.7 & 114.0 & 110.4 & 57.3 & 53.5 & 102.7 & 106.4 & 111.3 & 112.1 $\pm$ 2.3 & \textbf{113.7} $\pm$ 0.2\\
halfcheetah-medium & 42.6 & 47.4 & \textbf{65.6} & 51.0 & 60.4 & 69.1 & 48.3 & 53.1 & 60.0 & 62.3 & 49.3 & \textbf{65.7} $\pm$ 1.2 & 61.7 $\pm$ 4.5\\
hopper-medium & 52.9 & 66.3 & \textbf{102.0} & 65.4 & 95.5 & 80.7 & 81.2 & 52.1 & 67.3 & 84.7 & 97.1 & 65.8 $\pm$ 15.8 & 76.3 $\pm$ 12.1 \\
walker2d-medium & 75.3 & 78.3 & 82.5 & 82.5 & 84.4 & 83.1 & 42.1 & 81.4 & 77.2 & 83.1 & 87.2 & 82.3 $\pm$ 4.9 & \textbf{89.7} $\pm$ 6.4\\
halfcheetah-med-replay & 36.6 & 44.2 & 51.0 & 45.9 & 51.4 & \textbf{58.7} & 42.6 & 49.6 & 53.1 & 54.7 & 44.2 & 54.7 $\pm$ 2.7 & 56.6 $\pm$ 2.1\\
hopper-med-replay & 18.1 & 94.7 & 98.1 & 92.1 & 101.2 & 99.7 & 73.4 & 50.8 & 78.4 & 82.1 & \textbf{102.3} & 71.7$\pm$ 15.1 & 96.3 $\pm$ 4.7\\
walker2d-med-replay & 26.0 & 73.9 & 77.3 & 85.1 & 84.6 & 79.5 & 37.5 & 32.2 & 72.9 & 80.1 & 86.6 & 84.4 $\pm$ 7.1 & \textbf{87.6} $\pm$ 5.3\\
\midrule
\textbf{Average (MuJoCo)} & 51.9 & 77.0 & \textbf{88.5} & 82.1 & 87.1 & 85.1 & 57.3 & 56.0  & 78.1 & 84.0 & 86.5 & 80.8 & \textbf{88.5} \\
\midrule
\midrule
antmaze-umaze & 55.0 & 77.0 & 98.0 & 94.0 & 97.0 & 66.0 & 90.0 & 94.0 & 96.0 & 96.8 & 93.7 & 96.7 $\pm$ 0.3 & \textbf{99.3} $\pm$ 0.7\\
antmaze-umaze-diverse & 47.0 & 62.2 & 84.0 & 80.0 & 82.0 & 66.0 & 55.0 & 62.0 & 89.0 & 91.9 & 83.2 & 83.2 $\pm$ 3.1 & \textbf{94.2} $\pm$ 2.1\\
antmaze-medium-play & 0.0 & 71.2 & \textbf{90.0} & 84.0 & 81.0 & 49.0 & 52.0 & 77.0 & 78.0 & 81.9 & 43.8 & 76.7 $\pm$ 9.5 & 87.5 $\pm$ 4.1\\
antmaze-medium-diverse & 1.0 & 70.0 & 84.0 & 85.0 & 75.0 & 0.0 & 44.0 & 60.0 & 71.0 & 61.6 & 40.1 & \textbf{87.1} $\pm$ 3.1 & 83.2 $\pm$ 0.7\\
antmaze-large-play & 0.0 & 39.6 & 52.0 & 64.0 & 54.0 & 0.0 & 10.0 & 32.0 & 84.0 & 82.6 & 47.3 & \textbf{84.4} $\pm$ 1.8 & 78.1 $\pm$ 8.3\\
antmaze-large-diverse & 0.0 & 47.5 & 64.0 & 68.0 & 54.0 & 0.0 & 16.0 & 20.0 & {83.0} & 84.1 & 52.1 & \textbf{85.7} $\pm$ 2.4 & {83.7} $\pm$ 5.7\\
\midrule
\textbf{Average (AntMaze)} & 17.2 & 61.3 & 78.7 & 79.2 & 73.8 & 30.2 & 44.5 & 57.5 & 83.5 & 83.2 & 60.0 & 85.6 & \textbf{87.7} \\
\midrule
\midrule
pen-human & 71 & 78 & 103 & 76 & 69 & 64 & 67 & 77 & 53 & 64.6 & 102.8 & 110.4$\pm$ 2.7 & \textbf{111.3} $\pm$ 4.3 \\
pen-cloned & 52 & 83 & 103 & 64 & 61 & 56 & 62 & 67 & 74 & 77.1 & 104.3 & \textbf{108.1} $\pm$ 3.6 & 106.7 $\pm$ 6.7 \\
pen-expert & 110 & 128 & \textbf{152} & 140 & 134 & 103 & 118 & 119 & 142 & 140.4 & 139.1 & 138.4 $\pm$ 7.3  & 129.4 $\pm$ 12.9 \\
\midrule
\textbf{Average (Adroit)} & 77.7 & 96.3 & \textbf{119.3} & 93.3 & 88.0 & 74.3 & 82.3 & 87.7 & 89.7 & 94.0 & 115.4 & 119.0  & 115.8 \\ \midrule\midrule
\textbf{Average (ALL)} & 44.6 & 75.0 & 90.3 & 83.0 & 82.8 & 65.0 & 57.2 & 61.8 & 81.8 & 85.4 & 82.5 & 88.8 & \textbf{92.8} \\ 
\bottomrule
\end{tabular}
}
\end{table}

As shown in Table \ref{tab:antmaze_results_extra_col}, VRD-R outperforms the competitive baselines over most benchmarks, achieving the highest average score of \textbf{92.8}, while VRD-Q achieves \textbf{88.8}. 
On MuJoCo's hopper-med-expert (multi-modal), VRD-R scores \textbf{113.5} compared to FQL's 105.4, suggesting geometric reshaping effectively isolates expert modes to mitigate mode averaging. For sparse-reward navigation in antmaze-large-diverse, VRD-Q reaches \textbf{85.7} where diffusion policy methods like CAC fail (0.0), implying statistical filtering successfully truncates low-value actions for robust trajectory stitching. Finally, VRD-R achieves \textbf{111.3} on Adroit's pen-human (vs. GFP's 64.6). Overall, value-rectification allows generative policies to stably extract high-value manifolds even from mixed data distributions.

\textbf{OGBench:} To further evaluate our method's scalability to long-horizon and high-dimensional goal-conditioned tasks, we benchmark VRD-Q on 50 distinct tasks from \textbf{OGBench}~\citep{park2024ogbench}. These environments span exceptionally challenging navigation scenarios, including large and giant \textbf{AntMazes}, medium and large \textbf{HumanoidMazes}, and \textbf{AntSoccer}, as well as complex manipulation and spatial reasoning tasks like \textbf{Cube} (single/double), \textbf{Scene}, and sliding \textbf{Puzzles} (3x3/4x4).

\begin{table}[htbp]
\centering
\caption{Results on OGBench Tasks. The highest results of each row are bolded. Each result of our method is calculated over 8 seeds. The full results are shown in Table \ref{tab:ogbench_full} in Appendix, where part of the results were reported in~\citep{park2025flow}.}
\label{tab:ogbench_summary}
\resizebox{\textwidth}{!}{
\begin{tabular}{l| ccc| ccc| ccccc|cc}
\toprule
 & \multicolumn{3}{c}{\textbf{Gaussian Policies}} & \multicolumn{3}{c}{\textbf{Diffusion Policies}} & \multicolumn{7}{c}{\textbf{Flow Policies}} \\
\cmidrule(lr){2-4} \cmidrule(lr){5-7} \cmidrule(lr){8-14}
\textbf{Task} & BC & IQL & ReBRAC & IDQL & SRPO & CAC & FAWAC & FBRAC & IFQL & FQL & GFP & \textbf{VRD-Q} & \textbf{VRD-R} \\
\midrule
{antmaze-large} & 11.0 & 53.0 & 81.0 & 21.0 & 11.0 & 33.0 & 6.0 & 60.0 & 28.0 & 79.0 & 93.8 & \textbf{95.5} $\pm$ 1.2 & \textbf{94.9} $\pm$ 1.9\\
{antmaze-giant} & 0.0 & 4.0 & 26.0 & 0.0 & 0.0 & 0.0 & 0.0 & 4.0 & 3.0 & 9.0 & 27.9 & \textbf{49.8}$\pm$ 5.3 & 38.3 $\pm$ 7.2\\
{humanoidmaze-medium} & 2.0 & 33.0 & 22.0 & 1.0 & 1.0 & 53.0 & 19.0 & 38.0 & 60.0 & 58.0 & 72.0 & \textbf{75.3}$\pm$ 2.9 & 70.3$\pm$ 7.2\\
{humanoidmaze-large} & 1.0 & 2.0 & 2.0 & 1.0 & 0.0 & 0.0 & 0.0 & 2.0 & 11.0 & 4.0 & 17.8 & \textbf{19.7} $ \pm$ 2.9 & 16.8 $\pm$5.9 \\
{antsoccer-arena} & 1.0 & 8.0 & 0.0 & 12.0 & 1.0 & 2.0 & 12.0 & 16.0 & 33.0 & 60.0 & 57.9 & \textbf{63.8}$\pm$ 4.7 & 47.9 $\pm$ 9.7\\
{cube-single} & 5.0 & 83.0 & 91.0 & 95.0 & 80.0 & 85.0 & 81.0 & 79.0 & 79.0 & 96.0 & \textbf{98.8} & \textbf{99.5} $\pm$ 0.4 & 97.7 $\pm$ 0.8\\
{cube-double} & 2.0 & 7.0 & 12.0 & 15.0 & 2.0 & 6.0 & 5.0 & 15.0 & 14.0 & 29.0 & \textbf{47.2} & 45.6 $\pm$ 6.4 & 32.2 $\pm$ 8.4\\
{scene} & 5.0 & 28.0 & 41.0 & 46.0 & 20.0 & 40.0 & 30.0 & 45.0 & 30.0 & \textbf{56.0} & 53.5 & \textbf{55.4} $\pm$ 3.2 & \textbf{55.8} $\pm$ 2.5\\
{puzzle-3x3} & 2.0 & 9.0 & 21.0 & 10.0 & 18.0 & 19.0 & 6.0 & 14.0 & 19.0 & 30.0 & 23.1 & 28.6 $\pm$ 3.8 & \textbf{34.7} $\pm$ 11.6\\
{puzzle-4x4} & 0.0 & 7.0 & 14.0 & 29.0 & 10.0 & 15.0 & 1.0 & 13.0 & 25.0 & 17.0 & 26.1 & \textbf{30.6} $\pm$ 3.5 & 19.8 $\pm$ 6.3 \\
\midrule
\textbf{Average (50 tasks)} & 2.8 & 23.4 & 31.0 & 22.8 & 14.3 & 25.3 & 16.1 & 28.6 & 30.0 & 43.5 & 51.8 & \textbf{56.4} & 50.8 \\
\bottomrule
\end{tabular}
}
\end{table}

As shown in Table \ref{tab:ogbench_summary}, VRD-Q achieves the highest overall average score of \textbf{56.4} across all 50 tasks, demonstrating superior robustness over existing baselines. Notably, in antmaze-giant, VRD-Q (49.8) and VRD-R (38.3) outperform the best Flow baseline, GFP (27.9), while Diffusion-based policies, like CAC, fail (0.0). VRD-Q also performs well over high-dimensional locomotion (humanoidmaze, antsoccer-arena), whereas VRD-R (50.8 average) excels in spatial reasoning tasks like puzzle-3x3 (34.7) and scene (55.8). Ultimately, both VRD-Q's statistical filtering and VRD-R's geometric reshaping effectively mitigate compounding OOD errors, scaling robustly to large-scale challenges where standard unconditional policy distillation methods collapse.



\subsection{Deep dive into multi-modal datasets (RQ3)}
\label{subsec:multimodal_analysis}

To evaluate robustness against multi-modal distributions, we construct mixed datasets combining expert (optimal) and random (sub-optimal) trajectories, as discussed in~\citep{jiang2023recovering,zhang2022state}. Maintaining a 1,000,000-sample buffer, we vary the ratio of random samples from $50\%$ to $90\%$, progressively diminishing the expert mode's presence, which serves as a rigorous stress test to determine if our mechanisms can successfully isolate the vanishing optimal mode from multi-modal data.

\begin{figure}[h]
    \centering
    \includegraphics[width=1\linewidth]{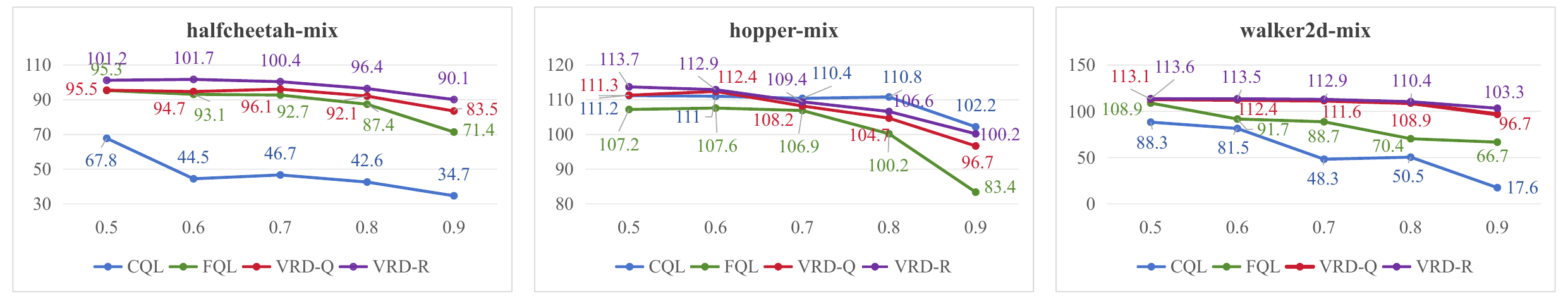}
    \caption{Results over different scales of multi-modal datasets.}
    \label{fig:exp_mixdata}
\end{figure}

From Figure \ref{fig:exp_mixdata}, we observe that both the statistical filtering of VRD-Q and the geometric reshaping of VRD-R exhibit a more gradual performance degradation as the proportion of sub-optimal modes (i.e., the random sample ratio) increases. Specifically, on the 'halfcheetah-mix' and 'walker2d-mix' datasets, both VRD variants significantly outperform the CQL and FQL baselines. This improved robustness is likely because agents in these two environments are more susceptible to overfitting to sub-optimal noisy data, a vulnerability that the proposed VRD methods appear to effectively mitigate.

\subsection{Hyperparameter analysis (RQ4)}
\label{subsec:ablations}

\begin{wraptable}{r}{0.55\textwidth}
    \centering
    \caption{Sensitivity analysis for main hyperparameters $\tau$ of VRD-Q and $\sigma$ of VRD-R ($\lambda=0.05$, full results see Appendix \ref{appendix:ablation_study}).}
    \label{tab:wrap_results}
    \setlength{\tabcolsep}{3pt} 
    \resizebox{\linewidth}{!}{
    \begin{tabular}{l|ccccc|ccccc}
        \toprule
        & \multicolumn{5}{c}{\textbf{VRD-Q} ($\tau$)} & \multicolumn{5}{c}{\textbf{VRD-R} ($\sigma$)} \\
        \cmidrule(lr){2-6} \cmidrule(lr){7-11}
        Values & 0.5 & 0.6 & 0.7 & 0.8 & 0.9 & 1 & 1e-2 & 1e-3 & 1e-4 & 1e-5 \\
        \midrule
        halfcheetah-m & 60.4 & 63.4 & \textbf{65.7} & 65.3 & 58.1 & 45.7 & 60.6 & 60.4 & \textbf{61.7} & 61.2 \\
        hopper-m & 64.3 & 65.1 & \textbf{65.8} & 65.1 & 56.4 & 50.2 & 73.9 & 75.5 & \textbf{76.3} & 74.2 \\
        walker2d-m & 76.5 & 81.7 & \textbf{82.3} & 81.1 & 77.4 & 53.9 & 89.1 & 88.6 & \textbf{89.7} & 88.3 \\
        \bottomrule
    \end{tabular}}
    \vspace{-1em} 
\end{wraptable}

We evaluate the sensitivity of the quantile parameter $\tau$ in VRD-Q and the scale parameter $\sigma$ of latent variables in VRD-R's inference. From Table \ref{tab:wrap_results}, VRD-Q performance follows an inverted U-shape, peaking at $\tau=0.7$ to balance noise exclusion with distribution coverage. VRD-R achieves its optimum at $\sigma=10^{-4}$, where moderate transformation effectively decouples modes without distorting latent topology. We recommend $\tau=0.7$ and $\sigma=10^{-4}$ as default settings. Specifically, $\tau$ should be based on sub-optimal data proportions, while $\sigma$ should be tailored to the specific characteristics of environments.

\textit{Other experimental details see Appendix \ref{appendix:experiments}, e.g., Minari results (Appendix \ref{appendix:minari}), full hyperparameter analysis (Appendix \ref{appendix:ablation_study}) and validation study for core components of VRD (Appendix \ref{appendix:validation}).}

\section{Conclusion}
This paper proposes the Value-Rectified Distillation (VRD) framework, reformulating policy distillation as a value-guided trajectory alignment process to solve the \textit{latent fusion} problem in offline RL over multi-modal dataset. VRD-Q and VRD-R mitigate OOD Mode Averaging problem via statistical filtering and geometric reshaping respectively. Experiments show VRD successfully decouples the high-value behavior mode from the multi-modal data, which enables VRD outperform baselines on various benchmarks and datasets, demonstrating exceptional robustness for sub-optimal data noise.

\paragraph{Limitations (see Appendix \ref{appendix:discussion}).} The proposed VRD relies on the underlying value network's accuracy, where severe Q-value overestimation may mislead the alignment process. Future work could explore more robust value estimation or extend this framework to discrete action spaces. 

\section*{Acknowledgements}
This work was partially supported by National Natural Science Foundation of China (62476128). This work was partially supported by JSPS KAKENHI Grant Number JP26H02498, and JST Moonshot R\&D Program JPMJMS263K.

\newpage
\bibliographystyle{named}
\bibliography{ref}


\newpage
\appendix

\section{Proofs}
\subsection{Proof of Proposition \ref{prop:latentfusion}}\label{appendix:proofofpropositionlatentfusion}

\noindent\textbf{Proposition \ref{prop:latentfusion}.} \textit{Suppose the latent space $\mathcal{Z}$ (support of $z \sim \mathcal{N}(0, I)$) is connected, and the flow mapping $\mu_\theta(s, z)$ is $L$-Lipschitz continuous w.r.t. $z$. Let $\mathcal{Z}^*_Q(s)$, $\mathcal{Z}^-_Q(s)$, and $\mathcal{Z}_{OOD}(s)$ denote the latent preimages of $\mathcal{A}^*_Q(s)$, $\mathcal{A}^-_Q(s)$, and $\bar{\mathcal{A}}(s)$, respectively. If the model expresses both optimal and sub-optimal modes ($\mathcal{Z}^*_Q(s) \neq \emptyset, \mathcal{Z}^-_Q(s) \neq \emptyset$), the following holds:
\textbf{(1)} The optimal latent boundary is non-empty ($\partial \mathcal{Z}^*_Q(s) \neq \emptyset$);
\textbf{(2)} The OOD latent set is strictly non-empty ($\mathcal{Z}_{OOD}(s) \neq \emptyset$);
\textbf{(3)} Any $\epsilon$-neighborhood of a boundary point $z_{bd} \in \partial \mathcal{Z}^*_Q(s)$ intersects non-optimal regions: $\forall \epsilon \!>\! 0, \mathbb{B}_\epsilon(z_{bd}) \cap \big(\mathcal{Z}^-_Q(s) \cup \mathcal{Z}_{OOD}(s)\big) \neq \emptyset$.}

\begin{proof}






Due to the prior latent space $\mathcal{Z} = \mathbb{R}^d$ (often a standard Gaussian $\mathcal{N}(0,I)$) is path-connected, there exists a continuous path $\gamma: [0, 1] \to \mathcal{Z}$ connecting the pre-images:
\begin{equation} \label{eq:path_z}
    \gamma(0) = z^* \in \mu_\theta^{-1}(\mathcal{A}_Q^*), \quad \gamma(1) = z^- \in \mu_\theta^{-1}(\mathcal{A}_Q^-)
\end{equation}
Let the mapped trajectory be $\Lambda = \gamma([0, 1])$. Because $\Lambda$ is a connected set that intersects both the interior of $\mathcal{Z}^*_Q(s)$ and its exterior complement, it cannot be partitioned into two disjoint open sets. By the fundamental property of connected spaces \citep{munkres2014topology} (Section 23, Exercise 6), $\Lambda$ must intersect the topological boundary of $\mathcal{Z}^*_Q(s)$. Therefore, there exists at least one boundary point $z_{bd} \in \partial \mathcal{Z}^*_Q(s)$ along the path, strictly proving \textbf{claim (1)}: $\partial \mathcal{Z}^*_Q(s) \neq \emptyset$.

By composition of continuous functions $\mu_\theta(s,z)$, the mapped trajectory $\Gamma$ in the action space is a connected set~\citep{rudin2021principles} (Theorem 4.22):
\begin{equation} \label{eq:gamma_a}
    \Gamma = \{ \mu_\theta(s, \gamma(t)) \mid t \in [0, 1] \}
\end{equation}

Since $\Gamma$ is connected and connects $\mathcal{A}_Q^*$ and $\mathcal{A}_Q^-$ across a geometric gap $D$, $\Gamma$ cannot be entirely contained within the disconnected support $\mathcal{A}=\mathcal{A}_Q^*\cup\mathcal{A}_Q^-$~\citep{munkres2014topology} (Lemma 23.2). Define the OOD latent set:
\begin{equation} \label{eq:z_ood}
    \mathcal{Z}_{OOD} = \{ z \in \mathcal{Z} \mid \mu_\theta(s, z) \in \bar{\mathcal{A}} \}
\end{equation}
By the Intermediate Value Theorem~\citep{rudin2021principles} (Theorem 4.23) on connected spaces, $\Gamma \cap \bar{\mathcal{A}} \neq \emptyset \implies \mathcal{Z}_{OOD} \neq \emptyset$, hence proving \textbf{claim (2)}.

Then, observe that the action space is partitioned into three mutually disjoint sets: the optimal mode $\mathcal{A}^*_Q(s)$, the sub-optimal mode $\mathcal{A}^-_Q(s)$, and the OOD region $\bar{\mathcal{A}}(s)$. Because the flow mapping $\mu_\theta(s, \cdot)$ is a well-defined function assigning every latent $z$ to exactly one action, its pre-images preserve this disjointness and form a mutually exhaustive partition of the entire latent space $\mathcal{Z}$:
\begin{equation} \label{eq:partition}
    \mathcal{Z} = \mathcal{Z}^*_Q(s) \cup \mathcal{Z}^-_Q(s) \cup \mathcal{Z}_{OOD}(s)
\end{equation}
Consequently, the topological complement of the optimal pre-image in $\mathcal{Z}$ is exactly the union of the remaining regions: $\mathcal{Z} \setminus \mathcal{Z}^*_Q(s) = \mathcal{Z}^-_Q(s) \cup \mathcal{Z}_{OOD}(s)$.

According to the standard neighborhood characterization of a topological boundary \citep{munkres2014topology} (Section 17, Exercise 19(b)), a point $z_{bd}$ belongs to the boundary $\partial \mathcal{Z}^*_Q(s)$ if and only if every open set containing $z_{bd}$ intersects both the set itself and its complement. Therefore, for any arbitrary radius $\epsilon > 0$, the open $\epsilon$-ball $\mathbb{B}_\epsilon(z_{bd})$ must strictly intersect the complement $\mathcal{Z} \setminus \mathcal{Z}^*_Q(s)$~\citep{munkres2014topology} (Theorem 17.5 (a)). Substituting the exact complement derived in Eq.(~\ref{eq:partition}) directly proves \textbf{claim (3)}:
\begin{equation} \label{eq:claim2}
    \forall \epsilon > 0, \quad \mathbb{B}_\epsilon(z_{bd}) \cap \big(\mathcal{Z}^-_Q(s) \cup \mathcal{Z}_{OOD}(s)\big) \neq \emptyset
\end{equation}

Completing the proof.

    
\end{proof}

\subsection{Proof of Theorem \ref{thm:ood_avg}}\label{appendix:proofofthmoodavg}

\noindent\textbf{Theorem \ref{thm:ood_avg}.} \textit{Suppose the Q-function satisfies $\mu$-strong concavity in the region bridging the modes and is $L$-Lipschitz smooth.
Consider the FQL policy update $\pi(s, z)$ which minimizes the regularized objective in Eq.(\ref{eq:fql_actor}). 
Under multi-modal data with gap $D > 0$ as Eq.(\ref{eq:dist_D}),
FQL fails to recover the optimal mode and instead generates OOD actions with a minimum expected error bounded by $\frac{D^2}{4}$.}

\begin{proof}
    We reformulate the objective in Eq.(\ref{eq:fql_actor}) as:
    \begin{align}
        \mathcal{L}_{\pi} = \mathbb{E}_{s \sim \mathcal{D}, z \sim \mathcal{N}(0, I_d)} [ -Q_\phi(s, \pi(s, z)) + \alpha \cdot \|\pi(s, z) - \mu_\theta(s, z)\|_2^2 ]
    \end{align}
    which switches the coefficient $\lambda$ over value to $\alpha$ onto the behavior regularization. 
    Based on the latent entanglement established in Proposition \ref{prop:latentfusion}, consider a latent variable $z \in \mathcal{Z}_Q^-(s)$ whose pre-trained behavior action falls in the sub-optimal mode, i.e., $a_{bc}^- = \mu_\theta(s, z) \in \mathcal{A}^-_Q(s)$. The objective of FQL given state $s$ aims to solve:
    \begin{align}
        &\min_{a} \left( -Q(s, a) + \alpha \|a - \mu_\theta(s, z)\|_2^2 \right)\\
        \Leftrightarrow &\max_{a} \left( Q(s, a) - \alpha \|a - \mu_\theta(s, z)\|_2^2 \right) \label{eq:objmax}
    \end{align}
    Denote the objective function as $\mathcal{J}(a) = Q(s, a) - \alpha \|a - a_{bc}^-\|_2^2$. Assuming $Q$ is twice differentiable, the optimal action generated by FQL, denoted as $a_{FQL}$, must satisfy the first-order optimality condition $\nabla_a\mathcal{J}(a_{FQL})=0$:
    \begin{align}
        \nabla_a Q(s, a_{FQL}) - 2\alpha (a_{FQL} - a_{bc}^-) = 0 \label{eq:exact_opt}
    \end{align}
    Let $a^* \in \mathcal{A}^*_Q(s)$ be the optimal action that maximizes $Q$, meaning $\nabla_aQ(s,a^*)=0$. By the Mean Value Theorem (or assuming locally quadratic $Q$), we can express the gradient using the Hessian matrix $H = \nabla^2_a Q(s, a)$:
    \begin{align}
        \nabla_a Q(s, a_{FQL}) = \nabla_a Q(s, a_{FQL}) - \nabla_a Q(s, a^*) = H (a_{FQL} - a^*) \label{eq:hessian}
    \end{align}
    Plugging Eq.(\ref{eq:hessian}) into Eq.(\ref{eq:exact_opt}), we obtain:
    \begin{align}
        H(a_{FQL} - a^*) - 2\alpha(a_{FQL} - a_{bc}^-) = 0
    \end{align}
    Assuming the Q-function satisfies $\mu$-strong concavity in the region bridging the modes and is $L$-Lipschitz smooth, there exists a symmetric positive-definite matrix $P = -H$ such that $\mu I \preceq P \preceq L I$. Substituting $P$:
    \begin{align}
        &-P(a_{FQL} - a^*) - 2\alpha (a_{FQL} - a_{bc}^-) = \mathbf{0} \nonumber \\
        \Rightarrow \quad & (2\alpha I + P)a_{FQL} = P a^* + 2\alpha a_{bc}^-
    \end{align}
    Because $P \succ 0$ and $2\alpha > 0$, the matrix $(2\alpha I + P)$ is strictly positive-definite and thus invertible. Therefore, we can explicitly solve for $a_{FQL}$:
    \begin{align}
        a_{FQL} = (2\alpha I + P)^{-1}P a^* + (2\alpha I + P)^{-1}(2\alpha I) a_{bc}^-\label{eq:alpha_fql}
    \end{align}
    Let the weight matrix be $M(\alpha) = (2\alpha I + P)^{-1}P$. Recognizing that $M(\alpha) + (2\alpha I + P)^{-1}(2\alpha I) = I$, the FQL action is precisely a matrix-weighted convex combination (interpolation) of the two modes:
    \begin{align}
        a_{FQL} = M(\alpha) a^* + (I - M(\alpha)) a_{bc}^-
    \end{align}
    
    Consider the trajectory of $a_{FQL}$ as the Euclidean regularization coefficient $\alpha \in (0, +\infty)$ varies, forming a continuous parametric curve $\gamma(\alpha) \subset \mathbb{R}^{\text{dim}(\mathcal{A})}$:
    \begin{itemize}
        \item When $\alpha \to 0$ (no behavior regularization), $M(\alpha) \to I$, yielding $\lim_{\alpha \to 0} \gamma(\alpha) = a^* \in \mathcal{A}^*_Q(s)$.
        \item When $\alpha \to +\infty$ (strict behavior cloning), $M(\alpha) \to \mathbf{0}$, yielding $\lim_{\alpha \to \infty} \gamma(\alpha) = a_{bc}^- \in \mathcal{A}^-_Q(s)$.
    \end{itemize}
    
    As established in Proposition \ref{prop:latentfusion}, the multi-modal valid support sets $\mathcal{A}^*_Q(s)$ and $\mathcal{A}^-_Q(s)$ are strictly separated by an Euclidean distance $D > 0$. By the connectedness theorem in topology, the continuous spatial curve $\gamma(\alpha)$ connecting these two separated modes must inherently traverse the low-density OOD vacuum region $\bar{\mathcal{A}}(s) = \{a \mid \pi_\beta(a|s)=0\}$.
    
    Specifically, there exists a regularization strength $\alpha$ where the interpolation is geometrically centered between the modes (e.g., when the eigenvalues of $M(\alpha)$ are roughly $0.5$). At this intersection, the generated action $a_{FQL}$ is positioned in the middle of the physical gap $D$. Consequently, its minimum Euclidean distance to \textit{any} valid action $a \in \mathcal{A}(s)$ is bounded below by $\frac{D}{2}$:
    \begin{align}
        \min_{a \in \mathcal{A}(s)} \|a_{FQL} - a\|_2 \geq \frac{\|a^* - a_{bc}^-\|_2}{2} \geq \frac{D}{2}
    \end{align}
    This strictly implies that the expected minimum squared prediction error incurred by the FQL mode averaging is bounded by:
    \begin{align}
        \min_{a \in \mathcal{A}(s)} \|a_{FQL} - a\|_2^2 \geq \left(\frac{D}{2}\right)^2 = \frac{D^2}{4} > 0
    \end{align}
    This proves that the action-space Euclidean penalty structurally forces FQL to output invalid OOD actions with a strict error lower bound, completing the proof.
\end{proof}

\subsection{Proof of Theorem \ref{thm:optimalsupport}}\label{appendix:proofofoptimalsupport}

\paragraph{Theorem \ref{thm:optimalsupport}.} 
Suppose the critic $Q$ is conservative on out-of-distribution regions such that $\sup_{a \in \overline{\mathcal{A}}(s)} Q(s, a) < \max_{a' \in \mathcal{A}(s)} Q(s, a') - \epsilon_{opt}$. Let $\hat{\pi}^* = \arg\min_\pi [\Psi(\pi; \mu_\theta) - \lambda \cdot \mathbb{E}_{z' \sim \mathcal{P}_{test}}[Q(s, \pi(s, z'))]]$, as in Eq.~(\ref{eq:vrd_objective}), be the learned optimal policy assuming universal approximation. Over the test support $\mathcal{Z}_{test} = \mathrm{supp}(\mathcal{P}_{test})$, if the non-merging alignment kernel $\kappa_Q$ satisfies (i) $\epsilon$-Bounded Suboptimal Leakage: $\int_{\mathcal{Z}_{test}}\int_{\mathcal{Z}_\mu^-} \mathcal{P}_{\mathcal{Z}}(z)\kappa_Q(z, z' \mid \cdot) \, dz dz' \le \epsilon$; and (ii) $C$-Dominant Optimal Coupling: $\int_{\mathcal{Z}_{test}}\int_{\mathcal{Z}_\mu^*} \mathcal{P}_{\mathcal{Z}}(z)\kappa_Q(z, z' \mid \cdot) \, dz dz' \ge C > 0$, then under the kernel-induced marginal distribution over $\mathcal{Z}_{\text{test}}$, the policy guarantees to output optimal actions with probability $\mathbb{P}(\hat{\pi}^*(s, z') \in \mathcal{A}_Q^*(s)) \ge \frac{C}{C+\epsilon}$.

\begin{proof}
To analyze the exact properties of the learned optimal policy $\hat{\pi}^*$ without relying on global unimodal convexity assumptions, we perform variational calculus on the regularized global objective in the function space. The proof proceeds in the following rigorous mathematical steps:

First, we expand the objective functional in Eq.~(\ref{eq:vrd_objective}) into integral forms over the latent priors $\mathcal{P}_{\mathcal{Z}}(z)$ and $\mathcal{P}_{test}(z')$. By applying Fubini-Tonelli's Theorem (~\cite{folland1999real}, Theorem 2.37) to swap the integration order, we isolate the inner pointwise objective for any given $z' \in \mathrm{supp}(\mathcal{P}_{test})$:
\begin{equation}
\label{eq:pointwise_loss}
\mathcal{L}(\pi(s, z')) = \int_{\mathcal{Z}} \mathcal{P}_{\mathcal{Z}}(z)\kappa_Q(z, z' \mid \cdot) \|\mu_\theta(s, z) - \pi(s, z')\|_2^2 \, dz - \lambda \mathcal{P}_{test}(z') Q(s, \pi(s, z'))
\end{equation}
Let $Z_{norm}(z') = \int_{\mathcal{Z}} \mathcal{P}_{\mathcal{Z}}(z)\kappa_Q(z, z' \mid \cdot) \, dz$ denote the marginal coupling density at $z'$, and let $\mathcal{W}(z \mid z') = \mathcal{P}_{\mathcal{Z}}(z)\kappa_Q(z, z' \mid \cdot) / Z_{norm}(z')$ be the induced conditional probability distribution over the source latent space $\mathcal{Z}$. Treating the alignment kernel $\kappa_Q$ as a fixed guidance operator during the variational evaluation step (consistent with the stop-gradient formulation in Section~3.1), dividing Eq.~(\ref{eq:pointwise_loss}) by $Z_{norm}(z') > 0$ shows that the optimal action $\hat{\pi}^*(s, z')$ maximizes the conditionally weighted functional:
\begin{equation}
\label{eq:variational_max}
\hat{\pi}^*(s, z') = \arg\max_{a} \left\{ Q(s, a) - \frac{1}{2\alpha(z')} \int_{\mathcal{Z}} \mathcal{W}(z \mid z') \|\mu_\theta(s, z) - a\|_2^2 \, dz \right\}, \quad \alpha(z') = \frac{\lambda \mathcal{P}_{test}(z')}{2 Z_{norm}(z')}
\end{equation}

Then we evaluate the optimality of $\hat{\pi}^*(s, z')$ strictly within the local fiber supported by $\mathcal{W}(\cdot \mid z')$. Specifically, for any candidate action $a_{cand}$ constructed from the support of $\mathcal{W}(\cdot \mid z')$, comparing the objective value in Eq.~(\ref{eq:variational_max}) at the global maximizer $a = \hat{\pi}^*(s, z')$ with its value at $a = a_{cand}$ yields:
\begin{align}
Q(s, \hat{\pi}^*(s, z')) &- \frac{1}{2\alpha(z')} \int_{\mathcal{Z}} \mathcal{W}(\tilde{z} \mid z') \|\mu_\theta(s, \tilde{z}) - \hat{\pi}^*(s, z')\|_2^2 \, d\tilde{z} \nonumber \\
&\ge Q(s, a_{cand}) - \frac{1}{2\alpha(z')} \int_{\mathcal{Z}} \mathcal{W}(\tilde{z} \mid z') \|\mu_\theta(s, \tilde{z}) - a_{cand}\|_2^2 \, d\tilde{z}
\label{eq:candidate_comparison}
\end{align}

Next, consider drawing $(z, z')$ from the effective joint coupling distribution over $\mathcal{Z} \times \mathrm{supp}(\mathcal{P}_{test})$ with density proportional to $\mathcal{P}_{\mathcal{Z}}(z)\kappa_Q(z, z' \mid \cdot)$. Let $I^* = \int_{\mathrm{supp}(\mathcal{P}_{test})}\int_{\mathcal{Z}_\mu^*} \mathcal{P}_{\mathcal{Z}}(z)\kappa_Q(z, z' \mid \cdot) \, dz dz'$ and $I^- = \int_{\mathrm{supp}(\mathcal{P}_{test})}\int_{\mathcal{Z}_\mu^-} \mathcal{P}_{\mathcal{Z}}(z)\kappa_Q(z, z' \mid \cdot) \, dz dz'$ denote the effective optimal and suboptimal coupling masses over the test support, respectively. By conditions (i)--(ii) ($I^- \le \epsilon$ and $I^* \ge C > 0$) and the monotonicity of $g(x, y) = \frac{x}{x+y}$:
\begin{equation}
\label{eq:prob_z_optimal}
\mathbb{P}(z \in \mathcal{Z}_\mu^*) = \frac{I^*}{I^* + I^-} \ge \frac{C}{C + \epsilon}
\end{equation}

It remains to show that $z \in \mathcal{Z}_\mu^*$ implies $\hat{\pi}^*(s, z') \in \mathcal{A}_Q^*(s)$. Recall from Section~\ref{sec:unified} that the alignment kernel is formulated as a non-merging transport kernel $\kappa_Q(z, z' \mid \cdot) = w(z)\delta(z' - T(z))$ with disjoint target images $T(\mathcal{Z}_\mu^*) \cap T(\mathcal{Z}_\mu^-) = \emptyset$. For any fixed target latent $z' \in \mathrm{supp}(\mathcal{P}_{test})$, the sifting property of the Dirac delta distribution restricts the effective support of $\kappa_Q(\cdot, z' \mid \cdot)$ over the source space $\mathcal{Z}$ strictly to the pre-image fiber:
\begin{equation}
\label{eq:preimage_fiber}
\mathrm{supp}_z(\kappa_Q(\cdot, z' \mid \cdot)) \subseteq T^{-1}(\{z'\}) = \{z \in \mathcal{Z} \mid T(z) = z'\}
\end{equation}
Since $T(\mathcal{Z}_\mu^*) \cap T(\mathcal{Z}_\mu^-) = \emptyset$, the pre-image fiber $T^{-1}(\{z'\})$ in Eq.~(\ref{eq:preimage_fiber}) can intersect at most one of the two subsets:
\begin{itemize}
    \item If $z' \in T(\mathcal{Z}_\mu^*)$, then $z' \notin T(\mathcal{Z}_\mu^-) \Rightarrow T^{-1}(\{z'\}) \cap \mathcal{Z}_\mu^- = \emptyset$, which implies that $\kappa_Q(\cdot, z' \mid \cdot)$ vanishes identically on $\mathcal{Z}_\mu^-$.
    \item If $z' \in T(\mathcal{Z}_\mu^-)$, then $z' \notin T(\mathcal{Z}_\mu^*) \Rightarrow T^{-1}(\{z'\}) \cap \mathcal{Z}_\mu^* = \emptyset$, which implies that $\kappa_Q(\cdot, z' \mid \cdot)$ vanishes identically on $\mathcal{Z}_\mu^*$.
    \item If $z' \notin T(\mathcal{Z}_\mu^*) \cup T(\mathcal{Z}_\mu^-)$, $\kappa_Q(\cdot, z' \mid \cdot)$ vanishes on both subsets.
\end{itemize}
Consequently, for every $z' \in \mathrm{supp}(\mathcal{P}_{test})$, $\kappa_Q(\cdot, z' \mid \cdot)$ vanishes on $\mathcal{Z}_\mu^-$ or on $\mathcal{Z}_\mu^*$, preventing any simultaneous non-zero coupling between optimal and suboptimal modes. Thus, up to a null event, $z \in \mathcal{Z}_\mu^*$ occurs if and only if $z' \in T(\mathcal{Z}_\mu^*)$, meaning $\mathrm{supp}(\mathcal{W}(\cdot \mid z')) \subseteq \mathcal{Z}_\mu^*$.

Crucially, for any $z' \in T(\mathcal{Z}_\mu^*)$, every source latent $\tilde{z} \in \mathrm{supp}(\mathcal{W}(\cdot \mid z'))$ belongs to $\mathcal{Z}_\mu^*$ and satisfies $\mu_\theta(s, \tilde{z}) \in \mathcal{A}_Q^*(s)$, i.e., $Q(s, \mu_\theta(s, \tilde{z})) \ge \max_{a'} Q(s, a') - \epsilon_{opt}$. Let $\bar{\mu}(s, z') = \int_{\mathcal{Z}} \mathcal{W}(\tilde{z} \mid z')\mu_\theta(s, \tilde{z}) \, d\tilde{z}$ denote the conditional optimal barycenter over the fiber $T^{-1}(\{z'\})$. By applying the bias-variance decomposition:
\begin{equation}
\label{eq:bias_variance}
\int_{\mathcal{Z}} \mathcal{W}(\tilde{z} \mid z') \|\mu_\theta(s, \tilde{z}) - a\|_2^2 \, d\tilde{z} = \|a - \bar{\mu}(s, z')\|_2^2 + \mathrm{Var}_{\mathcal{W}}(\mu_\theta)
\end{equation}
to Eq.~(\ref{eq:candidate_comparison}) and evaluating the objective against the candidate action $a_{cand} = \bar{\mu}(s, z')$, the constant variance term $\mathrm{Var}_{\mathcal{W}}(\mu_\theta)$ cancels on both sides. We analyze the two resulting alignment regimes:
\begin{enumerate}
    \item \textbf{Bijective / In-place Alignment (as in VRD-Q, $T(z) = z$):} The pre-image fiber in Eq.~(\ref{eq:preimage_fiber}) is a singleton $T^{-1}(\{z'\}) = \{z'\}$, so $\mathcal{W}(\tilde{z} \mid z') = \delta(\tilde{z} - z')$ is a degenerate point mass at a single optimal action $\bar{\mu}(s, z') = \mu_\theta(s, z') \in \mathcal{A}_Q^*(s)$ with $\mathrm{Var}_{\mathcal{W}}(\mu_\theta) = 0$. This preserves the exact multi-modal geometry of $\mathcal{A}_Q^*(s)$ point-by-point without any inter-mode averaging or convexity assumptions, reducing Eq.~(\ref{eq:candidate_comparison}) directly to:
    \begin{align}
    Q(s, \hat{\pi}^*(s, z')) &\ge Q(s, \mu_\theta(s, z')) + \frac{1}{2\alpha(z')} \|\hat{\pi}^*(s, z') - \mu_\theta(s, z')\|_2^2 \nonumber \\
    &\ge Q(s, \mu_\theta(s, z')) \ge \max_{a'} Q(s, a') - \epsilon_{opt}
    \label{eq:regime_bijective}
    \end{align}
    \item \textbf{Contractive / Reshaping Alignment (as in VRD-R):} When $T$ compresses multiple optimal latents into the same target coordinate $z'$, by the fiber-wise optimal preservation property of the non-merging kernel $\kappa_Q$ (which we explicitly verify for VRD-R under a locally concave optimal mode in Proposition~\ref{prop:vrdr}), the conditional barycenter satisfies $\bar{\mu}(s, z') \in \mathcal{A}_Q^*(s)$. Substituting Eq.~(\ref{eq:bias_variance}) with $a_{cand} = \bar{\mu}(s, z')$ into Eq.~(\ref{eq:candidate_comparison}) yields:
    \begin{align}
    Q(s, \hat{\pi}^*(s, z')) &\ge Q(s, \bar{\mu}(s, z')) + \frac{1}{2\alpha(z')} \|\hat{\pi}^*(s, z') - \bar{\mu}(s, z')\|_2^2 \nonumber \\
    &\ge Q(s, \bar{\mu}(s, z')) \ge \max_{a'} Q(s, a') - \epsilon_{opt}
    \label{eq:regime_contractive}
    \end{align}
\end{enumerate}

In either regime, combining Eq.~(\ref{eq:regime_bijective}) and Eq.~(\ref{eq:regime_contractive}), the event $z \in \mathcal{Z}_\mu^*$ (equivalently $z' \in T(\mathcal{Z}_\mu^*)$) directly guarantees $Q(s, \hat{\pi}^*(s, z')) \ge \max_{a' \in \mathcal{A}(s)} Q(s, a') - \epsilon_{opt}$. Furthermore, by the conservatism of the critic $Q$ on out-of-distribution regions ($\sup_{a \in \overline{\mathcal{A}}(s)} Q(s, a) < \max_{a' \in \mathcal{A}(s)} Q(s, a') - \epsilon_{opt}$), this value lower bound strictly precludes $\hat{\pi}^*(s, z')$ from falling into $\overline{\mathcal{A}}(s)$, thereby guaranteeing $\hat{\pi}^*(s, z') \in \mathcal{A}(s)$ and hence $\hat{\pi}^*(s, z') \in \mathcal{A}_Q^*(s)$. Combined with Eq.~(\ref{eq:prob_z_optimal}), we obtain:
\begin{equation}
\label{eq:final_prob_bound}
\mathbb{P}(\hat{\pi}^*(s, z') \in \mathcal{A}_Q^*(s)) \ge \mathbb{P}(z \in \mathcal{Z}_\mu^*) \ge \frac{C}{C + \epsilon}
\end{equation}
Completing the proof.
\end{proof}

\subsection{Proof of Proposition \ref{prop:vrdq}}\label{appendix:proofofvrdq}

\paragraph{Proposition \ref{prop:vrdq}.} \textit{Let $P^{*}$ and $P^{-}$ be the prior masses of $\mathcal{Z}_{\mu}^{*}$ and $\mathcal{Z}_{\mu}^{-}$, respectively. 
For given constants $\epsilon > 0$ and $C > 0$, if the quantile threshold $\tau \in (0.5, 1)$ satisfies 
$\tau P^{*} \ge C$ and $(1-\tau)P^{-} + (2\tau-1)P^{-}_{\text{band}}(\tau) \le \epsilon$ 
(where $P^{-}_{\text{band}}(\tau)$ denotes the prior mass of the boundary band near $\mathcal{A}_{Q}^{*}$): (a) The VRD-Q kernel satisfies the coupling conditions of Theorem~\ref{thm:optimalsupport}, guaranteeing 
$\mathbb{P}_{\kappa_{Q}}(\hat{\pi}^{*}(s,z) \in \mathcal{A}_{Q}^{*}(s)) \ge \frac{C}{C+\epsilon}$ 
under the kernel-induced marginal distribution; (b) Under the standard prior $z \sim \mathcal{N}(0, I_{d})$ at inference, 
$\mathbb{P}_{z}(\hat{\pi}^{*}(s,z) \in \mathcal{A}_{Q}^{*}(s)) \ge P^{*} + P^{-}_{\text{rect}}(\tau)$, 
where $P^{-}_{\text{rect}}(\tau)$ is the rectified prior mass.}

\begin{proof}
Recall the VRD-Q kernel with optimality margin:
\begin{equation}
\kappa_{Q}(z, z^{\prime} \mid s, \mu_{\theta}, \pi)
= \underbrace{\Big|\tau - \mathbb{I}\big(Q(s, \mu_{\theta}(s, z)) < Q(s, \pi(s, z)) - \epsilon_{\text{opt}}\big)\Big|}_{\omega_{\tau}(z)}
\cdot \delta(z^{\prime} - z).
\label{eq:prop2_kernel}
\end{equation}
Since the weight depends on $\pi$, we use the stop-gradient fixed point of Theorem~\ref{thm:optimalsupport}: $\hat{\pi}^{*}$ is a variational
optimum if, for every $z$, $\hat{\pi}^{*}(s,z)$ maximizes the pointwise objective of Eq.~(\ref{eq:variational_max}) with $\omega_{\tau}(z)$
evaluated at $\pi = \hat{\pi}^{*}$. Because $T(z) = z$ and $\mathcal{P}_{\text{test}}(z) = \mathcal{P}_{\mathcal{Z}}(z)$, the penalty coefficient $\frac{1}{2\alpha(z)}$ in Eq.~(\ref{eq:variational_max}) equals $\alpha_{\text{reg}}(z) := \omega_{\tau}(z)/\lambda$, reducing Eq.~(\ref{eq:variational_max}) to
\begin{equation}
\hat{\pi}^{*}(s,z) = \arg\max_{a}\Big[Q(s,a) - \alpha_{\text{reg}}(z)\,\|a - \mu_{\theta}(s,z)\|_{2}^{2}\Big],
\qquad \alpha_{\text{reg}}(z) = \omega_{\tau}(z)/\lambda .
\label{eq:prop2_pointwise}
\end{equation}

\textbf{Step 1: Closed form and improvement under the quadratic model.}
Write $v = \mu_{\theta}(s,z) - a^{*}$. For any regularization strength $\alpha > 0$, the first-order condition of Eq.~\eqref{eq:prop2_pointwise},
$-P(a-a^{*}) - 2\alpha(a-\mu_{\theta}(s,z)) = 0$, gives (cf.\ Eq.~(\ref{eq:alpha_fql}))
\begin{equation}
\hat{\pi}^{*}(s,z) - a^{*} = (I - M(\alpha))\,v, \qquad I - M(\alpha) = 2\alpha(2\alpha I + P)^{-1}.
\label{eq:prop2_closed}
\end{equation}
Let $p_{i} \in [\mu_{Q}, L]$ be the eigenvalues of $P$, $v_{i}$ the coordinates of $v$ in its eigenbasis, and
$r_{i}(\alpha) = \frac{2\alpha}{2\alpha + p_{i}} \in (0,1)$. The value improvement of the actor over the behavior action is
\begin{equation}
\Delta(\alpha) := Q(s,\hat{\pi}^{*}(s,z)) - Q(s,\mu_{\theta}(s,z)) = \tfrac{1}{2}\sum_{i} p_{i}\big(1 - r_{i}(\alpha)^{2}\big)v_{i}^{2}.
\label{eq:prop2_delta}
\end{equation}
Each $r_{i}(\alpha)$ increases with $\alpha$, so $\Delta(\alpha)$ is strictly decreasing in $\alpha$ whenever $v \neq 0$. Moreover,
$r_{i}(\alpha) \le \frac{2\alpha}{2\alpha + \mu_{Q}}$, hence $1 - r_{i}(\alpha)^{2} \ge c(\alpha)$ and
\begin{equation}
\Delta(\alpha) \ge c(\alpha)\cdot\tfrac{1}{2}v^{\top}P v = c(\alpha)\big(Q_{\max}(s) - Q(s,\mu_{\theta}(s,z))\big).
\label{eq:prop2_delta_lb}
\end{equation}
Note that the indicator in Eq.~\eqref{eq:prop2_kernel} equals $1$ if and only if $\Delta(\alpha_{\text{reg}}(z)) > \epsilon_{\text{opt}}$.

\textbf{Step 2: Weights on $\mathcal{Z}_{\mu}^{*}$.}
By critic conservatism, $\sup_{a \in \overline{\mathcal{A}}(s)} Q(s,a) < Q_{\max}(s) - \epsilon_{\text{opt}}$, and
$Q(s,a) \le Q_{\max}(s)$ on $\mathcal{A}(s)$; hence $\sup_{a} Q(s,a) = Q_{\max}(s)$. For any $z \in \mathcal{Z}_{\mu}^{*}$
and any policy $\pi$,
\begin{equation}
Q(s,\pi(s,z)) - \epsilon_{\text{opt}} \le Q_{\max}(s) - \epsilon_{\text{opt}} \le Q(s,\mu_{\theta}(s,z)),
\end{equation}
so the indicator vanishes and $\omega_{\tau}(z) = \tau$ for every $z \in \mathcal{Z}_{\mu}^{*}$, independently of $\pi$.
Since $\omega_{\tau}(z) > 0$, Eq.~(\ref{eq:regime_bijective}) of Theorem~\ref{thm:optimalsupport} gives
$Q(s,\hat{\pi}^{*}(s,z)) \ge Q(s,\mu_{\theta}(s,z)) \ge Q_{\max}(s) - \epsilon_{\text{opt}}$; by conservatism
$\hat{\pi}^{*}(s,z) \notin \overline{\mathcal{A}}(s)$, hence $\hat{\pi}^{*}(s,z) \in \mathcal{A}_{Q}^{*}(s)$ for every
$z \in \mathcal{Z}_{\mu}^{*}$.

\textbf{Step 3: Weights on $\mathcal{Z}_{\mu}^{-}$.}
Let $z \in \mathcal{Z}_{\mu}^{-} \setminus \mathcal{Z}^{-}_{\text{band}}(\tau)$, i.e.,
$Q_{\max}(s) - Q(s,\mu_{\theta}(s,z)) > \epsilon_{\text{opt}}/c(\alpha_{\tau})$. By Eq.~\eqref{eq:prop2_delta_lb},
$\Delta(\alpha_{\tau}) > \epsilon_{\text{opt}}$. We check both candidate weights:
\begin{itemize}
\item If $\omega_{\tau}(z) = \tau$, then $\alpha_{\text{reg}}(z) = \alpha_{\tau}$ and $\Delta(\alpha_{\tau}) > \epsilon_{\text{opt}}$, so the
indicator equals $1$ and $\omega_{\tau}(z) = 1-\tau$, a contradiction.
\item If $\omega_{\tau}(z) = 1-\tau$, then $\alpha_{\text{reg}}(z) = (1-\tau)/\lambda < \alpha_{\tau}$ since $\tau > 0.5$. By monotonicity,
$\Delta((1-\tau)/\lambda) \ge \Delta(\alpha_{\tau}) > \epsilon_{\text{opt}}$, so the indicator equals $1$, which is consistent.
\end{itemize}
Hence $\omega_{\tau}(z) = 1-\tau$ is the unique consistent weight on
$\mathcal{Z}_{\mu}^{-} \setminus \mathcal{Z}^{-}_{\text{band}}(\tau)$. On the band $\mathcal{Z}^{-}_{\text{band}}(\tau)$ the
improvement may not exceed the margin, and we only use the bound $\omega_{\tau}(z) \le \tau$.

\textbf{Step 4: Coupling conditions of Theorem 2.}
Since $\mathcal{Z}_{\text{test}} = \mathcal{Z}$, $\int_{\mathcal{Z}_{\text{test}}} \delta(z^{\prime} - z)\,dz^{\prime} = 1$, and
\begin{align}
I^{*} &= \int_{\mathcal{Z}_{\mu}^{*}} \mathcal{P}_{\mathcal{Z}}(z)\,\omega_{\tau}(z)\,dz = \tau P^{*} \ge C, \\
I^{-} &= \int_{\mathcal{Z}_{\mu}^{-}} \mathcal{P}_{\mathcal{Z}}(z)\,\omega_{\tau}(z)\,dz
\le (1-\tau)\big(P^{-} - P^{-}_{\text{band}}(\tau)\big) + \tau P^{-}_{\text{band}}(\tau)\nonumber\\
&= (1-\tau)P^{-} + (2\tau-1)P^{-}_{\text{band}}(\tau) \le \epsilon .
\end{align}
Since $T(z) = z$, $T(\mathcal{Z}_{\mu}^{*}) \cap T(\mathcal{Z}_{\mu}^{-}) = \emptyset$ and every fiber is a singleton, so the
kernel is non-merging and fiber-wise optimal preservation holds trivially (regime 1 in the proof of Theorem~\ref{thm:optimalsupport}).
Theorem~\ref{thm:optimalsupport} then yields
$\mathbb{P}_{\kappa_{Q}}(\hat{\pi}^{*}(s,z) \in \mathcal{A}_{Q}^{*}(s)) \ge I^{*}/(I^{*}+I^{-}) \ge \frac{C}{C+\epsilon}$,
which proves (a).

\textbf{Step 5: Inference under the standard prior.}
At inference, $z \sim \mathcal{N}(0,I_{d})$ is not reweighted by $\omega_{\tau}$.
By Step~2, every $z \in \mathcal{Z}_{\mu}^{*}$ yields an action in $\mathcal{A}_{Q}^{*}(s)$, contributing mass $P^{*}$.
For $z \in \mathcal{Z}_{\mu}^{-} \setminus \mathcal{Z}^{-}_{\text{band}}(\tau)$, Step~3 gives $\omega_{\tau}(z) = 1-\tau$, and
Eq.~\eqref{eq:prop2_closed} with $\alpha^{\prime}_{\tau} = (1-\tau)/\lambda$ gives
$\hat{\pi}^{*}(s,z) = M(\alpha^{\prime}_{\tau})a^{*} + (I - M(\alpha^{\prime}_{\tau}))\mu_{\theta}(s,z)$. Define the rectified set
\begin{align}
\mathcal{Z}^{-}_{\text{rect}}(\tau) &= \Big\{z \in \mathcal{Z}_{\mu}^{-} \setminus \mathcal{Z}^{-}_{\text{band}}(\tau) :
\big\|(I - M(\alpha^{\prime}_{\tau}))(\mu_{\theta}(s,z) - a^{*})\big\|_{2} \le \sqrt{2\epsilon_{\text{opt}}/L}\Big\},\nonumber\\
P^{-}_{\text{rect}}(\tau) &= \int_{\mathcal{Z}^{-}_{\text{rect}}(\tau)} \mathcal{P}_{\mathcal{Z}}(z)\,dz .
\label{eq:prop2_rect}
\end{align}
For $z \in \mathcal{Z}^{-}_{\text{rect}}(\tau)$, $P \preceq LI$ gives
$Q_{\max}(s) - Q(s,\hat{\pi}^{*}(s,z)) = \frac{1}{2}(\hat{\pi}^{*}-a^{*})^{\top}P(\hat{\pi}^{*}-a^{*})
\le \frac{L}{2}\|\hat{\pi}^{*}-a^{*}\|_{2}^{2} \le \epsilon_{\text{opt}}$, and conservatism gives
$\hat{\pi}^{*}(s,z) \in \mathcal{A}_{Q}^{*}(s)$. Latents in the band and the remaining latents of $\mathcal{Z}_{\mu}^{-}$
are conservatively counted as failures. Therefore
\begin{equation}
\mathbb{P}_{z \sim \mathcal{N}(0,I_{d})}\big(\hat{\pi}^{*}(s,z) \in \mathcal{A}_{Q}^{*}(s)\big) \ge P^{*} + P^{-}_{\text{rect}}(\tau),
\end{equation}
which proves (b). Completing the proof.
\end{proof}

\subsection{Proof of Proposition \ref{prop:vrdr}}\label{appendix:proofofvrdr}

\textbf{Proposition~\ref{prop:vrdr}.} \textit{Suppose the critic $Q$ satisfies the conservatism condition in Theorem \ref{thm:optimalsupport}, and the optimal action set $\mathcal{A}_{Q}^{*}(s)$ forms a convex dominant mode over which $Q(s, \cdot)$ is concave. During inference, let $z^{\prime} \sim \mathcal{N}(0, \sigma^{2} I)$ be truncated to $\mathcal{Z}_{\text{test}} = \{z^{\prime} \in \mathcal{Z} \mid \|z^{\prime}\|_{2} \le \sigma\}$. If the truncation radius satisfies $\sigma < \epsilon_{\text{opt}} / (Q_{\max}(s) - Q_{\min}(s) + \epsilon_{m})$, where $\epsilon_{\text{opt}}$ is the optimality relaxation factor in Eq.~(\ref{eq:optimalaction}), then $\kappa_{Q}$ in Eq.~(\ref{eq:vrdr_kernel}) fulfills all conditions of Theorem \ref{thm:optimalsupport} with zero suboptimal leakage ($I^{-} = 0$), guaranteeing $\mathbb{P}(\hat{\pi}^{*}(s, z^{\prime}) \in \mathcal{A}_{Q}^{*}(s)) = 1$ under the kernel-induced marginal distribution.}

\begin{proof}
Recall the definition of the transport mapping for VRD-R in Eq.~(\ref{eq:vrdr_kernel}) and Eq.~(\ref{eq:vrdr_reshaping}). For any $z \in \mathcal{Z} \setminus \{0\}$ (where $\{0\}$ has Lebesgue and Gaussian measure zero), the source latent variable $z$ is mapped to the target coordinate $z_{Q}(s, z)$:
\begin{equation}
z_{Q}(s, z) = \Phi_{Q}(s, z) \cdot \frac{z}{\|z\|_{2}}, \quad \text{where} \quad \Phi_{Q}(s, z) = \frac{Q_{\max}(s) - Q(s, \mu_{\theta}(s, z))}{Q_{\max}(s) - Q_{\min}(s) + \epsilon_{m}}.
\label{eq:proof_prop3_map}
\end{equation}
Since $Q(s, \mu_{\theta}(s, z)) \le Q_{\max}(s)$ holds on $\mathcal{A}(s)$ by definition and on $\bar{\mathcal{A}}(s)$ by the critic conservatism assumption ($\sup_{a \in \bar{\mathcal{A}}(s)} Q(s, a) < Q_{\max}(s) - \epsilon_{\text{opt}}$), we have $\Phi_{Q}(s, z) \ge 0$ everywhere. Thus, the Euclidean norm of the mapped vector is strictly $\|z_{Q}(s, z)\|_{2} = \Phi_{Q}(s, z)$, and the alignment kernel is given by $\kappa_{Q}(z, z^{\prime} \mid \cdot) = \delta(z^{\prime} - z_{Q}(s, z))$.

According to Eq.~(\ref{eq:optimalaction}), Definition \ref{def:optimal_lat_sub}, and the critic conservatism assumption in Theorem \ref{thm:optimalsupport}, for any latent variable $z \in \mathcal{Z}_{\mu}^{-} = \mathcal{Z} \setminus \mathcal{Z}_{\mu}^{*}$ (which includes both suboptimal in-distribution latents with $\mu_{\theta}(s, z) \in \mathcal{A}_{Q}^{-}(s)$ and OOD latents with $\mu_{\theta}(s, z) \in \bar{\mathcal{A}}(s)$), the generated behavior action strictly fails the optimality threshold:
\begin{equation}
Q(s, \mu_{\theta}(s, z)) < Q_{\max}(s) - \epsilon_{\text{opt}}, \quad \forall z \in \mathcal{Z}_{\mu}^{-}.
\label{eq:proof_prop3_subopt_q}
\end{equation}
Substituting Eq.~\eqref{eq:proof_prop3_subopt_q} into the numerator of $\Phi_{Q}(s, z)$ in Eq.~\eqref{eq:proof_prop3_map}, we obtain a strict analytical lower bound for the mapping radius of all non-optimal latents:
\begin{equation}
\|z_{Q}(s, z)\|_{2} = \Phi_{Q}(s, z) > \frac{Q_{\max}(s) - (Q_{\max}(s) - \epsilon_{\text{opt}})}{Q_{\max}(s) - Q_{\min}(s) + \epsilon_{m}} = \frac{\epsilon_{\text{opt}}}{Q_{\max}(s) - Q_{\min}(s) + \epsilon_{m}} \triangleq \delta.
\label{eq:proof_prop3_delta}
\end{equation}
This implies that under the geometric reshaping of VRD-R, all suboptimal and OOD behaviors in $\mathcal{Z}_{\mu}^{-}$ are strictly repelled to the exterior region with radius $\|z_{Q}(s, z)\|_{2} > \delta$.

\textbf{Verification of Theorem \ref{thm:optimalsupport} Condition (i): $\epsilon$-Bounded Suboptimal Leakage.} During inference, the test prior $\mathcal{P}_{\text{test}}$ is supported on the closed ball $\mathcal{Z}_{\text{test}} = \{z^{\prime} \in \mathcal{Z} \mid \|z^{\prime}\|_{2} \le \sigma\}$. Given the condition $\sigma < \delta$, any $z^{\prime} \in \mathcal{Z}_{\text{test}}$ satisfies:
\begin{equation}
\|z^{\prime}\|_{2} \le \sigma < \delta.
\end{equation}
However, by Eq.~\eqref{eq:proof_prop3_delta}, every $z \in \mathcal{Z}_{\mu}^{-}$ satisfies $\|z_{Q}(s, z)\|_{2} > \delta > \sigma$. Consequently, for all $z \in \mathcal{Z}_{\mu}^{-}$, the target image $z_{Q}(s, z)$ falls strictly outside $\mathcal{Z}_{\text{test}}$, so the Dirac delta measure vanishes identically over $\mathcal{Z}_{\text{test}}$:
\begin{equation}
I^{-} = \int_{\mathcal{Z}_{\text{test}}} \int_{\mathcal{Z}_{\mu}^{-}} \mathcal{P}_{\mathcal{Z}}(z) \kappa_{Q}(z, z^{\prime} \mid \cdot) \, dz \, dz^{\prime} = \int_{\mathcal{Z}_{\mu}^{-}} \mathcal{P}_{\mathcal{Z}}(z) \left( \int_{\|z^{\prime}\|_{2} \le \sigma} \delta(z^{\prime} - z_{Q}(s, z)) \, dz^{\prime} \right) dz = 0.
\label{eq:proof_prop3_i_minus}
\end{equation}
Thus, Condition (i) of Theorem \ref{thm:optimalsupport} ($I^{-} \le \epsilon$) holds trivially for any $\epsilon \ge 0$, achieving exact zero suboptimal leakage ($I^{-} = 0$).

\textbf{Verification of Theorem \ref{thm:optimalsupport} Condition (ii): $C$-Dominant Optimal Coupling.} Conversely, for any optimal latent $z \in \mathcal{Z}_{\mu}^{*}$, Definition \ref{def:optimal_lat_sub} ensures $Q(s, \mu_{\theta}(s, z)) \ge Q_{\max}(s) - \epsilon_{\text{opt}}$. Substituting this into Eq.~\eqref{eq:proof_prop3_map} yields the upper bound on its mapped radius:
\begin{equation}
\|z_{Q}(s, z)\|_{2} = \Phi_{Q}(s, z) \le \delta, \quad \forall z \in \mathcal{Z}_{\mu}^{*}.
\label{eq:proof_prop3_opt_radius}
\end{equation}
That is, all optimal behaviors are compressed into the closed ball of radius $\delta$ centered at the origin, with peak-value actions ($Q(s, \mu_{\theta}(s, z)) = Q_{\max}(s)$) mapping directly to the origin ($\|z_{Q}(s, z)\|_{2} = 0$). Because the flow policy $\mu_{\theta}(s, z)$ and critic $Q(s, \cdot)$ are continuous and $\mu_{\theta}$ captures the optimal mode where $\max_{z \in \mathcal{Z}} Q(s, \mu_{\theta}(s, z)) = Q_{\max}(s)$, the pre-image of the test ball $\mathcal{Z}_{\text{test}} = \{z^{\prime} \mid \|z^{\prime}\|_{2} \le \sigma\}$ for any $\sigma > 0$ contains a non-empty open neighborhood in $\mathcal{Z}_{\mu}^{*}$. Since the Gaussian prior $\mathcal{P}_{\mathcal{Z}}$ has strictly positive density everywhere on $\mathcal{Z}$, integrating over $\mathcal{Z}_{\text{test}}$ yields a strictly positive optimal coupling mass:
\begin{equation}
I^{*} = \int_{\mathcal{Z}_{\text{test}}} \int_{\mathcal{Z}_{\mu}^{*}} \mathcal{P}_{\mathcal{Z}}(z) \delta(z^{\prime} - z_{Q}(s, z)) \, dz \, dz^{\prime} = \int_{\{z \in \mathcal{Z}_{\mu}^{*} \mid \|z_{Q}(s, z)\|_{2} \le \sigma\}} \mathcal{P}_{\mathcal{Z}}(z) \, dz \triangleq C > 0.
\label{eq:proof_prop3_i_star}
\end{equation}

\textbf{Verification of Mode Separation and Fiber-wise Optimal Preservation.} For the transport map $T(z) = z_{Q}(s, z)$, Eq.~\eqref{eq:proof_prop3_delta} shows $T(\mathcal{Z}_{\mu}^{-}) \subseteq \{z^{\prime} \in \mathcal{Z} \mid \|z^{\prime}\|_{2} > \delta\}$, whereas Eq.~\eqref{eq:proof_prop3_opt_radius} shows $T(\mathcal{Z}_{\mu}^{*}) \subseteq \{z^{\prime} \in \mathcal{Z} \mid \|z^{\prime}\|_{2} \le \delta\}$. Therefore, their target images are strictly disjoint:
\begin{equation}
T(\mathcal{Z}_{\mu}^{*}) \cap T(\mathcal{Z}_{\mu}^{-}) = \emptyset,
\end{equation}
which verifies the mode separation condition of Theorem \ref{thm:optimalsupport}.

Furthermore, for any $z^{\prime} \in T(\mathcal{Z}_{\mu}^{*}) \cap \mathcal{Z}_{\text{test}}$, its pre-image fiber $T^{-1}(\{z^{\prime}\}) = \{z \in \mathcal{Z} \mid z_{Q}(s, z) = z^{\prime}\}$ is contained entirely within $\mathcal{Z}_{\mu}^{*}$. Thus, the induced conditional distribution $\mathcal{W}(\tilde{z} \mid z^{\prime})$ is supported exclusively on $\mathcal{Z}_{\mu}^{*}$, meaning $\mu_{\theta}(s, \tilde{z}) \in \mathcal{A}_{Q}^{*}(s)$ for all $\tilde{z} \in \text{supp}(\mathcal{W}(\cdot \mid z^{\prime}))$. Since $\mathcal{A}_{Q}^{*}(s) \subseteq \mathcal{A}(s)$ is a convex set by assumption, the conditional barycenter $\bar{\mu}(s, z^{\prime}) = \int_{\mathcal{Z}} \mathcal{W}(\tilde{z} \mid z^{\prime}) \mu_{\theta}(s, \tilde{z}) \, d\tilde{z}$ is a convex combination of points in $\mathcal{A}_{Q}^{*}(s)$ and therefore remains strictly within the valid in-distribution set: $\bar{\mu}(s, z^{\prime}) \in \mathcal{A}_{Q}^{*}(s) \subseteq \mathcal{A}(s)$. Moreover, by the concavity of $Q(s, \cdot)$ on $\mathcal{A}_{Q}^{*}(s)$ and Jensen's inequality:
\begin{equation}
Q(s, \bar{\mu}(s, z^{\prime})) \ge \int_{\mathcal{Z}} \mathcal{W}(\tilde{z} \mid z^{\prime}) Q(s, \mu_{\theta}(s, \tilde{z})) \, d\tilde{z} \ge Q_{\max}(s) - \epsilon_{\text{opt}},
\label{eq:proof_prop3_jensen}
\end{equation}
confirming $\bar{\mu}(s, z^{\prime}) \in \mathcal{A}_{Q}^{*}(s)$ and fulfilling the fiber-wise optimal preservation condition required by Eq.~(\ref{eq:regime_contractive}) in Theorem \ref{thm:optimalsupport}.

\textbf{Conclusion.} Combining $I^{-} = 0$ (Eq.~\eqref{eq:proof_prop3_i_minus}) and $I^{*} = C > 0$ (Eq.~\eqref{eq:proof_prop3_i_star}) with Eq.~(\ref{eq:prob_z_optimal}) and Eq.~(\ref{eq:final_prob_bound}) of Theorem \ref{thm:optimalsupport}, we obtain:
\begin{equation}
\mathbb{P}(\hat{\pi}^{*}(s, z^{\prime}) \in \mathcal{A}_{Q}^{*}(s)) \ge \frac{I^{*}}{I^{*} + I^{-}} = \frac{C}{C + 0} = 1,
\end{equation}
which guarantees that under the kernel-induced marginal distribution over $\mathcal{Z}_{\text{test}}$, the VRD-R policy outputs optimal in-distribution actions within $\mathcal{A}_{Q}^{*}(s)$ with probability $1$, theoretically eliminating OOD Mode Averaging.
\end{proof}

\section{Algorithms}\label{appendix:algs}

\begin{algorithm}
\caption{Quantile-based Value-Rectified Distillation (VRD-Q) Algorithm}
\label{alg:VRDq_summary}
\begin{algorithmic}[1]
\REQUIRE Pre-trained flow policy $\mu_\theta$, Q-function evaluator $Q$, static dataset $\mathcal{D}$, base prior $\mathcal{Z}$, target actor policy $\pi$
\FOR{each batch $s \sim \mathcal{D}$, $z \sim \mathcal{Z}$}
    \STATE \textbf{1.} Calculate quantile-aware weight: \\
    \quad $\omega_\tau(z) = |\tau - \mathbb{I}(Q(s, \mu_\theta(s, z)) < Q(s, \pi(s, z)) - \epsilon_{\text{opt}})|$
    \STATE \textbf{2.} Construct in-place alignment kernel: \\
    \quad $\kappa_Q(z, z'|s, \mu_\theta, \pi) = \omega_\tau(z) \cdot \delta(z' - z)$
    \STATE \textbf{3.} Minimize Quantile-Rectified Loss to update $\pi$: \\
    \quad $\mathcal{L}_\pi = \mathbb{E} \left[\omega_\tau(z) \cdot ||\mu_\theta(s, z) - \pi(s, z)||_2^2  -  \lambda \cdot\mathbb{E}_{z' \sim \mathcal{P}_{test}}[Q(s, \pi(s, z'))]\right]$
\ENDFOR
\STATE \textbf{Zero-Shot Inference (VRD-Q):} Sample $z \sim \mathcal{N}(0, I_d)$, output $a = \pi^*(s, z)$
\end{algorithmic}
\end{algorithm}

\begin{algorithm}
\caption{Reshaping-based Value-Rectified Distillation (VRD-R) Algorithm}
\label{alg:VRDr_summary}
\begin{algorithmic}[1]
\REQUIRE Pre-trained flow policy $\mu_\theta$, Q-function evaluator $Q$, static dataset $\mathcal{D}$, base prior $\mathcal{Z}$, target actor policy $\pi$
\FOR{each batch $s \sim \mathcal{D}$, $z \sim \mathcal{Z}$}
    \STATE \textbf{1.} Calculate normalized ranking scalar: \\
    \quad $\Phi_Q(s, z) = \frac{Q_{max}(s) - Q(s, \mu_\theta(s, z))}{Q_{max}(s) - Q_{min}(s) + \epsilon_m}$
    \STATE \textbf{2.} Apply value-guided reshaping function (gravitational field towards origin): \\
    \quad $z_Q(s, z, \mu_\theta) = \Phi_Q(s, z) \cdot \frac{z}{||z||_2}$
    \STATE \textbf{3.} Construct transport alignment kernel: \\
    \quad $\kappa_Q(z, z'|s, \mu_\theta, \pi) = 1 \cdot \delta(z' - z_Q(s, z, \mu_\theta))$
    \STATE \textbf{4.} Minimize Reshaping-Rectified Loss to update $\pi$: \\
    \quad $\mathcal{L}_\pi = \mathbb{E} \left[||\mu_\theta(s, z) - \pi(s, z_Q(s, z, \mu_\theta))||_2^2  -  \lambda \cdot \mathbb{E}_{z' \sim \mathcal{P}_{test}}[Q(s, \pi(s, z'))]\right]$
\ENDFOR
\STATE \textbf{Zero-Shot Inference (VRD-R):} Sample $z' \sim \tilde{\mathcal{N}}(0, \sigma^2 I_d)$ (where $\sigma \ll 1$), output $a = \pi^*(s, z')$
\end{algorithmic}
\end{algorithm}

\section{External experimental details}\label{appendix:experiments}

\subsection{Full results over OGBenchmarks (50 tasks)}

\textbf{Dataset Overview.} OGBench (Offline Goal-Conditioned RL Benchmark) is a large-scale evaluation suite comprising 50 tasks across 10 diverse domains, including complex navigation (e.g., \textit{antmaze-giant}, \textit{humanoidmaze}), robotic manipulation (\textit{cube-play}, \textit{scene-play}), and logical puzzles (\textit{puzzle-3x3/4x4}). These tasks are specifically designed to test an agent's ability to reach diverse goals using multi-modal offline data, presenting significant challenges in mode separation and long-horizon planning.

\begin{table}[h]
\centering
\caption{Detailed Offline RL Results on OGBench Tasks: Performance comparison across different offline RLalgorithms. All prior work results are taken from~\citep{park2025flow} to compare a wide range of previous methods against VRD.}
\label{tab:ogbench_full}
\resizebox{\linewidth}{!}{
\begin{tabular}{l cccccccccc  c || cc}
\toprule
\textbf{Task} & BC & IQL & ReBRAC & IDQL & SRPO & CAC & FAWAC & FBRAC & IFQL & FQL & GFP actor $\pi_\omega$ & \textbf{VRD-Q (Ours)} & \textbf{VRD-R (Ours)} \\
\midrule

\multicolumn{14}{l}{\textit{antmaze-large-navigate-singletask}} \\
\midrule
task1-v0 & $0 \pm 0$ & $48 \pm 9$ & $91 \pm 10$ & $0 \pm 0$ & $0 \pm 0$ & $42 \pm 7$ & $1 \pm 1$ & $70 \pm 20$ & $24 \pm 17$ & $80 \pm 8$ & $ 95.4 \pm 0.8$ & $\mathbf{96.7} \pm 0.4$ & $\mathbf{97.2} \pm 0.6$ \\
task2-v0 & $6 \pm 3$ & $42 \pm 6$ & $88 \pm 4$ & $14 \pm 8$ & $4 \pm 4$ & $1 \pm 1$ & $0 \pm 1$ & $35 \pm 12$ & $8 \pm 3$ & $57 \pm 10$ & $ 92.2 \pm 3.0$ & $\mathbf{96.2} \pm 0.1$ & $93.1 \pm 3.3$ \\
task3-v0 & $29 \pm 5$ & $72 \pm 7$ & $51 \pm 18$ & $26 \pm 8$ & $3 \pm 2$ & $49 \pm 10$ & $12 \pm 4$ & $83 \pm 15$ & $52 \pm 17$ & $93 \pm 3$ & $ 95.6 \pm 2.7$ & $\mathbf{98.4} \pm 0.7$ & $96.4 \pm 1.7$ \\
task4-v0 & $8 \pm 3$ & $51 \pm 9$ & $84 \pm 7$ & $62 \pm 25$ & $45 \pm 19$ & $17 \pm 6$ & $10 \pm 3$ & $37 \pm 18$ & $18 \pm 8$ & $80 \pm 4$ & $\mathbf{90.6} \pm 2.6$ & $\mathbf{90.7} \pm 3.8$ & $\mathbf{90.3} \pm 2.9$ \\
task5-v0 & $10 \pm 3$ & $54 \pm 22$ & $90 \pm 2$ & $2 \pm 2$ & $1 \pm 1$ & $55 \pm 6$ & $9 \pm 5$ & $76 \pm 8$ & $38 \pm 18$ & $83 \pm 4$ & ${95.0} \pm 1.3$ & ${95.4} \pm 0.9$ & $\mathbf{97.5} \pm 0.8$\\
\midrule

\multicolumn{14}{l}{\textit{antmaze-giant-navigate-singletask-task}} \\
\midrule
task1-v0 & $0 \pm 0$ & $0 \pm 0$ & $27 \pm 22$ & $0 \pm 0$ & $0 \pm 0$ & $0 \pm 0$ & $0 \pm 0$ & $0 \pm 1$ & $0 \pm 0$ & $4 \pm 5$ & $12.6 \pm 15.1$ & $\mathbf{62.4} \pm 1.7$& $6.4 \pm 2.6$ \\
task2-v0 & $0 \pm 0$ & $1 \pm 1$ & $16 \pm 17$ & $0 \pm 0$ & $0 \pm 0$ & $0 \pm 0$ & $0 \pm 0$ & $4 \pm 7$ & $0 \pm 0$ & $9 \pm 7$ & $52.2 \pm 26.5$ & $\mathbf{94.2} \pm 0.4$ & $87.0 \pm 2.6$\\
task3-v0 & $0 \pm 0$ & $0 \pm 0$ & $\mathbf{34} \pm 22$ & $0 \pm 0$ & $0 \pm 0$ & $0 \pm 0$ & $0 \pm 0$ & $0 \pm 0$ & $0 \pm 0$ & $0 \pm 1$ & $13.7 \pm 10.2$ & $28.6 \pm 0.4$ & $3.5 \pm 2.2$\\
task4-v0 & $0 \pm 0$ & $0 \pm 0$ & $5 \pm 12$ & $0 \pm 0$ & $0 \pm 0$ & $0 \pm 0$ & $0 \pm 0$ & $9 \pm 4$ & $0 \pm 0$ & $14 \pm 23$ & $\mathbf{17.8} \pm 19.9$ & $6.4 \pm 8.5$ & $14.3 \pm 15.8$\\
task5-v0 & $1 \pm 1$ & $19 \pm 7$ & $ 49 \pm 22$ & $0 \pm 1$ & $0 \pm 0$ & $0 \pm 0$ & $0 \pm 0$ & $6 \pm 10$ & $13 \pm 9$ & $16 \pm 28$ & $43.2 \pm 35.7$ & $57.3 \pm 15.5$ & $\mathbf{80.3} \pm 12.7$\\
\midrule

\multicolumn{14}{l}{\textit{humanoidmaze-medium-navigate-singletask}} \\
\midrule
task1-v0 & $1 \pm 0$ & $32 \pm 7$ & $16 \pm 9$ & $1 \pm 1$ & $0 \pm 0$ & $38 \pm 19$ & $6 \pm 2$ & $25 \pm 8$ & $69 \pm 19$ & $19 \pm 12$ & $83.5 \pm 3.7$ & $\mathbf{87.2} \pm 2.8$ & $\mathbf{85.1}\pm 6.6$ \\
task2-v0 & $1 \pm 0$ & $41 \pm 9$ & $18 \pm 16$ & $1 \pm 1$ & $1 \pm 1$ & $47 \pm 35$ & $40 \pm 2$ & $76 \pm 10$ & $85 \pm 11$ & $\mathbf{94} \pm 3$ & $ 91.2 \pm 6.3$ & $\mathbf{95.8} \pm 1.6$ & ${91.7} \pm 0.8$\\
task3-v0 & $6 \pm 2$ & $25 \pm 5$ & $36 \pm 13$ & $0 \pm 1$ & $2 \pm 1$ & $ 83 \pm 18$ & $19 \pm 2$ & $27 \pm 11$ & $49 \pm 49$ & $74 \pm 18$ & $ 86.3 \pm 10.7$ & $\mathbf{92.7} \pm 4.3$ & $ 78.4 \pm 19.6$\\
task4-v0 & $0 \pm 0$ & $0 \pm 1$ & $15 \pm 16$ & $1 \pm 1$ & $1 \pm 1$ & $5 \pm 4$ & $1 \pm 1$ & $1 \pm 2$ & $1 \pm 1$ & $3 \pm 4$ & $3.0 \pm 6.6$ & $4.7 \pm 4.9$ & $2.1 \pm 2.5$\\
task5-v0 & $2 \pm 1$ & $66 \pm 4$ & $24 \pm 20$ & $1 \pm 1$ & $3 \pm 3$ & $91 \pm 5$ & $31 \pm 7$ & $63 \pm 9$ & $\mathbf{98} \pm 2$ & $\mathbf{97} \pm 2$ & $\mathbf{95.8} \pm 1.3$ & $\mathbf{96.3} \pm 0.7$ & $94.1 \pm 3.8$ \\
\midrule

\multicolumn{14}{l}{\textit{humanoidmaze-large-navigate-singletask}} \\
\midrule
task1-v0 & $0 \pm 0$ & $3 \pm 1$ & $2 \pm 1$ & $0 \pm 0$ & $0 \pm 0$ & $1 \pm 1$ & $0 \pm 0$ & $0 \pm 1$ & $6 \pm 2$ & $7 \pm 6$ & $57.2 \pm 23.7$ & $\mathbf{68.2}\pm 12.4$ & $39.1 \pm 18.4$\\
task2-v0 & $0 \pm 0$ & $0 \pm 0$ & $0 \pm 0$ & $0 \pm 0$ & $0 \pm 0$ & $0 \pm 0$ & $0 \pm 0$ & $0 \pm 0$ & $0 \pm 0$ & $0 \pm 0$ & $\mathbf{0.1} \pm 0.2$ & $\mathbf{0.1}\pm 0.2$ & $\mathbf{0.8}\pm1.1$\\
task3-v0 & $1 \pm 1$ & $7 \pm 3$ & $8 \pm 4$ & $3 \pm 1$ & $1 \pm 1$ & $2 \pm 3$ & $1 \pm 1$ & $10 \pm 2$ & $\mathbf{48} \pm 10$ & $11 \pm 7$ & $14.6 \pm 16.2$ & $12.7 \pm 15.1$ & $37.8\pm 6.1$\\
task4-v0 & $1 \pm 0$ & $1 \pm 0$ & $1 \pm 1$ & $0 \pm 0$ & $0 \pm 0$ & $0 \pm 1$ & $0 \pm 0$ & $0 \pm 0$ & $1 \pm 1$ & $2 \pm 3$ & $3.7 \pm 4.1$ & $\mathbf{4.7}\pm 4.4$ & $1.7 \pm 1.4$ \\
task5-v0 & $0 \pm 1$ & $1 \pm 1$ & $2 \pm 2$ & $0 \pm 0$ & $0 \pm 0$ & $0 \pm 0$ & $0 \pm 0$ & $1 \pm 1$ & $0 \pm 0$ & $1 \pm 3$ & $\mathbf{13.1} \pm 15.5$ & $\mathbf{12.9} \pm 14.2$ & $4.4 \pm 2.6$\\
\midrule

\multicolumn{14}{l}{\textit{antsoccer-arena-navigate-singletask}} \\
\midrule
task1-v0 & $2 \pm 1$ & $14 \pm 5$ & $0 \pm 0$ & $44 \pm 12$ & $2 \pm 1$ & $1 \pm 3$ & $22 \pm 2$ & $17 \pm 3$ & $61 \pm 25$ & $77 \pm 4$ & $77.0 \pm 1.7$ & $\mathbf{82.3} \pm 2.1$& $52.5 \pm 11.2$\\
task2-v0 & $2 \pm 2$ & $17 \pm 7$ & $0 \pm 1$ & $15 \pm 12$ & $3 \pm 1$ & $0 \pm 0$ & $8 \pm 1$ & $8 \pm 2$ & $75 \pm 3$ & $\mathbf{88} \pm 3$ & $\mathbf{91.2} \pm 2.2$ & $\mathbf{91.3}\pm 2.4$ & $78.7 \pm 2.8$\\
task3-v0 & $0 \pm 0$ & $6 \pm 4$ & $0 \pm 0$ & $0 \pm 0$ & $0 \pm 0$ & $8 \pm 19$ & $11 \pm 5$ & $16 \pm 3$ & $14 \pm 22$ & $\mathbf{61} \pm 6$ & $51.9 \pm 4.9$ & $\mathbf{59.8} \pm 5.8$ & $52.4 \pm 12.7$\\
task4-v0 & $1 \pm 0$ & $3 \pm 2$ & $0 \pm 0$ & $0 \pm 1$ & $0 \pm 0$ & $0 \pm 0$ & $12 \pm 3$ & $24 \pm 4$ & $16 \pm 9$ & $39 \pm 6$ & $40.2 \pm 4.2$ & $\mathbf{44.1}\pm 5.6$ & $22.7 \pm 14.0$\\
task5-v0 & $0 \pm 0$ & $2 \pm 2$ & $0 \pm 0$ & $0 \pm 0$ & $0 \pm 0$ & $0 \pm 0$ & $9 \pm 2$ & $15 \pm 4$ & $0 \pm 1$ & $ 36 \pm 9$ & $29.1 \pm 8.9$ & $\mathbf{41.7}\pm 7.4$ & $33.3 \pm 7.6$\\
\midrule

\multicolumn{14}{l}{\textit{cube-single-play-singletask}} \\
\midrule
task1-v0 & $10 \pm 5$ & $88 \pm 3$ & $89 \pm 5$ & $\mathbf{95} \pm 2$ & $89 \pm 7$ & $77 \pm 28$ & $81 \pm 9$ & $73 \pm 33$ & $79 \pm 4$ & $\mathbf{97} \pm 2$ & $\mathbf{99.1} \pm 0.4$ & $\mathbf{100}\pm 0$ & $\mathbf{100}\pm 0$\\
task2-v0 & $3 \pm 1$ & $85 \pm 8$ & $92 \pm 4$ & $\mathbf{96} \pm 2$ & $82 \pm 16$ & $80 \pm 30$ & $81 \pm 9$ & $83 \pm 13$ & $73 \pm 3$ & $\mathbf{97} \pm 2$ & $\mathbf{99.4} \pm 0.7$ & $\mathbf{100}\pm 0$ & $97.5\pm 1.2$\\
task3-v0 & $9 \pm 3$ & $91 \pm 5$ & $93 \pm 3$ & $\mathbf{99} \pm 1$ & $\mathbf{96} \pm 2$ & $\mathbf{98} \pm 1$ & $87 \pm 4$ & $82 \pm 12$ & $88 \pm 4$ & $\mathbf{98} \pm 2$ & $\mathbf{99.4} \pm 0.5$ & $\mathbf{100}\pm 0$ & $\mathbf{100}\pm 0$\\
task4-v0 & $2 \pm 1$ & $73 \pm 6$ & $92 \pm 3$ & $93 \pm 4$ & $70 \pm 18$ & $91 \pm 2$ & $79 \pm 6$ & $79 \pm 20$ & $79 \pm 6$ & $94 \pm 3$ & $\mathbf{99.1} \pm 0.7$ & $\mathbf{99.1}\pm 0.4$ & $92.7 \pm 2.4$\\
task5-v0 & $3 \pm 3$ & $78 \pm 9$ & $87 \pm 8$ & $90 \pm 6$ & $61 \pm 12$ & $80 \pm 20$ & $78 \pm 10$ & $76 \pm 33$ & $77 \pm 7$ & $\mathbf{93} \pm 3$ & $\mathbf{97.0} \pm 1.6$ & $\mathbf{98.2}\pm 2.4$ & $\mathbf{98.4}\pm 0.2$\\
\midrule

\multicolumn{14}{l}{\textit{cube-double-play-singletask}} \\
\midrule
task1-v0 & $8 \pm 3$ & $27 \pm 5$ & $45 \pm 6$ & $39 \pm 19$ & $7 \pm 6$ & $21 \pm 8$ & $21 \pm 7$ & $47 \pm 11$ & $35 \pm 9$ & $61 \pm 9$ & $\mathbf{76.1} \pm 4.6$ & $72.4\pm 3.1$ & $63.6 \pm 8.1$\\
task2-v0 & $0 \pm 0$ & $1 \pm 1$ & $7 \pm 3$ & $16 \pm 10$ & $0 \pm 0$ & $2 \pm 2$ & $2 \pm 1$ & $22 \pm 12$ & $9 \pm 5$ & $36 \pm 6$ & $\mathbf{53.3} \pm 8.9$ & $48.4 \pm 10.3$ & $17.4 \pm 7.1$ \\
task3-v0 & $0 \pm 0$ & $0 \pm 0$ & $4 \pm 1$ & $17 \pm 8$ & $0 \pm 1$ & $3 \pm 1$ & $1 \pm 1$ & $4 \pm 2$ & $8 \pm 5$ & $22 \pm 5$ & $\mathbf{43.3} \pm 8.9$ & $\mathbf{44.4}\pm 5.1$ & $24.7 \pm 14.4$\\
task4-v0 & $0 \pm 0$ & $0 \pm 0$ & $1 \pm 1$ & $0 \pm 1$ & $0 \pm 0$ & $0 \pm 1$ & $0 \pm 0$ & $0 \pm 1$ & $1 \pm 1$ & $5 \pm 2$ & $\mathbf{7.1} \pm 3.1$ & $\mathbf{7.0}\pm 2.7$ & $6.5 \pm 3.3$\\
task5-v0 & $0 \pm 0$ & $4 \pm 3$ & $4 \pm 2$ & $1 \pm 1$ & $0 \pm 0$ & $3 \pm 2$ & $2 \pm 1$ & $2 \pm 2$ & $17 \pm 6$ & $19 \pm 10$ & $\mathbf{56.3} \pm 11.3$ & $\mathbf{55.7}\pm 10.7$&  $48.9 \pm 9.3$\\
\midrule

\multicolumn{14}{l}{\textit{scene-play-singletask}} \\
\midrule
task1-v0 & $19 \pm 6$ & $94 \pm 3$ & $\mathbf{95} \pm 2$ & $\mathbf{100} \pm 0$ & $94 \pm 4$ & $\mathbf{100} \pm 1$ & $87 \pm 8$ & $\mathbf{96} \pm 8$ & $\mathbf{98} \pm 3$ & $\mathbf{100} \pm 0$ & $\mathbf{99.8} \pm 0.4$ & $\mathbf{100} \pm 0$ & $\mathbf{100}\pm 0$\\
task2-v0 & $1 \pm 1$ & $12 \pm 3$ & $50 \pm 13$ & $33 \pm 14$ & $2 \pm 2$ & $50 \pm 40$ & $18 \pm 8$ & $46 \pm 10$ & $0 \pm 0$ & $76 \pm 9$ & $89.0 \pm 4.1$ & $\mathbf{99.2} \pm 0.6$ & $\mathbf{97.3}\pm 1.7$\\
task3-v0 & $1 \pm 1$ & $32 \pm 7$ & $55 \pm 16$ & $\mathbf{94} \pm 4$ & $4 \pm 4$ & $49 \pm 16$ & $38 \pm 9$ & $78 \pm 14$ & $54 \pm 19$ & $\mathbf{98} \pm 1$ & $78.0 \pm 13.2$ & $75.4 \pm 14.5$ & $81.4\pm 10.4$\\
task4-v0 & $2 \pm 2$ & $0 \pm 1$ & $3 \pm 3$ & $4 \pm 3$ & $0 \pm 0$ & $0 \pm 0$ & $\mathbf{6} \pm 1$ & $4 \pm 4$ & $0 \pm 0$ & $5 \pm 1$ & $0.6 \pm 0.6$ & $2.4 \pm 1.1$ & $0.4\pm 0.3$\\
task5-v0 & $0 \pm 0$ & $0 \pm 0$ & $0 \pm 0$ & $0 \pm 0$ & $0 \pm 0$ & $0 \pm 0$ & $0 \pm 0$ & $0 \pm 0$ & $0 \pm 0$ & $0 \pm 0$ & $0.0 \pm 0.0$ & $0.0\pm 0.0$ & $0.0\pm 0.0$\\
\midrule

\multicolumn{14}{l}{\textit{puzzle-3x3-play-singletask}} \\
\midrule
task1-v0 & $5 \pm 2$ & $33 \pm 6$ & $97 \pm 4$ & $52 \pm 12$ & $89 \pm 5$ & $97 \pm 2$ & $25 \pm 9$ & $63 \pm 19$ & $94 \pm 3$ & $90 \pm 4$ & $94.8 \pm 4.4$ & $\mathbf{99.0} \pm 0.4$ & $\mathbf{97.6}\pm 2.1$\\
task2-v0 & $1 \pm 1$ & $4 \pm 3$ & $1 \pm 1$ & $0 \pm 1$ & $0 \pm 1$ & $0 \pm 0$ & $4 \pm 2$ & $2 \pm 2$ & $1 \pm 2$ & ${16} \pm 5$ & $0.3 \pm 0.3$ & ${7.7}\pm 5.3$ & $\mathbf{20.3}\pm 22.6$\\
task3-v0 & $1 \pm 1$ & $3 \pm 2$ & $3 \pm 1$ & $0 \pm 0$ & $0 \pm 0$ & $0 \pm 0$ & $1 \pm 0$ & $1 \pm 1$ & $0 \pm 0$ & $\mathbf{10} \pm 3$ & $0.9 \pm 0.6$ & $4.7 \pm 4.2$ & $1.0 \pm 0.4$\\
task4-v0 & $1 \pm 1$ & $2 \pm 1$ & $2 \pm 1$ & $0 \pm 0$ & $0 \pm 0$ & $0 \pm 0$ & $1 \pm 1$ & $2 \pm 2$ & $0 \pm 0$ & $ 16  \pm 5$ & $5.4 \pm 2.1$ & $16.2\pm 4.3$ & $\mathbf{34.1}\pm 30.4$\\
task5-v0 & $1 \pm 0$ & $3 \pm 2$ & $5 \pm 3$ & $0 \pm 0$ & $0 \pm 0$ & $0 \pm 0$ & $1 \pm 1$ & $2 \pm 2$ & $0 \pm 0$ & ${16} \pm 3$ & $14.1 \pm 8.4$ & $15.7 \pm 4.6$ & $\mathbf{20.7} \pm 2.6$\\
\midrule

\multicolumn{14}{l}{\textit{puzzle-4x4-play-singletask}} \\
\midrule
task1-v0 & $1 \pm 1$ & $12 \pm 2$ & $26 \pm 4$ & $48 \pm 5$ & $24 \pm 9$ & $44 \pm 10$ & $1 \pm 2$ & $32 \pm 9$ & $49 \pm 9$ & $34 \pm 8$ & $50.0 \pm 8.1$ & $\mathbf{58.7} \pm 5.6$ & $35.4 \pm 12.7$\\
task2-v0 & $0 \pm 0$ & $7 \pm 4$ & $12 \pm 4$ & $14 \pm 5$ & $0 \pm 1$ & $0 \pm 0$ & $0 \pm 1$ & $5 \pm 3$ & $4 \pm 4$ & $\mathbf{16} \pm 5$ & $9.9 \pm 2.5$ & $\mathbf{15.6} \pm 1.7$ & $6.7 \pm 3.4$\\
task3-v0 & $0 \pm 0$ & $9 \pm 3$ & $15 \pm 3$ & $34 \pm 5$ & $21 \pm 10$ & $29 \pm 12$ & $1 \pm 1$ & $20 \pm 10$ & $\mathbf{50} \pm 14$ & $18 \pm 5$ & $46.2 \pm 3.8$ & $\mathbf{49.8} \pm 2.9$ & $24.4 \pm 9.8$\\
task4-v0 & $0 \pm 0$ & $5 \pm 2$ & $10 \pm 3$ & $\mathbf{26} \pm 6$ & $7 \pm 4$ & $1 \pm 1$ & $0 \pm 0$ & $5 \pm 1$ & $21 \pm 11$ & $11 \pm 3$ & $17.2 \pm 2.5$ & $20.4 \pm 3.4$  & $\mathbf{25.7} \pm 2.1$ \\
task5-v0 & $0 \pm 0$ & $4 \pm 1$ & $7 \pm 3$ & $\mathbf{24} \pm 11$ & $1 \pm 1$ & $0 \pm 0$ & $0 \pm 1$ & $4 \pm 3$ & $2 \pm 2$ & $7 \pm 3$ & $7.3 \pm 3.6$ & $8.6 \pm 4.1$ & $6.7 \pm 3.4$ \\
\midrule

\textbf{Average} & 2.8 & 23.4 & 31.0 & 22.8 & 14.3 & 25.3 & 16.1 & 28.6 & 30.0 & 43.5 & 51.8 & \textbf{56.4} & 50.8 \\
\textbf{(50 tasks)} & & & & & & & & & & & & & \\
\bottomrule
\end{tabular}
}
\end{table}

\textbf{Results and Analysis.} As shown in Table~\ref{tab:ogbench_full}, our proposed VRD framework demonstrates superior performance across the vast majority of the 50 tasks. Specifically:
\begin{itemize}
    \item \textbf{Overall Superiority}: VRD-Q achieves the highest overall average score of \textbf{56.4}, significantly outperforming the strongest generative baseline, GFP actor (51.8), and the state-of-the-art FQL (43.5). Besides, VRD-R achieves competitive average score of 50.8.
    \item \textbf{Robustness in Challenging Domains}: In highly complex environments like \textit{antmaze-giant} and \textit{humanoidmaze}, where conventional baselines such as IQL, ReBRAC, and IDQL often struggle or yield near-zero scores, VRD-Q and VRD-R maintain high success rates (e.g., 94.2\% in \textit{antmaze-giant} task2). This highlights the effectiveness of our value-rectification mechanism in navigating sparse-reward, high-dimensional spaces.
    \item \textbf{High-Precision Manipulation}: In manipulation tasks like \textit{cube-single-play} and \textit{scene-play}, VRD-Q reaches a 100\% success rate over some tasks, demonstrating its capability to precisely align generative trajectories with optimal expert behaviors.
    \item \textbf{Variant Comparison}: While VRD-Q is generally more stable and leads in average performance, VRD-R shows competitive advantages in specific structured tasks, such as \textit{puzzle-3x3} and \textit{antsoccer}, suggesting that geometric latent reshaping is particularly effective for certain task dynamics.
\end{itemize}

In conclusion, these results validate that VRD effectively addresses the latent fusion problem, providing a more robust and high-performing solution for goal-conditioned offline reinforcement learning.

\subsection{Minari: External comparison study}\label{appendix:minari}

The MuJoCo task suite within the Minari dataset~\citep{minari} serves as a standardized benchmark widely adopted in offline reinforcement learning (RL) research. Developed by the Farama Foundation as a modern successor to the D4RL dataset, these tasks are built upon classic physics simulation environments, including \textit{HalfCheetah}, \textit{Hopper}, and \textit{Walker2d}. Minari provides a tiered classification system to evaluate algorithms across varying data qualities:

\begin{itemize}
    \item \textbf{Simple Datasets}: These are primarily generated by random or early-stage training policies. They contain a high volume of low-value, exploratory trajectories characterized by a divergent distribution and a low signal-to-noise ratio, testing an algorithm's ability to extract fundamental control logic from poor-quality data.
    \item \textbf{Medium Datasets}: Collected from policies at intermediate performance levels, these datasets are goal-oriented but frequently contain sub-optimal actions and failed attempts. They serve as a core benchmark for evaluating whether an algorithm can effectively isolate optimal modes and outperform the data average.
    \item \textbf{Expert Datasets}: Generated by fully converged, high-performance policies, these represent optimal behavior with a narrow, high-quality distribution. They are designed to verify if an algorithm can achieve high-precision policy alignment and maintain expert-level performance under limited data diversity.
\end{itemize}

By adhering to the Gymnasium API and providing comprehensive metadata, Minari serves as an ideal testbed for evaluating capacity for mode separation and distributional correction within complex, multi-modal, and non-expert datasets.

Flow Q-Learning (FQL)~\citep{park2025flow} and Guided Flow Policy (GFP)~\citep{tiofack2025guided} are selected as the main baselines for comparison study.

\begin{table}[h]
\centering
\caption{Comparisons on Minari MuJoCo benchmarks. The best scores are highlighted in bold.}
\label{tab:mujoco_results}
\resizebox{\textwidth}{!}{
\begin{tabular}{l|c|c|c||c|c}
\toprule
\multirow{2}{*}{\textbf{Task}} & \multicolumn{4}{c}{\textbf{Offline RL algorithms}} \\
\cmidrule{2-6}
 & FQL & \textbf{GFP} actor $\pi_\theta$ & \textbf{GFP} VaBC $\pi_\omega$ & \textbf{VRD-Q (Ours)} & \textbf{VRD-R (Ours)} \\
\midrule
Minari halfcheetah-simple-v0 & $59.2 \pm 0.3$ & $\textbf{72.5} \pm 0.5$ & $64.4 \pm 0.4$& $71.4 \pm 0.1$  & $70.3 \pm 0.2$ \\
Minari halfcheetah-medium-v0 & $100.2 \pm 6.7$ & $121.3 \pm 5.8$ & $108.6 \pm 5.3$& $ 117.4 \pm 3.1$  &  $\textbf{123.9}\pm 2.3$  \\
Minari halfcheetah-expert-v0 & $134.1 \pm 2.3$ & $133.4 \pm 1.3$ & $136.4 \pm 0.8$& $96.7 \pm 10.7$  & $\textbf{139.0}\pm 0.1$ \\
\midrule
Minari hopper-simple-v0 & $57.2 \pm 5.9$ & $91.6 \pm 4.3$ & $87.4 \pm 6.6$& $\textbf{99.1} \pm 0.1$  &  $\textbf{99.2} \pm 0.6$\\
Minari hopper-medium-v0 & $81.9 \pm 23.9$ & $79.6 \pm 14.5$ & $78.2 \pm 24.2$ & $\textbf{107.3} \pm 4.4$ & $104.3\pm 8.6$ \\
Minari hopper-expert-v0 & $99.6 \pm 10.9$ & $103.9 \pm 10.6$ & $\textbf{108.8} \pm 13.1$ & $107.3 \pm 9.1$ & $101.4\pm 10.2$ \\
\midrule
Minari walker2d-simple-v0 & $89.4 \pm 0.7$ & $90.0 \pm 0.9$ & $89.7 \pm 0.9$ & $\textbf{91.5} \pm 0.2$ & $\textbf{91.5}\pm 0.7$ \\
Minari walker2d-medium-v0 & $127.6 \pm 3.0$ & $133.7 \pm 1.1$ & $126.1 \pm 3.5$ & $\textbf{135.6} \pm 0.3$ & $134.0\pm 1.7$ \\
Minari walker2d-expert-v0 & $148.0 \pm 1.8$ & $149.9 \pm 2.0$ & $150.8 \pm 0.4$ & $\textbf{152.3} \pm 0.2$ & $151.4\pm 0.1$ \\
\midrule
\textbf{Average} & 99.7 & 108.4 & 105.6 & 108.7 & \textbf{112.8} \\
\bottomrule
\end{tabular}
}
\end{table}

\textbf{Results Overview.} Table~\ref{tab:mujoco_results} summarizes the performance of our proposed VRD framework against several state-of-the-art generative baselines across various task levels in the Minari benchmark. Overall, VRD-R achieves the highest average score of \textbf{112.8}, outperforming both the flow-based FQL (99.7) and the strongest GFP variants (108.4).

\textbf{Key Observations.}
\begin{itemize}
    \item \textbf{Consistent Superiority in Complex Environments}: In the \textit{Walker2d} and \textit{Hopper} domains, VRD-Q and VRD-R consistently outperform the baselines. Notably, in \textit{hopper-simple}, both VRD variants achieve a score of approximately \textbf{99.1}, nearly doubling the performance of FQL (57.2), which demonstrates the robustness of our framework in extracting optimal policies from low-quality data.
    \item \textbf{High-Value Mode Recovery}: VRD-R exhibits exceptional performance in \textit{halfcheetah-expert}, reaching a peak score of \textbf{139.0}. This suggests that the geometric latent reshaping in VRD-R is highly effective at aligning generative trajectories with expert behaviors in high-dimensional action spaces.
    \item \textbf{Mitigation of Mode Averaging}: The significant performance gains observed in \textit{Simple} and \textit{Medium} datasets—which typically contain a higher proportion of sub-optimal trajectories—validate that our value-rectification mechanism successfully mitigates the "latent fusion" problem. By decoupling optimal modes from sub-optimal noise, VRD maintains high performance where standard generative models often suffer from catastrophic mode averaging.
\end{itemize}

In conclusion, the results across Minari tasks confirm that the VRD framework is not only competitive in expert-level tasks but also exceptionally robust in handling non-expert, multi-modal data.

\subsection{Full sensitivity analysis for hyperparameters}\label{appendix:ablation_study}

To provide a comprehensive understanding of the stability and robustness of the Value-Rectified Distillation (VRD) framework, we conduct an extensive grid search on two primary hyperparameters: the quantile threshold $\tau$ for VRD-Q and the truncation radius $\sigma$ for VRD-R, alongside the balance coefficient $\lambda$. The results are visualized in Figure \ref{fig:vrd_q_full_sensi} and Figure \ref{fig:vrd_r_full_sensi}.

\begin{figure}[h]
    \centering
    \includegraphics[width=1\linewidth]{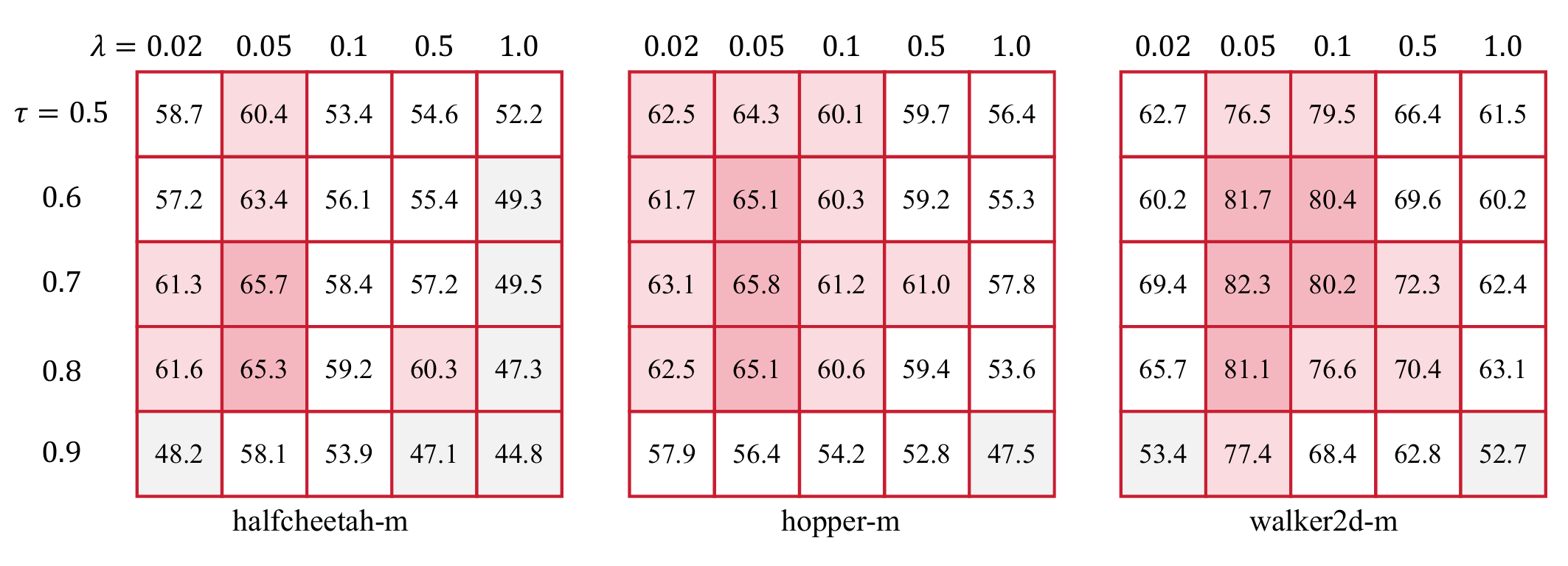}
    \caption{Sensitivity analysis for VRD-Q.}
    \label{fig:vrd_q_full_sensi}
\end{figure}

\subsubsection{Analysis of VRD-Q Sensitivity (Figure \ref{fig:vrd_q_full_sensi})}
The experimental results on D4RL `medium` datasets (HalfCheetah, Hopper, and Walker2d) reveal several key insights for VRD-Q:
\begin{itemize}
    \item \textbf{Inverted U-shape performance w.r.t. $\tau$:} The performance generally follows an inverted U-shaped curve across all environments. When $\tau$ is too small (e.g., $0.5$), the statistical filtering is insufficient to suppress sub-optimal behavioral modes. Conversely, an excessively large $\tau$ (e.g., $0.9$) overly restricts the distillation weight, leading to a loss of distribution coverage and potential vanishing gradients.
    \item \textbf{Optimal $\tau$ Region:} The peak performance consistently resides in the range of $\tau \in [0.7, 0.8]$. This range effectively balances the exclusion of low-value "OOD noise" with the preservation of high-value multi-modal trajectories.
    \item \textbf{Robustness to $\lambda$:} VRD-Q demonstrates relative stability across a wide range of $\lambda$ (from $0.02$ to $0.1$). However, very large values of $\lambda$ (e.g., $1.0$) tend to degrade performance as the rectification term begins to dominate the Q-value maximization objective.
\end{itemize}

\begin{figure}[h]
    \centering
    \includegraphics[width=1\linewidth]{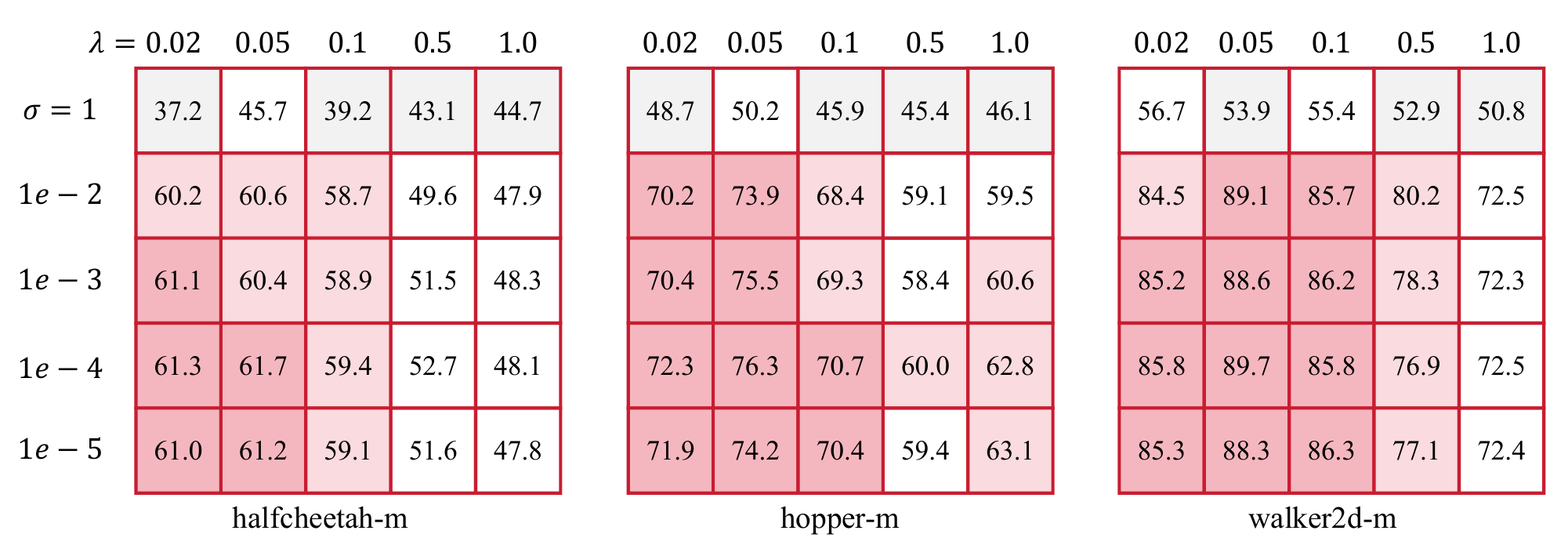}
    \caption{Sensitivity analysis for VRD-R.}
    \label{fig:vrd_r_full_sensi}
\end{figure}

\subsubsection{Analysis of VRD-R Sensitivity (Figure \ref{fig:vrd_r_full_sensi})}
For the geometric rectification variant VRD-R, the sensitivity analysis highlights:
\begin{itemize}
    \item \textbf{Criticality of Truncation Radius $\sigma$:} Performance is highly sensitive to the scale of the latent prior during inference. When $\sigma=1$ (equivalent to standard Gaussian sampling), the performance is sub-optimal because the sampler hits the peripheral sub-optimal regions. 
    \item \textbf{Optimal $\sigma$ Scale:} As $\sigma$ decreases towards $10^{-4}$, the performance significantly improves and stabilizes. This confirms that the value-guided gravity field successfully compresses high-value actions toward the latent origin, and a smaller $\sigma$ ensures the sampler stays within this "expert core."
    \item \textbf{Environment Consistency:} Similar to VRD-Q, the best configuration for VRD-R is relatively consistent across different locomotion tasks, with $\lambda=0.05$ and $\sigma=10^{-4}$ providing the most robust results.
\end{itemize}

\subsubsection{Practical Hyperparameter Tuning Suggestions}
Based on our empirical analysis, we offer the following guidelines for applying VRD to new offline RL tasks:

\begin{enumerate}
    \item \textbf{Tuning the Quantile Threshold $\tau$ (VRD-Q):}
    \begin{itemize}
        \item \textbf{Default:} Start with $\tau = 0.7$.
        \item \textbf{High-Noise / Exploratory Datasets:} While Theorem \ref{thm:optimalsupport} suggests a larger $\tau$ under an oracle critic, on noisy datasets (e.g., ``random'' or ``medium-replay'') the learned Q-network inherently suffers from higher estimation variance. Setting $\tau$ too close to $1.0$ makes the suboptimal anchor weight $1-\tau$ vanishingly small, causing the actor to exploit spurious OOD Q-peaks and destabilizing training. Thus, we recommend decreasing $\tau$ slightly (e.g., to $0.6 \sim 0.7$) to retain a necessary behavioral regularization margin ($1-\tau \in [0.3, 0.4]$) for training stability while still down-weighting suboptimal modes.
        \item \textbf{Expert / Structured Datasets:} For datasets with high-quality demonstrations where value estimation is accurate and stable, a larger $\tau$ (e.g., $0.8 \sim 0.9$) can be safely employed to aggressively concentrate the policy onto the peak optimal trajectories without risking OOD divergence.
    \end{itemize}
    
    \item \textbf{Tuning the Truncation Radius $\sigma$ (VRD-R):}
    \begin{itemize}
        \item \textbf{Default:} Set $\sigma = 10^{-4}$.
        \item \textbf{Geometric Adjustment:} If the agent exhibits "mode collapse" (too little variety in actions), increase $\sigma$ to $10^{-3}$. If the agent still chooses sub-optimal actions in multi-modal settings, decrease $\sigma$ to $10^{-5}$ to force more aggressive centering.
    \end{itemize}
    
    \item \textbf{Tuning the Balance Coefficient $\lambda$:}
    \begin{itemize}
        \item We recommend $\lambda \in [0.05, 0.1]$. A smaller $\lambda$ (e.g., $0.02$) is preferred when the Q-function is well-trained and accurate. If the environment is complex and requires stronger regularization to prevent OOD mode-averaging, increase $\lambda$ to $0.5$.
    \end{itemize}
\end{enumerate}

\subsection{Validation study}\label{appendix:validation}

\subsubsection{Quantile Distillation in VRD-Q}

To qualitatively investigate the efficacy of the proposed Value-Rectified Distillation (VRD-Q) in isolating high-value behaviors, we conduct a visualization analysis of the latent space. Specifically, we apply t-SNE to reduce the dimensionality of the standard prior noise vectors $z$, which are sampled during policy inference, and project them into a 2D plane. We evaluate three representative tasks from the D4RL benchmark: \texttt{halfcheetah-medium-v2}, \texttt{hopper-medium-v2}, and \texttt{walker2d-medium-v2}. In the visualization, each point represents a sampled latent code $z$, color-coded to indicate whether the action generated by the VRD-Q Actor or the baseline Flow Model policy yields a higher estimated Q-value.

\begin{figure}[H]
    \centering
    \includegraphics[width=1\linewidth]{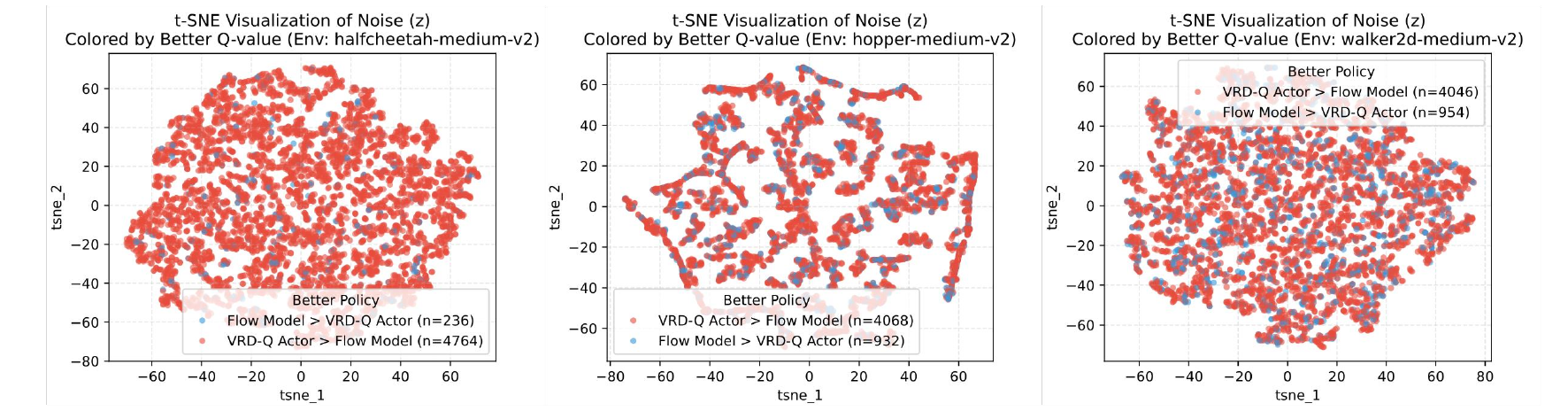}
    \caption{Validation study for Quantile Distillation in VRD-Q.}
    \label{fig:validationq}
\end{figure}

The experimental results, as illustrated in Figure \ref{fig:validationq}, reveal that the VRD-Q Actor consistently generates actions with superior Q-values across the vast majority of the latent space compared to the original Flow Model. The quantitative distribution of "better" samples is summarized as follows:
\begin{itemize}
    \item \textbf{Halfcheetah-medium-v2}: The VRD-Q Actor produces higher-value actions for $4,764$ samples, while the Flow Model only excels in $236$ samples.
    \item \textbf{Hopper-medium-v2}: The VRD-Q Actor outperforms the baseline in $4,068$ samples, significantly exceeding the $932$ samples of the Flow Model.
    \item \textbf{Walker2d-medium-v2}: Higher Q-values are achieved by the VRD-Q Actor in $4,046$ samples, compared to $954$ for the Flow Model.
\end{itemize}

This visualization provides strong empirical evidence for the rationality of the VRD-Q framework. As theoretically analyzed, standard Flow Matching policies often suffer from \textit{Latent Fusion} in multi-modal offline datasets, where optimal and sub-optimal generative trajectories become topologically entangled. Direct Q-guided distillation without rectification risks \textit{Out-of-Distribution (OOD) Mode Averaging}, leading to policy collapse into low-value regions. In contrast, VRD-Q utilizes a \textit{quantile-aware weight modulation} mechanism to statistically filter out sub-optimal behaviors while maintaining the structural integrity of the latent space. The fact that the majority of the prior noise distribution is mapped to higher-value actions demonstrates that VRD-Q effectively decouples behavioral modes and successfully rectifies the flow dynamics toward the high-value action manifold, thereby ensuring robust and high-performance policy execution.

\subsection{Reshaping Distillation in VRD-R}

To investigate the effectiveness of the proposed VRD-R in achieving geometric separation of behavioral modes, we conduct a sensitivity analysis of Q-values relative to the latent space scale. 

\textbf{Experimental Setup.} We evaluate three representative tasks from the D4RL benchmark: \texttt{halfcheetah-medium-v2}, \texttt{hopper-medium-v2}, and \texttt{walker2d-medium-v2}. The horizontal axis represents the \textit{Latent Radius} $\|z\|$, which denotes the magnitude of the standard prior noise sampled during the inference process, ranging from $0.0$ to $1.0$. The vertical axis measures the \textit{Normalized $\Delta Q$}, representing the normalized change in the estimated Q-values of actions generated at the corresponding latent radius. This setup allows us to track the topological distribution of action values across the latent manifold.

\begin{figure}[H]
    \centering
    \includegraphics[width=1\linewidth]{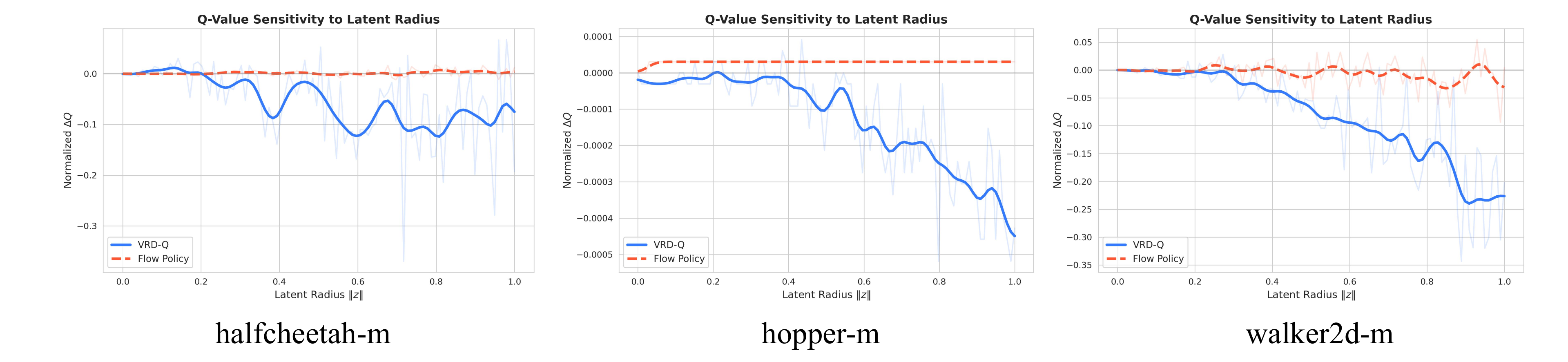}
    \caption{Validation for VRD-R: Q-value sensitivity to latent radius}
    \label{fig:validationr}
\end{figure}

\textbf{Results and Rationality Analysis.} As illustrated in Fig.~\ref{fig:validationr}, the experimental results provide strong empirical evidence for the geometric reorganization capabilities of VRD-R:
\begin{itemize}
    \item \textbf{Intentional Value Decay Gradient:} Across all tested environments, the VRD-R curves exhibit a significant and consistent downward trend as the latent radius $\|z\|$ increases. This sharp decline in Normalized $\Delta Q$ is a key indicator of successful latent reshaping.
    \item \textbf{Spatial Isolation of Modes:} The observed gradient confirms that VRD-R has successfully decoupled the intertwined modes found in the original multi-modal datasets. Specifically, it forces the \textit{optimal modes} (high-value behaviors) to consolidate within the central region of the latent space (near the origin, $\|z\| \approx 0$), while simultaneously pushing \textit{sub-optimal modes} (low-value behaviors) toward the peripheral, high-variance regions (larger $\|z\|$).
    \item \textbf{Strategic Alignment with Gaussian Prior:} Given that the standard Gaussian prior utilized during inference naturally assigns the highest probability density to the region around the origin, this "center-optimal, edge-suboptimal" topology is highly rational. It ensures that standard sampling is highly likely to hit the high-value action manifold.
\end{itemize}

In conclusion, the sensitivity results demonstrate that VRD-R effectively rectifies the flow dynamics by geometrically isolating high-value trajectories. This mechanism fundamentally prevents the \textit{Mode Averaging} issue by ensuring that sub-optimal behaviors are spatially excluded from the primary inference region, thereby significantly enhancing the policy's performance and robustness.

\subsection{Hyperparameters}
\label{sec:hyperparameters}

In this section, we detail the hyperparameters used for evaluating our proposed Value-Rectified Distillation (VRD) framework, including both the VRD-Q and VRD-R instantiations. To ensure a fair and rigorous comparison, the foundational hyperparameters (e.g., network architectures, learning rates, and flow matching configurations) are kept consistently identical across all evaluated benchmarks, namely D4RL, OGBench, and Minari. Task-specific tuning is strictly limited to the core hyperparameters of the VRD framework ($\tau$ for VRD-Q and $\sigma$ for VRD-R) to adapt to the varying degrees of multi-modality, dataset quality, and environment complexity.

\paragraph{General Network and Training Configurations}
Both the value networks (critics) and the flow-based policy networks (actors) are parameterized using Multi-Layer Perceptrons (MLPs). For the flow matching process, we maintain a fixed number of integration steps (NFE = 10) using the Euler ODE solver. The general architectural choices and optimization hyperparameters are summarized in Table~\ref{tab:general_hyperparameters}.

\paragraph{VRD-Specific Hyperparameters and Task Dependency}
The two algorithmic variants of our framework require specific hyperparameter configurations to effectively decouple sub-optimal and optimal behavioral modes. Empirically, we observe a strong dependency between the dataset profile (i.e., multi-modality and sub-optimality) and the optimal hyperparameter selection:

\begin{itemize}
    \item \textbf{VRD-Q (Quantile-based Filtering):} The primary hyperparameter is the quantile threshold $\tau \in (0.5, 1)$, which assigns a smaller distillation weight $1-\tau$ to suboptimal trajectories and a larger weight $\tau$ to optimal ones. For highly complex, exploratory, or noisy multi-modal datasets (e.g., OGBench mixed datasets and D4RL antmaze) where Q-value estimation is noisy, a moderate threshold ($\tau \in \{0.6, 0.7\}$) is strictly required to balance suboptimal mode suppression with sufficient behavioral anchoring ($1-\tau \in \{0.3, 0.4\}$), preventing policy collapse from Q-overestimation. Conversely, for datasets with cleaner distributions or expert-level demonstrations (e.g., D4RL expert or high-quality Minari tasks) with reliable critic gradients, a higher threshold ($\tau \in \{0.8, 0.9\}$) is preferred to maximize peak mode extraction efficiency.
    
    \item \textbf{VRD-R (Latent Reorganization):} The geometric isolation of high-value generative trajectories in the latent space is governed by the margin parameter $\sigma$, searched over a logarithmic grid $\sigma \in \{10^{-2}, 10^{-3}, 10^{-4}, 10^{-5}\}$. In tasks exhibiting severe Out-of-Distribution (OOD) mode averaging risks, such as OGBench humanoid control, a relatively larger margin ($\sigma = 10^{-2}$) is enforced to guarantee a wider geometric decoupling. For simpler, structured tasks (e.g., D4RL locomotion expert datasets), a strictly smaller margin ($\sigma = 10^{-5}$) prevents over-regularization and maintains the natural topological structure of the generative manifold.
\end{itemize}

A detailed breakdown of the exact parameters for VRD-Q and VRD-R across specific benchmark tasks is provided in Table~\ref{tab:vrd_task_hyperparameters}.

\begin{table}[htbp]
    \centering
    \caption{General network architectures and optimization hyperparameters shared across all benchmarks.}
    \label{tab:general_hyperparameters}
    \resizebox{0.6\columnwidth}{!}{%
    \begin{tabular}{llc}
        \toprule
        \textbf{Category} & \textbf{Hyperparameter} & \textbf{Value} \\
        \midrule
        \multirow{3}{*}{Architecture} & Actor / Critic Hidden Layers & $[256, 256, 256]$ \\
        & Activation Function & ReLU / SiLU \\
        & Time Embedding Dimension & $64$ \\
        \midrule
        \multirow{5}{*}{Optimization} & Actor / Critic Learning Rate & $3 \times 10^{-4}$ \\
        & Batch Size & $256$ \\
        & Discount Factor ($\gamma$) & $0.99$ \\
        & Target Soft-Update Rate ($\rho$) & $0.005$ \\
        \midrule
        \multirow{2}{*}{Flow Matching} & Integration Steps (NFE) / Solver & $10$ / Euler \\
        & Noise Schedule & Linear \\
        \bottomrule
    \end{tabular}%
    }
\end{table}

\begin{table}[htbp]
    \centering
    \caption{Task-specific hyperparameter selections for VRD-Q ($\tau$) and VRD-R ($\sigma$). Highly multi-modal tasks with sparse expert data (e.g., OGBench and AntMaze) utilize a smaller $\tau$ and a larger $\sigma$ (e.g., $10^{-2}$), whereas near-expert, structured datasets utilize a larger $\tau$ and a strictly smaller $\sigma$ (e.g., $10^{-5}$).}
    \label{tab:vrd_task_hyperparameters}
    \resizebox{\columnwidth}{!}{%
    \begin{tabular}{llcc}
        \toprule
        \textbf{Benchmark} & \textbf{Specific Datasets / Environments} & \textbf{VRD-Q Threshold ($\tau$)} & \textbf{VRD-R Margin ($\sigma$)} \\
        \midrule
        \multirow{2}{*}{\textbf{OGBench}} 
        & \texttt{humanoid-medium}, \texttt{ant-large}, \texttt{cube-single}, \texttt{puzzle} & $0.7$ & $10^{-3}$ \\
        & \texttt{humanoid-large}, \texttt{ant-giant}, \texttt{cube-double}, \texttt{scene} & $0.6$ & $10^{-2}$ \\
        \midrule
        \multirow{3}{*}{\textbf{D4RL}} 
        & \texttt{antmaze-umaze/medium/large} & $0.6 \sim 0.7$ & $10^{-2} \sim 10^{-3}$ \\
        & \texttt{halfcheetah/hopper/walker2d-medium/medium-replay} & $0.7 \sim 0.8$ & $10^{-4}$ \\
        & \texttt{halfcheetah/hopper/walker2d-expert/medium-expert} & $0.8 \sim 0.9$ & $10^{-5}$ \\
        \midrule
        \multirow{2}{*}{\textbf{Minari}} 
        & Mixed / Sub-optimal datasets (e.g., \texttt{pointmaze-medium-v1}) & $0.7 \sim 0.8$ & $10^{-4}$ \\
        & Expert datasets (e.g., \texttt{pointmaze-expert-v1}, \texttt{door-human}) & $0.8 \sim 0.9$ & $10^{-5}$ \\
        \bottomrule
    \end{tabular}%
    }
\end{table}

\subsection{Network structures}

The architectural framework incorporates the latent variable $z$ across different policy paradigms to enable stochasticity and generative modeling:

\textbf{One-Step Policy (Actor):} Following FQL~\cite{park2025flow}, the latent variable $z \sim \mathcal{N}(0, I)$ (or $z' \sim \mathcal{P}_{\text{test}}$) is concatenated directly with the state observation $s$ as an explicit input feature to the primary MLP. This allows the feed-forward network $\pi(s, z)$ to nonlinearly warp the latent prior into complex, multi-modal action distributions in a single forward pass without iterative ODE integration.

\textbf{Flow Policy (ActorVectorField):} The latent variable $z_t$ is treated as an explicit input feature alongside observations $s$ and time $t$. Because the network must model the local velocity of a continuous transformation, $z_t$ represents the intermediate state of the action as it "flows" from the base distribution to the target manifold. The MLP computes the vector field $\dot{z}_t = v_\theta(s, z_t, t)$ based on this concatenated representation.

\textbf{Critic Network (Value):} The critic acts as a deterministic evaluator for a given state-action pair $(s, a)$. While it does not directly process the latent variable $z$, it estimates the expected return of the final action $a$ (the endpoint of the flow) to guide the policy's optimization.

\begin{table}[htbp]
\centering
\caption{Refined Architectural Specifications Including Latent Variable Roles}
\label{tab:flow_q_comprehensive}
\resizebox{\linewidth}{!}{
\begin{tabular}{@{}lp{3.5cm}p{4cm}p{3.5cm}@{}}
\toprule
\textbf{Specification} & \textbf{One-Step Policy} (\texttt{Actor}) & \textbf{Flow Policy} (\texttt{ActorVectorField}) & \textbf{Critic} (\texttt{Value}) \\ \midrule
\textbf{Primary Inputs} & $s \oplus z$ (Obs, Latent z) & $s \oplus z_t \oplus t$ \newline (Obs, Latent $z$, Time $t$) & $s \oplus a$ \newline (Obs, Actions) \\ \midrule
\textbf{Latent Variable ($z$) Role} & \textbf{Input Feature:} Nonlinearly mapped via MLP to generate one-step action $a = \pi(s, z)$ & \textbf{Input Feature}: Actively processed to compute flow velocity ($\dot{z}_t$) & \textbf{N/A}: Evaluates the final outcome action $a$ \\ \midrule
\textbf{Backbone} & MLP \newline (Optional Layer Norm) & Time-conditioned MLP \newline (Sinusoidal Embedding) & Ensemblized MLP \newline ($N \ge 1$ via \texttt{vmap}) \\ \midrule
\textbf{Output Type} & Deterministic Action $a$ (conditioned on $s$ and $z$) & Vector Field \newline $v_\theta(s, z_t, t)$ & Scalar Value \newline $Q(s, a)$ or $V(s)$ \\ \bottomrule
\end{tabular}}
\end{table}

\subsection{Computational Resources}
\label{subsec:computational_resources}

All experiments, including the training of the proposed VRD frameworks and the reproduction of baseline methods, were conducted on a high-performance computing workstation to ensure reproducibility and efficiency. Specifically, the deep neural networks and flow-matching policies were accelerated using NVIDIA GeForce RTX 3090 GPUs (24GB VRAM). The algorithms were implemented in Python utilizing the PyTorch/JAX deep learning frameworks. Furthermore, the environment simulations and data sampling for the D4RL and OGBench benchmarks were processed on a Intel Xeon W-2295 CPU equipped with 128GB of RAM.

\section{Discussion}\label{appendix:discussion}

While the proposed Value-Rectified Distillation (VRD) framework, including its two instantiations (VRD-Q and VRD-R), successfully addresses the fundamental \textit{Latent Fusion} and Out-of-Distribution (OOD) Mode Averaging problems in flow-based offline RL, there are several limitations and open challenges that warrant further investigation.

\paragraph{1. Reliance on Q-value Accuracy.} 
The core mechanism of VRD inherently relies on the accuracy of the underlying value network to guide the trajectory alignment. Specifically, the quantile weight $\omega_\tau(z)$ in VRD-Q and the geometric reshaping function $\Phi_Q(s, z)$ in VRD-R are directly derived from the estimated Q-values. If the Q-network suffers from severe OOD overestimation—a notoriously common issue in offline RL—the VRD framework might inadvertently align the generative flow toward falsely optimistic, low-density regions. Although our empirical results demonstrate robustness, future work could explore integrating more rigorous conservative value estimation methods (e.g., CQL-style penalties or uncertainty-aware ensembles) directly into the value-rectification functional to mitigate this risk.

\paragraph{2. Restriction to Continuous Action Spaces.} 
The theoretical foundation of VRD (as proven in Theorem \ref{thm:ood_avg} and \ref{thm:optimalsupport}) and the Flow Matching parameterization are built strictly upon continuous vector fields and continuous topological mappings. Consequently, the current framework is restricted to continuous control tasks. Extending this value-guided flow rectification paradigm to discrete action spaces (e.g., Atari games or discrete routing problems) is non-trivial. Future research could explore incorporating Categorical Flow Matching or Discrete Diffusion processes, adapting the latent distribution alignment functional to discrete categorical probabilities.

\paragraph{3. Potential Diversity Collapse in VRD-R for Multi-Modal Experts.} 
By design, the geometric rectification in VRD-R utilizes a centripetal gravitational field, as Eq.(\ref{eq:vrdr_reshaping}), that maps all high-value behaviors towards the origin of the latent space ($\|z\| \approx 0$). While this strictly isolates optimal modes from sub-optimal noise, it may lead to an unintended \textit{diversity collapse} if the underlying optimal behavior itself is inherently multi-modal (i.e., there are multiple distinct but equally optimal strategies). By forcing all optimal trajectories into a single highly compact hypersphere, VRD-R might destroy the topological diversity of expert modes. VRD-Q avoids this by employing an \textit{in-place} statistical filtering, which explains why VRD-Q generally exhibits broader stability across diverse environments. Future iterations of VRD-R could explore multi-centric gravitational fields (e.g., using Gaussian Mixture priors) to preserve expert multi-modality.

\newpage
\section*{NeurIPS Paper Checklist}

\begin{enumerate}

\item {\bf Claims}
    \item[] Question: Do the main claims made in the abstract and introduction accurately reflect the paper's contributions and scope?
    \item[] Answer: \answerYes{} 
    \item[] Justification: The "abstract", "introduction" and "preliminaries and motivation" sections clearly state the problem of latent fusion, and the methodologies to solve this effectively. Theoretical and experimental evidences are provided to support our claims.
    \item[] Guidelines:
    \begin{itemize}
        \item The answer \answerNA{} means that the abstract and introduction do not include the claims made in the paper.
        \item The abstract and/or introduction should clearly state the claims made, including the contributions made in the paper and important assumptions and limitations. A \answerNo{} or \answerNA{} answer to this question will not be perceived well by the reviewers. 
        \item The claims made should match theoretical and experimental results, and reflect how much the results can be expected to generalize to other settings. 
        \item It is fine to include aspirational goals as motivation as long as it is clear that these goals are not attained by the paper. 
    \end{itemize}

\item {\bf Limitations}
    \item[] Question: Does the paper discuss the limitations of the work performed by the authors?
    \item[] Answer: \answerYes{} 
    \item[] Justification: The paper includes a dedicated "Limitations" paragraph in the Conclusion section and Appendix.
    \item[] Guidelines:
    \begin{itemize}
        \item The answer \answerNA{} means that the paper has no limitation while the answer \answerNo{} means that the paper has limitations, but those are not discussed in the paper. 
        \item The authors are encouraged to create a separate ``Limitations'' section in their paper.
        \item The paper should point out any strong assumptions and how robust the results are to violations of these assumptions (e.g., independence assumptions, noiseless settings, model well-specification, asymptotic approximations only holding locally). The authors should reflect on how these assumptions might be violated in practice and what the implications would be.
        \item The authors should reflect on the scope of the claims made, e.g., if the approach was only tested on a few datasets or with a few runs. In general, empirical results often depend on implicit assumptions, which should be articulated.
        \item The authors should reflect on the factors that influence the performance of the approach. For example, a facial recognition algorithm may perform poorly when image resolution is low or images are taken in low lighting. Or a speech-to-text system might not be used reliably to provide closed captions for online lectures because it fails to handle technical jargon.
        \item The authors should discuss the computational efficiency of the proposed algorithms and how they scale with dataset size.
        \item If applicable, the authors should discuss possible limitations of their approach to address problems of privacy and fairness.
        \item While the authors might fear that complete honesty about limitations might be used by reviewers as grounds for rejection, a worse outcome might be that reviewers discover limitations that aren't acknowledged in the paper. The authors should use their best judgment and recognize that individual actions in favor of transparency play an important role in developing norms that preserve the integrity of the community. Reviewers will be specifically instructed to not penalize honesty concerning limitations.
    \end{itemize}

\item {\bf Theory assumptions and proofs}
    \item[] Question: For each theoretical result, does the paper provide the full set of assumptions and a complete (and correct) proof?
    \item[] Answer: \answerYes{} 
    \item[] Justification: The main paper explicitly states the required assumptions.
    \item[] Guidelines:
    \begin{itemize}
        \item The answer \answerNA{} means that the paper does not include theoretical results. 
        \item All the theorems, formulas, and proofs in the paper should be numbered and cross-referenced.
        \item All assumptions should be clearly stated or referenced in the statement of any theorems.
        \item The proofs can either appear in the main paper or the supplemental material, but if they appear in the supplemental material, the authors are encouraged to provide a short proof sketch to provide intuition. 
        \item Inversely, any informal proof provided in the core of the paper should be complemented by formal proofs provided in appendix or supplemental material.
        \item Theorems and Lemmas that the proof relies upon should be properly referenced. 
    \end{itemize}

    \item {\bf Experimental result reproducibility}
    \item[] Question: Does the paper fully disclose all the information needed to reproduce the main experimental results of the paper to the extent that it affects the main claims and/or conclusions of the paper (regardless of whether the code and data are provided or not)?
    \item[] Answer: \answerYes{} 
    \item[] Justification: Detailed experimental settings are referenced in the main text. These are extensively expanded upon in Appendices C, providing the necessary hyperparameters, optimizers, and architectural configurations.
    \item[] Guidelines:
    \begin{itemize}
        \item The answer \answerNA{} means that the paper does not include experiments.
        \item If the paper includes experiments, a \answerNo{} answer to this question will not be perceived well by the reviewers: Making the paper reproducible is important, regardless of whether the code and data are provided or not.
        \item If the contribution is a dataset and\slash or model, the authors should describe the steps taken to make their results reproducible or verifiable. 
        \item Depending on the contribution, reproducibility can be accomplished in various ways. For example, if the contribution is a novel architecture, describing the architecture fully might suffice, or if the contribution is a specific model and empirical evaluation, it may be necessary to either make it possible for others to replicate the model with the same dataset, or provide access to the model. In general. releasing code and data is often one good way to accomplish this, but reproducibility can also be provided via detailed instructions for how to replicate the results, access to a hosted model (e.g., in the case of a large language model), releasing of a model checkpoint, or other means that are appropriate to the research performed.
        \item While NeurIPS does not require releasing code, the conference does require all submissions to provide some reasonable avenue for reproducibility, which may depend on the nature of the contribution. For example
        \begin{enumerate}
            \item If the contribution is primarily a new algorithm, the paper should make it clear how to reproduce that algorithm.
            \item If the contribution is primarily a new model architecture, the paper should describe the architecture clearly and fully.
            \item If the contribution is a new model (e.g., a large language model), then there should either be a way to access this model for reproducing the results or a way to reproduce the model (e.g., with an open-source dataset or instructions for how to construct the dataset).
            \item We recognize that reproducibility may be tricky in some cases, in which case authors are welcome to describe the particular way they provide for reproducibility. In the case of closed-source models, it may be that access to the model is limited in some way (e.g., to registered users), but it should be possible for other researchers to have some path to reproducing or verifying the results.
        \end{enumerate}
    \end{itemize}

\item {\bf Open access to data and code}
    \item[] Question: Does the paper provide open access to the data and code, with sufficient instructions to faithfully reproduce the main experimental results, as described in supplemental material?
    \item[] Answer: \answerYes{} 
    \item[] Justification: The code were supplied to reviewers as an anonymous zip archive, and full
public links will be added in the camera-ready version.
    \item[] Guidelines:
    \begin{itemize}
        \item The answer \answerNA{} means that paper does not include experiments requiring code.
        \item Please see the NeurIPS code and data submission guidelines (\url{https://neurips.cc/public/guides/CodeSubmissionPolicy}) for more details.
        \item While we encourage the release of code and data, we understand that this might not be possible, so \answerNo{} is an acceptable answer. Papers cannot be rejected simply for not including code, unless this is central to the contribution (e.g., for a new open-source benchmark).
        \item The instructions should contain the exact command and environment needed to run to reproduce the results. See the NeurIPS code and data submission guidelines (\url{https://neurips.cc/public/guides/CodeSubmissionPolicy}) for more details.
        \item The authors should provide instructions on data access and preparation, including how to access the raw data, preprocessed data, intermediate data, and generated data, etc.
        \item The authors should provide scripts to reproduce all experimental results for the new proposed method and baselines. If only a subset of experiments are reproducible, they should state which ones are omitted from the script and why.
        \item At submission time, to preserve anonymity, the authors should release anonymized versions (if applicable).
        \item Providing as much information as possible in supplemental material (appended to the paper) is recommended, but including URLs to data and code is permitted.
    \end{itemize}

\item {\bf Experimental setting/details}
    \item[] Question: Does the paper specify all the training and test details (e.g., data splits, hyperparameters, how they were chosen, type of optimizer) necessary to understand the results?
    \item[] Answer: \answerYes{} 
    \item[] Justification: The paper specifies baseline architectures, optimization protocols, hardware details, and hyperparameters comprehensively in Experiment section and Appendix C.
    \item[] Guidelines:
    \begin{itemize}
        \item The answer \answerNA{} means that the paper does not include experiments.
        \item The experimental setting should be presented in the core of the paper to a level of detail that is necessary to appreciate the results and make sense of them.
        \item The full details can be provided either with the code, in appendix, or as supplemental material.
    \end{itemize}

\item {\bf Experiment statistical significance}
    \item[] Question: Does the paper report error bars suitably and correctly defined or other appropriate information about the statistical significance of the experiments?
    \item[] Answer: \answerYes{} 
    \item[] Justification: Our paper reports appropriate information about statistical significance of the experiments.
    \item[] Guidelines:
    \begin{itemize}
        \item The answer \answerNA{} means that the paper does not include experiments.
        \item The authors should answer \answerYes{} if the results are accompanied by error bars, confidence intervals, or statistical significance tests, at least for the experiments that support the main claims of the paper.
        \item The factors of variability that the error bars are capturing should be clearly stated (for example, train/test split, initialization, random drawing of some parameter, or overall run with given experimental conditions).
        \item The method for calculating the error bars should be explained (closed form formula, call to a library function, bootstrap, etc.)
        \item The assumptions made should be given (e.g., Normally distributed errors).
        \item It should be clear whether the error bar is the standard deviation or the standard error of the mean.
        \item It is OK to report 1-sigma error bars, but one should state it. The authors should preferably report a 2-sigma error bar than state that they have a 96\% CI, if the hypothesis of Normality of errors is not verified.
        \item For asymmetric distributions, the authors should be careful not to show in tables or figures symmetric error bars that would yield results that are out of range (e.g., negative error rates).
        \item If error bars are reported in tables or plots, the authors should explain in the text how they were calculated and reference the corresponding figures or tables in the text.
    \end{itemize}

\item {\bf Experiments compute resources}
    \item[] Question: For each experiment, does the paper provide sufficient information on the computer resources (type of compute workers, memory, time of execution) needed to reproduce the experiments?
    \item[] Answer: \answerYes{} 
    \item[] Justification: Appendix C details the specific compute hardware used (NVIDIA RTX-3090 GPUs).
    \item[] Guidelines:
    \begin{itemize}
        \item The answer \answerNA{} means that the paper does not include experiments.
        \item The paper should indicate the type of compute workers CPU or GPU, internal cluster, or cloud provider, including relevant memory and storage.
        \item The paper should provide the amount of compute required for each of the individual experimental runs as well as estimate the total compute. 
        \item The paper should disclose whether the full research project required more compute than the experiments reported in the paper (e.g., preliminary or failed experiments that didn't make it into the paper). 
    \end{itemize}
    
\item {\bf Code of ethics}
    \item[] Question: Does the research conducted in the paper conform, in every respect, with the NeurIPS Code of Ethics \url{https://neurips.cc/public/EthicsGuidelines}?
    \item[] Answer: \answerYes{} 
    \item[] Justification: The research involves fundamental algorithmic developments evaluated on standard, publicly available datasets (e.g., D4RL, Minari, OGBench) and does not violate any ethical guidelines.
    \item[] Guidelines:
    \begin{itemize}
        \item The answer \answerNA{} means that the authors have not reviewed the NeurIPS Code of Ethics.
        \item If the authors answer \answerNo, they should explain the special circumstances that require a deviation from the Code of Ethics.
        \item The authors should make sure to preserve anonymity (e.g., if there is a special consideration due to laws or regulations in their jurisdiction).
    \end{itemize}

\item {\bf Broader impacts}
    \item[] Question: Does the paper discuss both potential positive societal impacts and negative societal impacts of the work performed?
    \item[] Answer: \answerNA{} 
    \item[] Justification: This paper focuses on foundational theoretical improvements to flow-based offline RL algorithms. It does not present direct, foreseeable negative societal impacts or obvious ties to malicious applications.
    \item[] Guidelines:
    \begin{itemize}
        \item The answer \answerNA{} means that there is no societal impact of the work performed.
        \item If the authors answer \answerNA{} or \answerNo, they should explain why their work has no societal impact or why the paper does not address societal impact.
        \item Examples of negative societal impacts include potential malicious or unintended uses (e.g., disinformation, generating fake profiles, surveillance), fairness considerations (e.g., deployment of technologies that could make decisions that unfairly impact specific groups), privacy considerations, and security considerations.
        \item The conference expects that many papers will be foundational research and not tied to particular applications, let alone deployments. However, if there is a direct path to any negative applications, the authors should point it out. For example, it is legitimate to point out that an improvement in the quality of generative models could be used to generate Deepfakes for disinformation. On the other hand, it is not needed to point out that a generic algorithm for optimizing neural networks could enable people to train models that generate Deepfakes faster.
        \item The authors should consider possible harms that could arise when the technology is being used as intended and functioning correctly, harms that could arise when the technology is being used as intended but gives incorrect results, and harms following from (intentional or unintentional) misuse of the technology.
        \item If there are negative societal impacts, the authors could also discuss possible mitigation strategies (e.g., gated release of models, providing defenses in addition to attacks, mechanisms for monitoring misuse, mechanisms to monitor how a system learns from feedback over time, improving the efficiency and accessibility of ML).
    \end{itemize}
    
\item {\bf Safeguards}
    \item[] Question: Does the paper describe safeguards that have been put in place for responsible release of data or models that have a high risk for misuse (e.g., pre-trained language models, image generators, or scraped datasets)?
    \item[] Answer: \answerNA{} 
    \item[] Justification: The models and evaluation benchmarks used do not fall under the category of high-risk for dual-use or misuse. No scraped or sensitive data is released.
    \item[] Guidelines:
    \begin{itemize}
        \item The answer \answerNA{} means that the paper poses no such risks.
        \item Released models that have a high risk for misuse or dual-use should be released with necessary safeguards to allow for controlled use of the model, for example by requiring that users adhere to usage guidelines or restrictions to access the model or implementing safety filters. 
        \item Datasets that have been scraped from the Internet could pose safety risks. The authors should describe how they avoided releasing unsafe images.
        \item We recognize that providing effective safeguards is challenging, and many papers do not require this, but we encourage authors to take this into account and make a best faith effort.
    \end{itemize}

\item {\bf Licenses for existing assets}
    \item[] Question: Are the creators or original owners of assets (e.g., code, data, models), used in the paper, properly credited and are the license and terms of use explicitly mentioned and properly respected?
    \item[] Answer: \answerYes{} 
    \item[] Justification: Existing datasets, platforms, and baseline models are properly credited with citations in the text and the references section.
    \item[] Guidelines:
    \begin{itemize}
        \item The answer \answerNA{} means that the paper does not use existing assets.
        \item The authors should cite the original paper that produced the code package or dataset.
        \item The authors should state which version of the asset is used and, if possible, include a URL.
        \item The name of the license (e.g., CC-BY 4.0) should be included for each asset.
        \item For scraped data from a particular source (e.g., website), the copyright and terms of service of that source should be provided.
        \item If assets are released, the license, copyright information, and terms of use in the package should be provided. For popular datasets, \url{paperswithcode.com/datasets} has curated licenses for some datasets. Their licensing guide can help determine the license of a dataset.
        \item For existing datasets that are re-packaged, both the original license and the license of the derived asset (if it has changed) should be provided.
        \item If this information is not available online, the authors are encouraged to reach out to the asset's creators.
    \end{itemize}

\item {\bf New assets}
    \item[] Question: Are new assets introduced in the paper well documented and is the documentation provided alongside the assets?
    \item[] Answer: \answerNA{} 
    \item[] Justification: The paper does not introduce or release new datasets or proprietary assets.
    \item[] Guidelines:
    \begin{itemize}
        \item The answer \answerNA{} means that the paper does not release new assets.
        \item Researchers should communicate the details of the dataset\slash code\slash model as part of their submissions via structured templates. This includes details about training, license, limitations, etc. 
        \item The paper should discuss whether and how consent was obtained from people whose asset is used.
        \item At submission time, remember to anonymize your assets (if applicable). You can either create an anonymized URL or include an anonymized zip file.
    \end{itemize}

\item {\bf Crowdsourcing and research with human subjects}
    \item[] Question: For crowdsourcing experiments and research with human subjects, does the paper include the full text of instructions given to participants and screenshots, if applicable, as well as details about compensation (if any)? 
    \item[] Answer: \answerNA{} 
    \item[] Justification: This research does not involve any human subjects or crowdsourcing.
    \item[] Guidelines:
    \begin{itemize}
        \item The answer \answerNA{} means that the paper does not involve crowdsourcing nor research with human subjects.
        \item Including this information in the supplemental material is fine, but if the main contribution of the paper involves human subjects, then as much detail as possible should be included in the main paper. 
        \item According to the NeurIPS Code of Ethics, workers involved in data collection, curation, or other labor should be paid at least the minimum wage in the country of the data collector. 
    \end{itemize}

\item {\bf Institutional review board (IRB) approvals or equivalent for research with human subjects}
    \item[] Question: Does the paper describe potential risks incurred by study participants, whether such risks were disclosed to the subjects, and whether Institutional Review Board (IRB) approvals (or an equivalent approval/review based on the requirements of your country or institution) were obtained?
    \item[] Answer: \answerNA{} 
    \item[] Justification: There is no research involving human subjects, making IRB approval unnecessary.
    \item[] Guidelines:
    \begin{itemize}
        \item The answer \answerNA{} means that the paper does not involve crowdsourcing nor research with human subjects.
        \item Depending on the country in which research is conducted, IRB approval (or equivalent) may be required for any human subjects research. If you obtained IRB approval, you should clearly state this in the paper. 
        \item We recognize that the procedures for this may vary significantly between institutions and locations, and we expect authors to adhere to the NeurIPS Code of Ethics and the guidelines for their institution. 
        \item For initial submissions, do not include any information that would break anonymity (if applicable), such as the institution conducting the review.
    \end{itemize}

\item {\bf Declaration of LLM usage}
    \item[] Question: Does the paper describe the usage of LLMs if it is an important, original, or non-standard component of the core methods in this research? Note that if the LLM is used only for writing, editing, or formatting purposes and does \emph{not} impact the core methodology, scientific rigor, or originality of the research, declaration is not required.
    \item[] Answer: \answerNA{} 
    \item[] Justification: Large language models are not part of the paper’s core methods.
    \item[] Guidelines:
    \begin{itemize}
        \item The answer \answerNA{} means that the core method development in this research does not involve LLMs as any important, original, or non-standard components.
        \item Please refer to our LLM policy in the NeurIPS handbook for what should or should not be described.
    \end{itemize}

\end{enumerate}

\end{document}